    \tikzstyle{every picture}+=[remember picture]
    \setlist{nosep}
\newtheorem{theorem}{Theorem}[section]
\newtheorem{proposition}[theorem]{Proposition}
\newtheorem{lemma}[theorem]{Lemma}
\newtheorem{corollary}[theorem]{Corollary}
\theoremstyle{definition}
\newtheorem{definition}[theorem]{Definition}
\theoremstyle{remark}
\newtheorem{remark}[theorem]{Remark}
\DeclareMathOperator*{\argmin}{arg\,min}
\DeclareMathOperator*{\cut}{cut}
\DeclareMathOperator{\Lip}{Lip}
\DeclareMathOperator{\Len}{Len}
\renewcommand\d[1]{\mathop{}\!\mathit{d}\nobreak\hspace{-0.1em}#1}
\DeclareMathOperator{\morph}{\scalebox{0.7}{\ensuremath\square}}
\providecommand{\keywords}[1]
{
  \small	
  \textbf{\textit{Keywords---}} #1
}
\newcommand*{\bigcdot}{}% Check if undefined
\DeclareRobustCommand*{\bigcdot}{%
  \mathbin{\mathpalette\bigcdot@{}}%
}
\newcommand*{\bigcdot@scalefactor}{.7}
\newcommand*{\bigcdot@widthfactor}{1.15}
\newcommand*{\bigcdot@}[2]{%
  % #1: math style
  % #2: unused
  \sbox0{$#1\vcenter{}$}% math axis
  \sbox2{$#1\cdot\m@th$}%
  \hbox to \bigcdot@widthfactor\wd2{%
    \hfil
    \raise\ht0\hbox{%
      \scalebox{\bigcdot@scalefactor}{%
        \lower\ht0\hbox{$#1\bullet\m@th$}%
      }%
    }%
    \hfil
  }%
}
\newcommand{\tikznode}[3][inner sep=0pt]{\tikz[remember picture,baseline=(#2.base)]{\node(#2)[#1]{$#3$};}}
\newcommand{\footremember}[2]{%
    \footnote{#2}
    \newcounter{#1}
    \setcounter{#1}{\value{footnote}}%
}
\newcommand{\footrecall}[1]{%
    \footnotemark[\value{#1}]%
} 
\newcommand{\R}{\mathbb{R}}
\title{\vspace{-2em}PDE-based Group Equivariant Convolutional Neural Networks}
\author{Bart M.N. Smets\footremember{casa}{CASA, Department of Mathematics and Computer Science, Eindhoven University of Technology, email: \href{mailto:b.m.n.smets@tue.nl}{b.m.n.smets@tue.nl}} \and Jim Portegies\footrecall{casa} \and Erik J. Bekkers\footnote{Machine Learning Lab, Informatics Institute, University of Amsterdam} \and Remco Duits\footrecall{casa}}
\date{\today}
\begin{document}

\captionsetup{width=0.7\linewidth}

\maketitle
\vspace{-1em}

% Major changes:
% v2: mention KerCNNs
% v2: 5.1 Convection: corrected (25) and added proof
% v2: added 5.4: Generalization of G-CNNs and mentioned in intro
% v2: 7 Proof of concept: clarified that we used autodiff and SGD

\begin{abstract}
We present a PDE-based framework that generalizes Group equivariant Convolutional Neural Networks (G-CNNs). In this framework, a network layer is seen as a set of PDE-solvers where geometrically meaningful PDE-coefficients become the layer's trainable weights. Formulating our PDEs on homogeneous spaces allows these networks to be designed with built-in symmetries such as rotation in addition to the standard translation equivariance of CNNs.

Having all the desired symmetries included in the design obviates the need to include them by means of costly techniques such as data augmentation. We will discuss our PDE-based G-CNNs (PDE-G-CNNs) in a general homogeneous space setting while also going into the specifics of our primary case of interest: roto-translation equivariance.
    
We solve the PDE of interest by a combination of linear group convolutions and non-linear morphological group convolutions with analytic kernel approximations that we underpin with formal theorems. Our kernel approximations allow for fast GPU-implementation of the PDE-solvers, we release our implementation with this article in the form of the LieTorch extension to PyTorch, available at \url{https://gitlab.com/bsmetsjr/lietorch}. Just like for linear convolution a morphological convolution is specified by a kernel that we train in our PDE-G-CNNs. 
In PDE-G-CNNs we do not use non-linearities such as max/min-pooling and ReLUs as they are already subsumed by morphological convolutions.
    
We present a set of experiments to demonstrate the strength of the proposed PDE-G-CNNs in increasing the performance of deep learning based imaging applications with far fewer parameters than traditional CNNs.

\keywords{PDE \and Group Equivariance \and Deep Learning \and Morphological Scale-space}

\end{abstract}

%\tableofcontents

%%%%
%%%% Introduction
%%%%
\section{Introduction}

In this work we introduce \emph{PDE-based Group CNNs}. The key idea is to replace the typical trifecta of convolution, pooling and ReLUs found in CNNs with a Hamilton-Jacobi type evolution PDE, or more accurately a solver for a Hamilton-Jacobi type PDE. This substitution is illustrated in Fig.~\ref{fig:traditional} where we retain (channel-wise) affine combinations as the means of composing feature maps.

The PDE we propose to use in this setting comes from the geometric image analysis world \cite{welk2019pde,Fadili2015,Peyre2010,Kimmel2015,Burger2016,Cremers2016,Weickert2016,SapiroBook,Sethian,WeickertEE,morel1995variational,duits2007scale}. It was chosen based on the fact that it exhibits similar behaviour on images as traditional CNNs do through convolution, pooling and ReLUs. Additionally it can be formulated on Lie groups to yield equivariant processing, which makes our PDE approach compatible with Group CNNs \cite{cohen2016group,dieleman2016exploiting,dieleman2015rotation,winkels20183d,worrall2018cubenet,bekkers2017template,oyallon2015deep,bekkers2018roto,weiler2018learning,cohen2019general,worrall2017harmonic,kondor2018generalization,esteves2018learning}. Finally an approximate solver for our PDE can be efficiently implemented on modern highly parallel hardware, making the choice a practical one as well.

Our solver uses the operator splitting method to solve the PDE in a sequence of steps, each step corresponding to a term of the PDE. The sequence of steps for our PDE is illustrated in Fig.~\ref{fig:cdde-layer}. The morphological convolutions that are used to solve for the non-linear terms of the PDE are a key aspect of our design. Normally, morphological convolutions are considered on $\mathbb{R}^d$ \cite{akian1994bellman,burgeth2004morphological}, but when extended to Lie groups such as $SE(d)$ they have many benefits in applications (e.g. crossing-preserving flow \cite{duits2013morphological} or tracking \cite{bekkers2015pde,haije2014sharpening}). Using morphological convolutions allows our network to have trainable non-linearities instead of the fixed non-linearities in (G-)CNNs.

The theoretical contribution of this paper consists of providing good analytical approximations to the kernels that go in the linear and morphological convolutions that solve our PDE. 
On $\mathbb{R}^n$ the formulation of these kernels is reasonably straightforward, but in order to achieve group equivariance we need to generalize them on homogeneous spaces.

Instead of training kernel weights our goal is training the coefficients of the PDE. The coefficients of our PDE have the benefit of yielding geometrically meaningful parameters from a image analysis point of view. Additionally we will need (much) less PDE parameters than kernel weights to achieve a given level of performance in image segmentation and classification tasks; arguably the greatest benefit of our approach.

\begin{figure}[ht]
    \centering
    \includegraphics[width=0.5\linewidth]{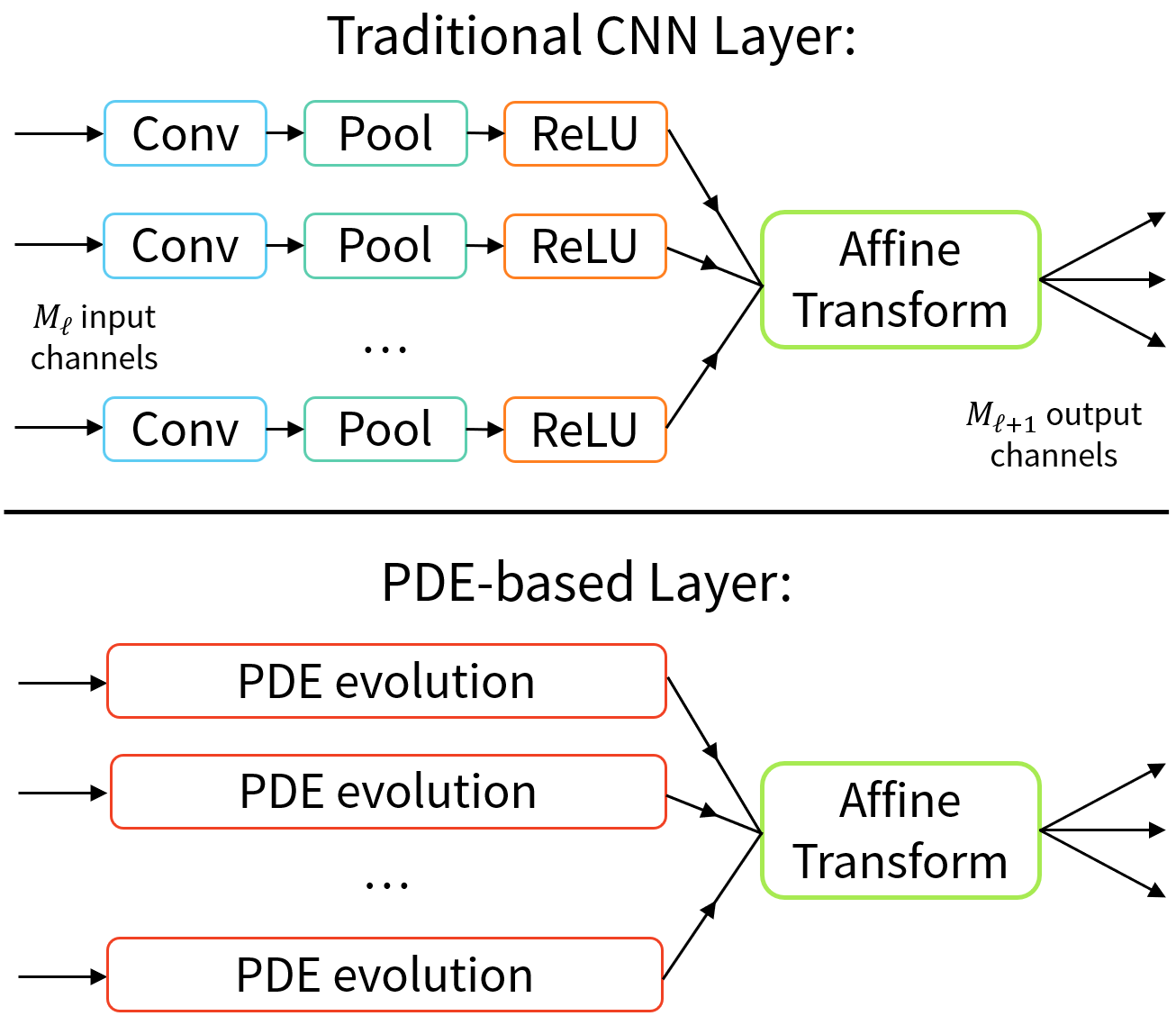}
    \caption{In a PDE-based CNN we replace the traditional convolution, pooling and ReLU operations by a PDE solver. The inputs of a given layer serve as initial conditions for a set of evolution PDEs, the outputs consist of affine combinations of the solutions of those PDEs at a fixed point in time. The parameters of the PDE become the trainable weights (alongside the affine parameters) over which we optimize.}
    \label{fig:traditional}
\end{figure}

\begin{figure}[ht]
    \centering
    \includegraphics[width=0.7\linewidth]{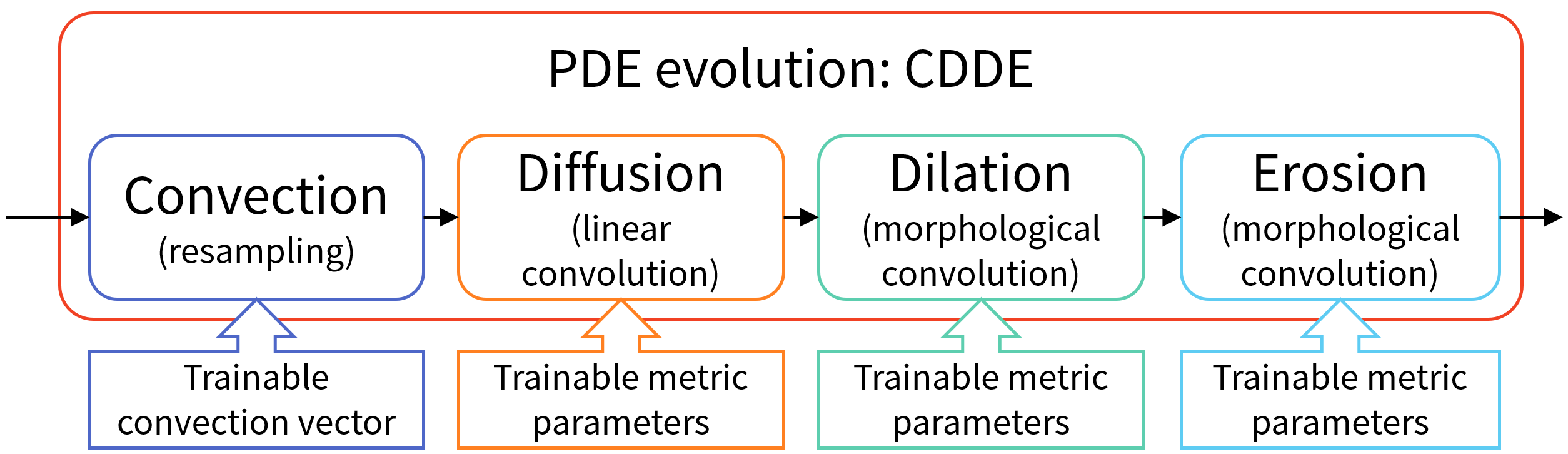}
    \caption{Our Hamilton-Jacobi type PDE of choice contains a convection, diffusion, dilation and erosion term (CDDE for short). Through operator splitting we solve for these terms separately by using resampling (for convection), linear convolution (for diffusion) and morphological convolution (for dilation and erosion). }
    \label{fig:cdde-layer}
\end{figure}

This paper is a substantially extended journal version of \cite{duits2021equivariant} presented at the SSVM 2021 conference.

\subsection{Structure of the Article}

The structure of the article is as follows. We first place our work in its mathematical and deep learning context in Section~\ref{sec:context}. Then we introduce the needed theoretical preliminaries from Lie group theory in Section \ref{sec:prelims} where we also define the space of positions and orientations $\mathbb{M}_d$ that will allow us to construct roto-translation equivariant networks. 
%We assume that the reader has basic background in differential geometry.

In Section \ref{sec:architecture} we give the overall architecture of a PDE-G-CNN and the ancillary tools that are needed to support a PDE-G-CNN. We propose an equivariant PDE that models commonly used operations in CNNs.

In Section \ref{sec:pde_solver} we detail how our PDE of interest can be solved using a process called operator splitting.
Additionally, we give tangible approximations to the fundamental solutions (kernels) of the PDEs that are both easy to compute and sufficiently accurate for practical applications. We use them extensively in the PDE-G-CNNs GPU-implementations in PyTorch that can be downloaded from the GIT-repository: \\   {\color{blue}\url{https://gitlab.com/bsmetsjr/lietorch}}.%%

Section \ref{sec:G-CNN-generalization} is dedicated to showing how common CNN operations such as convolution, max-pooling, ReLUs and skip connections can be interpreted in terms of PDEs.

We end our paper with some experiments showing the strength of PDE-G-CNNs in Section \ref{sec:experiments}, and concluding remarks in Section \ref{sec:concluding}.

The framework we propose covers transformations and CNNs on homogeneous spaces in general and as such we develop the theory in an abstract fashion. To maintain a bridge with practical applications we give details throughout the article on what form the abstractions take explicitly in the case of roto-translation equivariant networks acting on $\mathbb{M}_d$, specifically in 2D (i.e. $d=2$).

%%%%
%%%%
%%%%

\section{Context}
\label{sec:context}

As this article touches on disparate fields of study we use this section to discuss context and highlight some closely related work.

\subsection{Drawing Inspiration from PDE-based Image Analysis}

Since the Partial Differential Equations that we use are well-known in the context of geometric image analysis~\cite{welk2019pde,Fadili2015,Peyre2010,Kimmel2015,Burger2016,Cremers2016,Weickert2016,SapiroBook,Sethian,WeickertEE,morel1995variational}, the layers also get an interpretation in terms of classical image-processing operators. This allows intuition and techniques from geometric PDE-based image analysis to be carried over to neural networks. 

In geometric PDE-based image processing it can be beneficial to include mean curvature or other geometric flows \cite{citti2016sub,Chambolle2011,chambolle2019total,smets2021total} as regularization and our framework provides a natural way for such flows to be included into neural networks. In the PDE-layer from Fig.~\ref{fig:cdde-layer} we only mention diffusion as a means of regularization, but mean curvature flow could easily be integrated by replacing the diffusion sub-layer with a mean curvature flow sub-layer. This would require replacing the linear convolution for diffusion by a median filtering approximation of mean curvature flow \cite{welk2019pde}.

\subsection{The Need for Lifting Images}

In geometric image analysis it is often useful to \emph{lift} images from a 2D picture to a 3D orientation score as in Fig.~\ref{fig:lift_project} and do further processing on the orientation scores \cite{duits2007invertible}. A typical image processing task in which such a lift is beneficial is that of the segmentation of blood vessels in a medical image. Algorithms based on processing the 2D picture directly, usually fail around points where two blood vessels cross, but algorithms that lift the image to an orientation score manage to decouple the blood vessels with different orientations as is illustrated in the bottom row of Fig. \ref{fig:lift_project}.

To be able to endow image-processing neural networks with the added capabilities (such as decoupling orientations and guaranteeing equivariance) that result from lifting data to an extended domain, we develop our theory for the more general CNNs defined on \emph{homogeneous spaces}, rather than just the prevalent CNNs defined on Euclidean space. One can then choose which homogeneous space to use based on the needs of one's application (such as needing to decouple orientations). A homogeneous space is, given subgroup $H$ of a group $G$, the manifold of left cosets, denoted by $G/H$. In the above image-analysis example, the group $G$ would be the special Euclidean group $G = SE(d)$, the subgroup $H$ would be the stabilizer subgroup of a fixed reference axis, and the corresponding homogeneous space $G/H$ would be the space of positions and orientations $\mathbb{M}_d \equiv \mathbb{R}^d \times S^{d-1}$, which is the lowest dimensional homogeneous space able to decouple orientations. By considering convolutional neural networks on homogeneous spaces such as $\mathbb{M}_d$ these networks have access to the same benefits of decoupling structures with different orientations as was highly beneficial for geometric image processing~\cite{InvertibleOrientationScores3D,MashtakovRecent,Citti,DuitsAMS1,DuitsAMS2,ZhangDuits,Gauthier,Prandi,DuitsACHA,Barbieri,pinwheel,Felsberg2,SavadjievPNAS2012,duits2019fourier,SiddiqiX}.

\begin{remark}[Generality of the architecture]
    Although not considered here, for other Lie groups applications (e.g. frequency scores \cite{duits2013evolution,boscain2021bio}, velocity scores, scale-orientation scores \cite{baspinar2018geometric}) the same structure applies, therefore we keep our theory in the general setting of homogeneous spaces $G/H$.
    This generality was also important in non-PDE based learning \cite{cohen2019general}, but also for PDE-based learning it is again beneficial.
\end{remark}

\begin{figure}[ht]
    \centering
    \includegraphics[width=0.7\linewidth]{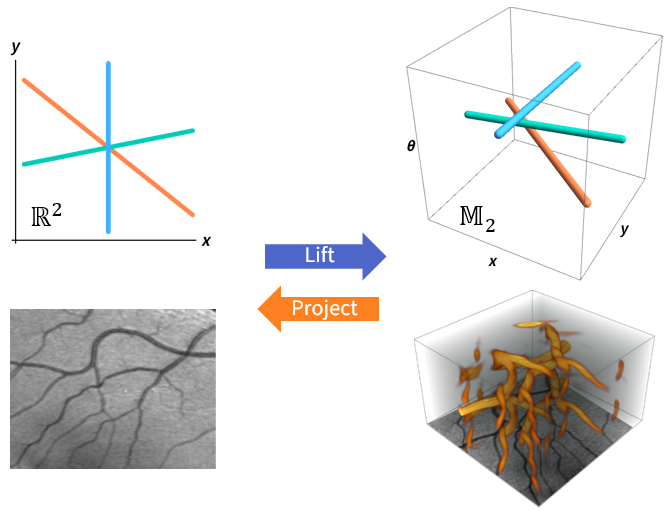}
    \caption[Illustrating the process of lifting and projecting]{Illustrating the process of lifting and projecting, in this case the advantage of lifting an image from $\mathbb{R}^2$ to the 2D space of positions and orientations $\mathbb{M}_2$ derives from the disentanglement of the lines at the crossings.}
    \label{fig:lift_project}
\end{figure}

\subsection{The Need for Equivariance}

We require the layers of our network to be \emph{equivariant}: a transformation of the input should lead to a corresponding transformation of the output, in other words: first transforming the input and then applying the network or first applying the network and then transforming the output should yield the same result.
A particular example, in which the output transformation is trivial (i.e. the identity transformation), is that of \emph{invariance}: in many classification tasks, such as the recognition of objects in pictures, an apple should still be recognized as an apple even if it is shifted or otherwise transformed in the picture as illustrated in Fig. \ref{fig:apples}. By guaranteeing equivariance of the network, the amount of data necessary or the need for data augmentation are reduced as the required symmetries are intrinsic to the network and need not be trained.

\begin{figure}[ht]
    \centering
    \includegraphics[width=0.7\linewidth]{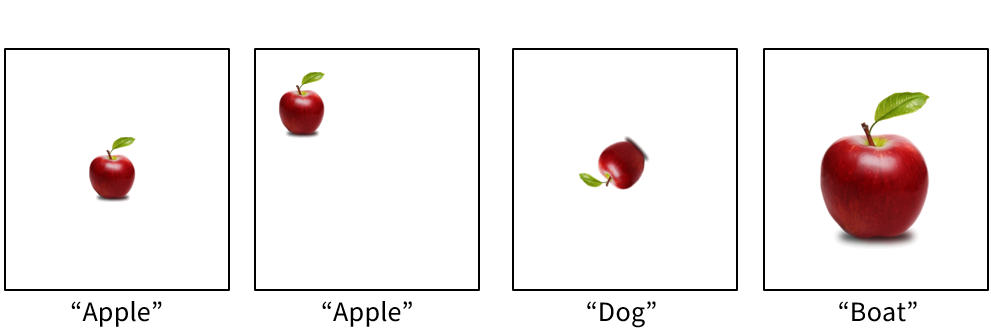}
    \caption[Equivariance]{Spatial CNNs, as used for image classification for example, are translation equivariant but not necessarily equivariant with respect to rotation, scaling and other transformations as the illustrative tags of the differently transformed apples images suggest. Building a G-CNN with the appropriately chosen group confers the network with all the equivariances appropriate for the chosen application. Our PDE-based approach is compatible with the group CNN approach \cite{cohen2019general} and so can confer the same symmetries.}
    \label{fig:apples}
\end{figure}

\begin{figure*}[ht]
    \centering
    \includegraphics[width=1.0\linewidth]{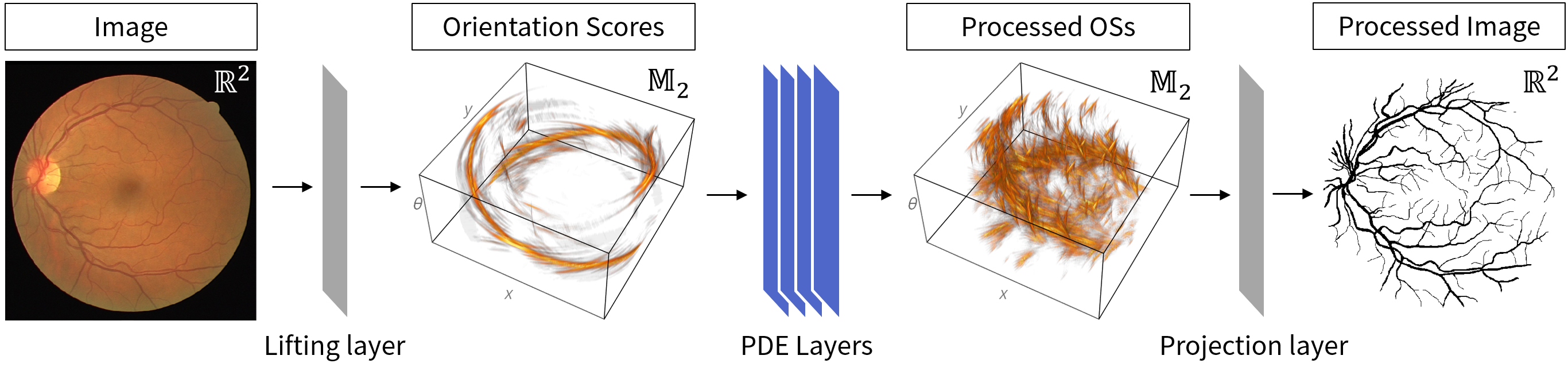}
    \caption[Illustrating the overall architecture of a PDE-G-CNN.]{Illustrating the overall architecture of a PDE-G-CNN (example: retinal vessel segmentation). An input image is lifted to a homogeneous space from which point on it can be fed through subsequent PDE layers (each PDE layer follow the structure of Fig.\ref{fig:cdde-layer}) that replace the convolution layers in conventional CNNs. Finally the result is projected back to the desired output space.}
    \label{fig:overall_architecture}
\end{figure*}

\subsection{Related Work}

\paragraph{G-CNNs}
After the introduction of G-CNNs by Cohen \& Welling \cite{cohen2016group} in the field of machine and deep learning, G-CNNs became popular. This resulted in many articles on showing the benefits of G-CNNs over classical spatial CNNs.
These works can be roughly categorised as
\begin{samepage}
\begin{itemize}
    \item discrete G-CNNs \cite{cohen2016group,dieleman2016exploiting,dieleman2015rotation,winkels20183d,worrall2018cubenet},
    \item regular continuous G-CNNs \cite{bekkers2015pde,oyallon2015deep,bekkers2018roto,weiler2018learning,bekkers2019bspline,finzi2020generalizing},
    \item and steerable continuous G-CNNs \cite{cohen2019general,worrall2017harmonic,kondor2018generalization,esteves2018learning,weiler2019general} that rely on Fourier transforms on homogeneous spaces \cite{chirikjian2000operational,duits2019fourier}.
\end{itemize}
\end{samepage}
Both regular and steerable G-CNNs naturally arise from linear mappings between functions on homogeneous spaces that are placed under equivariance constraints \cite{cohen2019general,kondor2018generalization,weiler2019general,bekkers2019bspline}. Regular G-CNNs explicitly extend the domain and lift feature maps to a larger homogeneous space of a group, whereas steerable CNNs extend the co-domain by generating fiber bundles in which a steerable feature vector is assigned to each position in the base domain. %In this work we adopt the domain extension viewpoint. 
Although steerable operators have clear benefits in terms of computational efficiency and accuracy \cite{franken2006efficient,reisert2008group}, working with steerable representations puts constraints on non-linear activations within the networks which limits the representation power of G-CNNs \cite{weiler2019general}. Like regular G-CNNs, the proposed PDE-G-CNNs do not suffer from this. 
In our proposed PDE-G-CNN framework it is essential that we adopt the domain-extension viewpoint, 
as this allows to naturally and transparently construct scale space PDEs via left-invariant vector fields \cite{duits2007scale}.
In general this viewpoint entails that the domain of images is extended from the space of positions only, to a higher dimensional homogeneous space, and originates from coherent state theory \cite{ali2000coherent}, orientation score theory \cite{duits2007invertible}, cortical perception models \cite{Citti}, G-CNNs 
\cite{cohen2016group,bekkers2018roto}, and rigid-body motion scattering \cite{sifre2014rigid}.

The proposed PDE-G-CNNs form a new, unique class of equivariant neural networks, and we show in Section \ref{sec:G-CNN-generalization} how regular continuous G-CNNs arise as a special case of our PDE-G-CNNs. 

\paragraph{Probabilistic-CNNs}

Our geometric PDEs relate to $\alpha$-stable L\'evy processes \cite{duits2019fourier} and cost-processes akin to \cite{akian1994bellman}, but then on $\mathbb{M}_{d}$ rather than $\R^d$. This relates to probabilistic equivariant numerical neural networks 
\cite{finzi2020probabilistic} that use %Zie SSVM
anisotropic convection-diffusions on $\R^d$. 

In contrast to these networks, the PDE-G-CNNs that we propose allow for \emph{simultaneous} spatial and angular diffusion on $\mathbb{M}_d$.
%and does not really directly employ the non-commutative structure of %$SE(2)$.
Furthermore we include nonlinear 
Bellman processes \cite{akian1994bellman} for max pooling over Riemannian balls.

% Steerable group convolutions are also used in geometric image processing \cite{franken2008enhancement,skibbe2017spherical}, however in this work we do not rely on steerable G-CNNs. Steerable versions of our PDE-G-CNNs can in principle be constructed and could be interesting for future investigation given that

\paragraph{KerCNNs}

An approach to introducing horizontal connectivity in CNNs that does not require a Lie group structure was proposed by Montobbio et al.\cite{montobbio2019metric,montobbio2019kercnns} in the form of \mbox{KerCNNs}. In this biologically inspired metric model a diffusion process is used to achieve intra-layer connectivity. 

While our approach does require a Lie group structure it is not restricted to diffusion and also includes dilation/erosion.
%Although not considered here PDE-G-CNNs could be useful in cortical modeling% (modeling long range interactions across cortical columns) 
%    in the same way that KerCNNs are.
%\end{remark}

\paragraph{Neural Networks and Differential Equations}

The connection between neural networks and differential equations became widely known in 2017, when Weinan E \cite{weinan2017proposal} explicitly explained the connection between neural networks and dynamical systems especially in the context of the ultradeep ResNet \cite{he2016deep}. This point of view was further expanded by Lu et al. \cite{lu2017beyond}, showing how many ultradeep neural networks can be viewed as discretizations of ordinary differential equations. The somewhat opposite point of view was taken by Chen et al. \cite{chen2018neural}, who introduced a new type of neural network which no longer has discrete layers, them being replaced by a field parameterized by a continuous time variable. 
Weinan~E also indicated a relationship between CNNs and PDEs, or rather with evolution equations involving a nonlocal operator. 
Implicitly, the connection between neural networks and differential equations was also explored by the early works of Chen et al. \cite{chen2015learning} who learn parameters in a reaction-diffusion equation. This connection between neural networks and PDEs was then made explicit and more extensive by Long et al. who made it possible to learn a much wider class of PDEs \cite{long2017pde} with their PDE-Net.
More recent work in PDE inspired neural networks includes \cite{ruthotto2020deep, shen2020pdo}.

Basing neural network computations on PDEs formulated on manifolds also makes the processing independent with respect to the choice of coordinates on the manifold in the fashion of Weiler et al. \cite{weiler2021coordinate}.

More recent work in this direction includes integrating equivariant partial differential operators in steerable CNNs \cite{jenner2021steerable}, drawing a strong analogy between deep learning and physics.

A useful aspect of the connection between neural networks and differential equations is the observation that the stability of the differential equation can give into the stability and generalization ability of the neural network \cite{haber2017stable}.
Moreover, there are intriguing analogies with numerical PDE-approximations and specific network architectures (e.g.~ResNets), as can be seen in the comprehensive overview article by Alt et al.\cite{alt2021translating}.

The main contribution of our work in the field of PDE-related neural networks, is that we implement and analyze geometric PDEs on homogeneous spaces, to obtain general \emph{ group equivariant PDE-based CNNs} whose implementations just require linear and morphological convolutions with new analytic approximations of scale space kernels. 

%%%
%%%
%%%
%%%
%%%
\section{Equivariance: Groups \& Homogeneous Spaces}
\label{sec:prelims}

We want to design the PDE-G-CNN, and its layers, in such a way that they are \emph{equivariant}. Equivariance essentially means that one can either transform the input and then feed it through the network, or first feed it through the network and then transform the output, and both give the same result. We will give a precise definition after introducing general notation.

\subsection{The General Case}

A layer in a neural network (or indeed the whole network) can be viewed as an operator from a space of real-valued functions defined on a space $X$ to a space of real-valued functions defined on a space $Y$. It may be helpful to think of these function spaces as spaces of images.

We assume that the possible transformations form a \emph{connected Lie group} $G$. Think for instance of a group of translations which shift the domain into different directions.
The Lie group being connected excludes transformations such as reflections, which we want to avoid for the sake of simplicity.
We further assume that the Lie group $G$ acts smoothly on both spaces $X$ and $Y$, which means that there are smooth maps 
 $\rho_X : G \times X \to X$ and $\rho_Y: G \times Y \to Y$ such that for all $g, h \in G$,
 \begin{align*}
 \rho_X(g h, x) = \rho_X(g ,\rho_X(h, x))
 \intertext{and}
 \rho_Y(g h, x) = \rho_Y(g ,\rho_Y(h, x)),
 \end{align*}
making $\rho_X$ and $\rho_Y$ group actions on their respective spaces.

Additionally we will assume that the group $G$ acts \emph{transitively} on the spaces, meaning that for any two elements of these spaces there exists a transformation in $G$ that maps one to the other. This has as the consequence that $X$ and $Y$ can be seen as \emph{homogeneous spaces} \cite{koda2009introduction}. In particular, this means that after selecting a reference element $x_0 \in X$ we can make the following isomorphism:
 \begin{equation}
     X \ \equiv\  G/\textrm{Stab}_G ( x_0 )
 \end{equation}
 using the mapping
 \begin{equation}
    \label{eq:isomap}
     x \mapsto \left\{ g \in G \,\vert\, \rho_X(g,x_0) = x \right\}
     ,
 \end{equation}
which is a bijection due to transitivity and the fact that
\begin{equation*}
\textrm{Stab}_G(x_0):=\left\{ g \in G \,\middle\vert\, \rho_X(g,x_0)=x_0 \right\}
\end{equation*}
is a closed subgroup of $G$.
Because of this we will represent a homogeneous space as the quotient $G/H$ for some choice of closed subgroup $H = \textrm{Stab}_G(x_0)$ since all homogeneous spaces are isomorphic to such a quotient by the above construction.

In this article we will restrict ourselves to those homogeneous spaces that correspond to those quotients $G/H$ where the subgroup $H$ is compact and connected. Restricting ourselves to compact and connected subgroups simplifies many constructions and still covers several interesting cases such as the rigid body motion groups $SE(d)$.

The elements of the quotient $G/H$ consist of subsets of $G$ which we will denote by the letter $p$, these subsets are know as left cosets of $H$ since every one of them consists of the set $p=gH$ for some $g \in G$, the left cosets are a partition of $G$ under the equivalence relation
\begin{equation*}
    g_1 \sim g_2 
    \iff g_1^{-1} g_2 \in H.
    \iff g_1 H = g_2 H.
\end{equation*}

Under this notation the group $G$ consists of the disjoint union
\begin{equation}
    G = \coprod_{p\in G/H} p.
\end{equation}
The left coset that is associated with the reference element $x_0 \in X$ is $H$ and for that reason we also alias it by $p_0 := H$ when we want to think of it as an atomic entity rather than a set in its own right.

We will denote quotient map from $G$ to $G/H$ with $\pi$:
\begin{equation}
    \label{eq:quotientmap}
    \pi(g) := g p_0 := gH.
\end{equation}

\begin{remark}[Principal homogeneous space]
    Observe that by choosing $H=\left\{ e \right\}$ we get $G/H \equiv G$, i.e. the Lie group is a homogeneous space of itself. This is called the principal homogeneous space. In that case the group action is equivalent to the group composition.
    The numerical experiments we perform in this paper are on the principal homogeneous space $\mathbb{R}^2 \times S^1$ of $SE(2)$.
\end{remark}

We will denote the group action/left-multiplication by an element $g \in G$ by the operator $L_g: G/H \to G/H$ given by
\begin{equation}
    L_g p := g p \quad \text{for all}\quad p \in G / H.
\end{equation}
In addition, we denote the left-regular representation of $G$ on functions $f$ defined on $G/H$ by $\mathcal{L}_g$ defined by
\begin{equation}
    \left(\mathcal{L}_g f\right)(p)
    :=
    f \left( g^{-1} p \right)
    .
\end{equation}

A neural network layer is itself an operator (from functions on $G/H_X$ to functions on $G/H_Y$), and we require the operator to be equivariant with respect to the actions on these function spaces.

\begin{definition}[Equivariance]
    \label{def:equivariance}
    Let $G$ be a Lie group with homogeneous spaces $G/H_X$ and $G/H_Y$. Let $\Phi$ be an operator from functions (of some function class) on $G/H_X$ to functions on $G/H_Y$, then we say that $\Phi$ is equivariant with respect to $G$ if for all functions $f$ (of that class) we have that:
    \begin{equation}
        \forall g \in G
        : 
        \boxed{
        \left(\Phi \ \circ\ \mathcal{L}_g\right) f = \left(\mathcal{L}_g \ \circ\ \Phi\right) f,
        }
        \label{eq:equivariance}
    \end{equation}
\end{definition}
or in words: the neural network commutes with transformations.

Most of the time we will have $H_X = H_Y$ in our proposed neural networks, only the initial lifting layer and the final projection layer will be between different homogeneous spaces, as we will see later on.

\subsection{Vector and Metric Tensor Fields}

The particular operators that we will base our framework on are vector and tensor fields, if these basic building blocks are equivariant then our processing will be equivariant. We explain what left invariance means for these objects next.

For $g \in G$ and $p \in G/ H$, let $T_p(G/H)$ be the tangent space at point $p$ then the pushforward 
\begin{equation*}
    \left( L_{g} \right)_* : T_p \left( G/H \right) \to T_{g p} \left( G/H \right)
\end{equation*}
of the group action $L_g$ is defined by the condition that for all smooth functions $f$ on $G/H$ and all $\bm{v} \in T_p(G/H)$ we have that
\begin{equation}
    \label{eq:pushforward}
    \left( \left(L_g\right)_* \bm{v} \right) f := \bm{v} \left( f \,\circ\, L_g  \right) 
    .
\end{equation}
%Meaning that applying a tangent vector in $p$ as a differential operator to a function is the same as pushing that vector forward to $g.p$ and then applying it to the function shifted by $g$ using the left regular representation $\mathcal{L}_g$.

\begin{remark}[Tangent vectors as differential operators]
    \label{remark:tangent_vectors}
    Other than the usual geometric interpretation of tangent vectors as being the velocity vectors $\dot{\gamma}(t)$ tangent to some differentiable curve $\gamma:\mathbb{R} \to G/H$ we will simultaneously use them as differential operators acting on functions as we did in \eqref{eq:pushforward}. This algebraic viewpoint defines the action of the tangent vector $\dot{\gamma}(t)$ on a differentiable function $f$ as
    \begin{equation*}
        \dot{\gamma}(t) f := \frac{\partial}{\partial s} f \left( \gamma(s) \right)
        \big\vert_{s=t}
        .
    \end{equation*}
    In the flat setting of $G=\left(\mathbb{R}^d,+\right)$, where the tangent spaces are isomorphic to the base manifold $\mathbb{R}^d$, when we have a tangent vector $\bm{c} \in \mathbb{R}^d$ its application to a function is the familiar directional derivative:
    \begin{equation*}
        \bm{c}f = \bm{c} \cdot \nabla f = \d f (\bm{c}).
    \end{equation*}
    See \cite[\S 2.1.1]{lee2009manifolds} for details on this double interpretation.
\end{remark}

Vector fields that have the special property that the push forward $(L_g)_*$ maps them to themselves in the sense that
\begin{equation}
    \label{eq:left_invariant_vector_field_2}
    \forall g \in G, \forall p \in G/H
    :
    \bm{v} \left( p \right) f
    =
    \bm{v} \left( g p \right) \left[ \mathcal{L}_g f \right]
    ,
\end{equation}
for all differentiable functions $f$ and where $\bm{v}:p \mapsto T_p \left( G/H \right)$ is a vector field, are referred to as $G$-invariant.

\begin{definition}[$G$-invariant vector field on a homogeneous space]
    A vector field $\bm{v}$ on $G/H$ is invariant with respect to $G$ if it satisfies
      \begin{equation}
        \label{eq:left_invariant_vector_field}
        \forall g \in G,\, \forall p \in G/H
        :
        \bm{v} \left( g p \right) = \left( L_g \right)_* \bm{v} \left( p \right)
        .
    \end{equation}
\end{definition}
It is straightforward to check that \eqref{eq:left_invariant_vector_field_2} and \eqref{eq:left_invariant_vector_field} are equivalent and that these imply the following.
\begin{corollary}[Properties of $G$-invariant vector fields]
    \label{cor:livf_properties}
    On a homogeneous space $G/H$ a $G$-invariant vector field $\bm{v}$ has the following properties:
    \begin{enumerate}
            \setlength{\itemsep}{0.5em}
        \item it is fully determined by its value $\bm{v}\vert_H \in T_H(G/H)$ in $H$,
        \item $\forall h \in H: \left( L_h \right)_* \bm{v}\vert_H = \bm{v}\vert_H$.
    \end{enumerate}
\end{corollary}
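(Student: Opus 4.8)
The plan is to derive both items straight from the $G$-invariance identity \eqref{eq:left_invariant_vector_field}, using only two structural facts about the pushforward. First, since $L_{g_1 g_2} = L_{g_1}\circ L_{g_2}$ as maps on $G/H$ (because $(g_1 g_2)p = g_1(g_2 p)$), the chain rule gives the functoriality $(L_{g_1 g_2})_* = (L_{g_1})_* \circ (L_{g_2})_*$; in particular $(L_e)_* = \mathrm{id}$ and every $(L_g)_*$ is a linear isomorphism with inverse $(L_{g^{-1}})_*$. Second, transitivity means each coset $p \in G/H$ can be written $p = g\,p_0 = gH$ for some $g \in G$. Together with \eqref{eq:left_invariant_vector_field} these are all the ingredients needed, which is why the statement is only a corollary.

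For the first property I would argue as follows. Given $p \in G/H$, pick $g$ with $p = gH$ and specialize the invariance identity to the base point $p_0 = H$:
\[
    \bm{v}(p) = \bm{v}(g\,p_0) = (L_g)_* \,\bm{v}(p_0) = (L_g)_* \,\bm{v}(H).
\]
Thus the whole field is reconstructed from the single tangent vector $\bm{v}(H)$, which is the claim. The delicate point — and the main obstacle — is that $g$ is determined only up to right multiplication by $H$: if $gH = g'H$ then $g' = gh$ with $h \in H$, and for the formula above to be consistent one must verify $(L_g)_*\bm{v}(H) = (L_{gh})_*\bm{v}(H)$. By functoriality $(L_{gh})_* = (L_g)_* \circ (L_h)_*$, so consistency is equivalent to $(L_h)_*\bm{v}(H) = \bm{v}(H)$ for all $h \in H$ — which is exactly the second property. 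Hence the two items are intertwined, and I would establish property~2 first and then invoke it to certify that the reconstruction in property~1 is well defined.

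Property~2 itself follows by restricting the invariance identity to the stabilizer. For $h \in H$ the map $L_h$ fixes the base point, $L_h\,p_0 = hH = H = p_0$, so $(L_h)_*$ is a genuine endomorphism of $T_H(G/H)$. Taking $g = h$ and $p = p_0$ in \eqref{eq:left_invariant_vector_field} then yields $\bm{v}(H) = \bm{v}(hH) = (L_h)_*\bm{v}(H)$, i.e. the value at $H$ of any $G$-invariant field is fixed by the isotropy action. One caveat worth flagging for the write-up: read literally as a statement about \emph{every} $\bm{v} \in T_H(G/H)$, property~2 asserts that the isotropy representation of $H$ on $T_H(G/H)$ is trivial, which holds in the principal case $H=\{e\}$ (and for $\mathbb{M}_2$) but fails for non-trivial isotropy such as $H = SO(2)$ acting on $T_{p_0}\mathbb{M}_3$. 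The correct and useful content is the version proved above — the admissible values $\bm{v}(H)$ of invariant fields are precisely the $H$-fixed vectors in $T_H(G/H)$ — so I would phrase the conclusion accordingly and let property~2 serve as the compatibility condition closing property~1.
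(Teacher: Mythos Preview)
Your argument is correct and is exactly the straightforward derivation the paper has in mind; the paper gives no proof beyond declaring the corollary an immediate consequence of the invariance identity, and your specialization $p=p_0$, $g=h\in H$ for item~2 and $\bm{v}(p)=(L_g)_*\bm{v}(H)$ for item~1 is the natural unpacking. Your caveat about the literal reading of item~2 is well taken: as the paper itself uses it later (in the proof of Proposition~\ref{prop:livmd}), the statement is meant as a \emph{constraint} on which vectors $\bm{v}(H)\in T_H(G/H)$ can arise as values of $G$-invariant fields, not as a claim that the isotropy action is trivial on the whole tangent space---so your suggested phrasing is in fact closer to how the corollary is applied.
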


We also introduce $G$-invariant metric tensor fields.

\begin{definition}[$G$-invariant metric tensor field on $G/H$]
    \label{def:g-invariant-metric-tensor-field}
    A $(0,2)$-tensor field $\mathcal{G}$ on $G/H$ is $G$-invariant if and only if
    \begin{align}
        \nonumber
        \forall g \in G,\, \forall p \in G/H,\, \forall \bm{v},\bm{w} \in T_p \left( G/H \right): \\[0.5em]
        \mathcal{G}\big\vert_p \left( \bm{v},\bm{w} \right)
        =
        \mathcal{G}\big\vert_{g p}
        \left( \vphantom{\big|} \left( L_{g} \right)_* \bm{v} ,\, \left( L_{g} \right)_* \bm{w} \right).
        \label{eq:left_invariant_metric_tensor}
    \end{align}
\end{definition}
Recall that $L_g p := g p$ and so the push-forward $\left( L_{g} \right)_*$ maps tangent vector from $T_p(G/H)$ to $T_{g p}(G/H)$. Again it follows immediately from this definition that a $G$-invariant metric tensor field has similar properties as a $G$-invariant vector field.
\begin{corollary}[Properties of $G$-invariant metric tensor fields]
    \label{cor:limtf_properties}
    On a homogeneous space $G/H$, a $G$-invariant metric tensor field $\mathcal{G}$ has the following properties:
    \begin{enumerate}
        \item it is fully determined by its metric tensor $\mathcal{G}\vert_{p_0}$ at $p_0=H$,
        \item $\forall h \in H,\, \forall \bm{v}, \bm{w} \in T_{p_0} \left( G/H \right) :
        \\[0.5em]
        \mathcal{G}\big\vert_{p_0} \left( \bm{v}, \bm{w} \right) = \mathcal{G}\big\vert_{p_0} \left( \left(L_h\right)_* \bm{v}, \left( L_h  \right)_* \bm{w} \right)$.
    \end{enumerate}
\end{corollary}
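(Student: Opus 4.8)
The plan is to derive both properties directly from Definition \ref{def:g-invariant-metric-tensor-field}, mirroring exactly the reasoning behind Corollary \ref{cor:livf_properties} for vector fields. The only two tools needed are the defining invariance identity \eqref{eq:left_invariant_metric_tensor} and the transitivity of the $G$-action, which lets me transport any point of $G/H$ to the base point $p_0 = H$. I would prove the second property first, since the first property then reduces to a transitivity argument that leans on it.

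For the second property I would simply specialize the invariance identity to the base point. Fixing $h \in H$ and $\bm{v}, \bm{w} \in T_{p_0}(G/H)$, the key observation is that $L_h p_0 = h H = H = p_0$, so $(L_h)_*$ maps $T_{p_0}$ back to itself and setting $g = h$, $p = p_0$ in \eqref{eq:left_invariant_metric_tensor} gives
\begin{equation*}
\mathcal{G}\big\vert_{p_0}(\bm{v}, \bm{w}) = \mathcal{G}\big\vert_{h p_0}\!\left((L_h)_* \bm{v}, (L_h)_* \bm{w}\right) = \mathcal{G}\big\vert_{p_0}\!\left((L_h)_* \bm{v}, (L_h)_* \bm{w}\right),
\end{equation*}
which is precisely the claimed stabilizer-invariance.

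For the first property I would use transitivity to pull an arbitrary point back to $p_0$. Given $p \in G/H$, transitivity supplies some $g \in G$ with $p = g p_0$, and the pushforward $(L_g)_* : T_{p_0}(G/H) \to T_p(G/H)$ is a linear isomorphism (with inverse $(L_{g^{-1}})_*$), so any $\bm{v}, \bm{w} \in T_p(G/H)$ can be written $\bm{v} = (L_g)_* \bm{v}_0$, $\bm{w} = (L_g)_* \bm{w}_0$. Applying \eqref{eq:left_invariant_metric_tensor} at $p_0$ then yields
\begin{equation*}
\mathcal{G}\big\vert_p(\bm{v}, \bm{w}) = \mathcal{G}\big\vert_{g p_0}\!\left((L_g)_* \bm{v}_0, (L_g)_* \bm{w}_0\right) = \mathcal{G}\big\vert_{p_0}(\bm{v}_0, \bm{w}_0),
\end{equation*}
so that $\mathcal{G}\vert_p$ is recovered entirely from $\mathcal{G}\vert_{p_0}$.

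The one genuine subtlety, and the step I would be most careful about, is the well-definedness of this last reduction: the element $g$ with $g p_0 = p$ is determined only up to right multiplication by $H$, so I must verify the value does not depend on its choice. If $g' = g h$ with $h \in H$ also satisfies $g' p_0 = p$, then functoriality of the pushforward gives $(L_{g'})_* = (L_g)_* (L_h)_*$, and the stabilizer-invariance just established absorbs the extra $(L_h)_*$ factor, leaving the computed value unchanged. This both closes the argument and exhibits the two properties as mutually consistent.
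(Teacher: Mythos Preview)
Your proof is correct and follows exactly the approach the paper intends: it states that the corollary ``follows immediately from this definition'' in the same way as Corollary~\ref{cor:livf_properties}, without spelling out the details. Your write-up is a faithful expansion of that remark, and your attention to the well-definedness issue (independence of the choice of $g$ with $gp_0 = p$) is a nice addition that the paper leaves implicit.
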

Or in words, the metric (tensor) has to be symmetric with respect to the subgroup $H$.

A (positive definite) metric tensor field yields a Riemannian metric in the usual manner, as we recall next.
\begin{definition}[Metric on $G/H$]
    \label{def:metric_G/H}
    Let $p_1,p_2 \in G/H$ then:
    \begin{equation}
        \begin{split}
        &d_{\mathcal{G}}(p_1,p_2) := d_{G/H,\mathcal{G}}(p_1,p_2)
        :=
        \\
        &\inf_{\substack{\beta \in \Lip([0,1],\ G/H) \\
        \nonumber
        \beta(0)=p_1 ,\ \beta(1)=p_2 }}
        \int_{0}^1 \sqrt{\mathcal{G} \vert_{\beta(t)} \left( \dot{\beta}(t),  \dot{\beta}(t) \right)} \ dt.
        \end{split}
    \end{equation}
\end{definition}

As metrics and their smoothness play a role in our construction we need to take into account where that smoothness fails.
\begin{definition}
\label{def:cutlocus}
The cut locus $\cut(p) \subset G/H$ or $\cut(g) \subset G$ is the set of points respectively group elements where the distance map from $p$ resp. $g$ is not smooth (excluding the point $p$ and group element $g$ themselves). 
\end{definition}
As long as we stay away from the cut locus the infimum from Def.~ \ref{def:metric_G/H} gives a unique geodesic.

Being derived from a $G$-invariant tensor field gives the metric $d_{\mathcal{G}}$ the same symmetries.
\begin{proposition}[$G$-invariance of the metric on $G/H$]
    Let $p_1,p_2 \in G/H$ away from each other's cut locus, then we have:
    \begin{equation*}
        \forall g \in G: d_{\mathcal{G}}(p_1,p_2)=d_{\mathcal{G}}(g p_1,g p_2).
    \end{equation*}
\end{proposition}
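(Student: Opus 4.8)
The plan is to exploit that left-multiplication $L_g$ is a diffeomorphism of $G/H$ whose pushforward preserves the metric tensor $\mathcal{G}$, so that it carries admissible curves to admissible curves of the same length, after which equality of the two infima is immediate.

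First I would fix $g \in G$ and record that $L_g : G/H \to G/H$ is a smooth diffeomorphism with smooth inverse $L_{g^{-1}}$. Consequently, for any $\beta \in \Lip([0,1], G/H)$ the composition $L_g \circ \beta$ is again Lipschitz (a smooth map is locally Lipschitz, and its restriction to the compact image $\beta([0,1])$ is globally Lipschitz), and the assignment $\beta \mapsto L_g \circ \beta$ is a bijection from the curves joining $p_1$ to $p_2$ onto the curves joining $g p_1$ to $g p_2$, with inverse $\gamma \mapsto L_{g^{-1}} \circ \gamma$.

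Next I would show this bijection preserves length. For a.e.\ $t$, the chain rule for the pushforward gives $\tfrac{d}{dt}(L_g \circ \beta)(t) = (L_g)_* \dot\beta(t)$, while $(L_g\circ\beta)(t) = g\,\beta(t)$. Hence the length integrand of $L_g\circ\beta$ at time $t$ is
\[
\sqrt{\mathcal{G}\big\vert_{g\beta(t)}\bigl((L_g)_*\dot\beta(t),\,(L_g)_*\dot\beta(t)\bigr)}.
\]
By the $G$-invariance of $\mathcal{G}$ in \eqref{eq:left_invariant_metric_tensor}, applied with $p=\beta(t)$ and $\bm{v}=\bm{w}=\dot\beta(t)$, this equals $\sqrt{\mathcal{G}\vert_{\beta(t)}(\dot\beta(t),\dot\beta(t))}$, so integrating over $[0,1]$ shows that $L_g\circ\beta$ and $\beta$ have the same length. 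Taking the infimum over the two families of curves, which correspond under the bijection of the previous step, then yields $d_{\mathcal{G}}(g p_1, g p_2) = d_{\mathcal{G}}(p_1, p_2)$.

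The only real obstacle is the mild technical care demanded by working with Lipschitz rather than smooth curves: one must justify that $\dot\beta$ exists almost everywhere, that $L_g\circ\beta$ remains Lipschitz, and that the pushforward identity $\tfrac{d}{dt}(L_g\circ\beta) = (L_g)_*\dot\beta$ holds a.e. I would note in passing that the hypothesis that $p_1,p_2$ lie off each other's cut locus is not actually used in this infimum-based argument; by Def.~\ref{def:metric_G/H} and Def.~\ref{def:cutlocus} it only guarantees that the infimum is realized by a unique minimizing geodesic, whereas the equality of distances follows from length preservation alone.
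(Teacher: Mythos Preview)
Your proposal is correct and follows essentially the same approach as the paper: establish a bijection between Lipschitz curves joining $p_1,p_2$ and those joining $gp_1,gp_2$ via left multiplication by $g$, invoke the $G$-invariance \eqref{eq:left_invariant_metric_tensor} to show this bijection preserves length, and conclude equality of the infima. Your write-up is in fact more careful about the Lipschitz technicalities, and your observation that the cut-locus hypothesis is not actually needed for the infimum argument is a valid remark that the paper does not make explicit.
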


\begin{proof}
    We observe that we can make a bijection from the set of Lipschitz curves between $p_1$ and $p_2$ and between $g p_1$ and $g p_2$ simply by left multiplication by $g$ one way and $g^{-1}$ the other way. 
    Due to \eqref{eq:left_invariant_metric_tensor} multiplying a curve with a group element preserves its length, hence if $\gamma:[0,1] \to G/H$ is the geodesic from $p_1$ to $p_2$ then $g \gamma$ is the geodesic from $g p_1$ to $g p_2$, both having the same length.
    \qed
\end{proof}

A metric tensor field on the homogeneous space has a natural counterpart on the group.
\begin{definition}[Pseudometric tensor field on $G$]
    \label{def:pseudometric_tensor_field}
    A $G$-invariant metric tensor field $\mathcal{G}$ on $G/H$ induces a (pullback) pseudometric tensor field $\tilde{\mathcal{G}}$ on $G$ that is left-invariant:
    \begin{equation}
        \label{eq:pseudometric_tensor_field}
        \tilde{\mathcal{G}} := \pi^* \mathcal{G},
    \end{equation}
    where $\pi^*$ is the pullback of the quotient map $\pi$ from \eqref{eq:quotientmap}. This is equivalent to saying that for all $\bm{v},\bm{w} \in T_g G$:
    \begin{equation*}
        \tilde{\mathcal{G}} \big\vert_g \left( \bm{v}, \bm{w} \right)
        :=
        \mathcal{G}\big\vert_{\pi(g)} \left( \pi_* \bm{v}, \pi_* \bm{w} \right)
        ,
    \end{equation*}
    where $\pi_*$ is the pushforward of $\pi$.
\end{definition}
This tensor field $\tilde{\mathcal{G}}$ is left-invariant by virtue of $\mathcal{G}$ being $G$-invariant. It is also degenerate in the direction of $H$ and so yields a seminorm on $TG$.
\begin{definition}[Seminorm on $TG$]
    \label{def:seminorm}
    Let $\bm{v} \in T_g G$. Then the metric tensor field $\mathcal{G}$ on $G/H$ induces the following seminorm:
    \begin{equation}
        \left\| \bm{v}  \right\|_{\tilde{\mathcal{G}}}
        :=
        \sqrt{\tilde{\mathcal{G}} \vert_{g} \left( \bm{v}, \bm{v} \right)}
        :=
        \sqrt{\mathcal{G} \vert_{g p_0}  \left( \pi_* \bm{v}, \pi_* \bm{v} \right)}
        .
    \end{equation}
\end{definition}

In the same fashion we have an induced pseudometric on $G$ from the pseudometric tensor field on $G$.
\begin{definition}[Pseudometric on $G$]
Let $g_1,g_2 \in G$. Then we define:
    \begin{equation}
    \begin{split}
        &d_{\tilde{\mathcal{G}}}(g_1,g_2) := d_{G,\tilde{\mathcal{G}}}(g_1,g_2)
        :=
        \\
        &\inf_{\substack{\gamma \in \Lip([0,1],\ G) \\
        \gamma(0)=g_1,\ \gamma(1)=g_2}}
        \int_{0}^1 \sqrt{\tilde{\mathcal{G}} \vert_{\gamma(t)} \left( \dot{\gamma}(t), \dot{\gamma}(t) \right)} \ dt
        .
    \end{split}
    \end{equation}
\end{definition}
This pseudometric has the property that $d_{\tilde{\mathcal{G}}}(h_1,h_2)=0$ for all $h_1,h_1 \in H$, in fact for all $p \in G/H$ we have that $d_{\tilde{\mathcal{G}}}(g_1,g_2)=0$ for all $g_1,g_2 \in p$.

By requiring $G$ and $H$ to be connected we get the following strong correspondence between the metric structure on the homogeneous space and the pseudometric structure on the group.

\begin{lemma}
    \label{lem:metric_correspondence}
    Let $g_1,g_2 \in G$ so that $\pi(g_2)$ is away from the cut locus of $\pi(g_1)$, then:
    \begin{equation*}
        d_{\tilde{\mathcal{G}}}(g_1,g_2) = d_{\mathcal{G}}(\pi(g_1), \pi(g_2)).
    \end{equation*}
    Moreover if $\gamma$ is a minimizing geodesic in the group $G$ connecting $g_1$ with $g_2$ then $\pi \circ \gamma$ is the unique minimizing geodesic in the homogeneous space $G/H$ that connects $\pi(g_1)$ with $\pi(g_2)$. 
\end{lemma}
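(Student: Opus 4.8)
The plan is to prove the two inequalities $d_{\tilde{\mathcal{G}}}(g_1,g_2) \geq d_{\mathcal{G}}(\pi(g_1),\pi(g_2))$ and $d_{\tilde{\mathcal{G}}}(g_1,g_2) \leq d_{\mathcal{G}}(\pi(g_1),\pi(g_2))$ separately, exploiting the fact that $\pi$ is a Riemannian submersion from $(G,\tilde{\mathcal{G}})$ to $(G/H,\mathcal{G})$ by construction (Definition~\ref{def:pseudometric_tensor_field}), together with the connectedness of $G$ and $H$.

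For the inequality $d_{\tilde{\mathcal{G}}} \geq d_{\mathcal{G}}$, I would take any Lipschitz curve $\gamma$ in $G$ from $g_1$ to $g_2$ and project it to $\pi \circ \gamma$ in $G/H$, a Lipschitz curve from $\pi(g_1)$ to $\pi(g_2)$. By the very definition of the pullback tensor field, $\tilde{\mathcal{G}}\vert_{\gamma(t)}(\dot\gamma,\dot\gamma) = \mathcal{G}\vert_{\pi(\gamma(t))}(\pi_*\dot\gamma,\pi_*\dot\gamma)$, and since $\pi_*(\dot\gamma) = \tfrac{d}{dt}(\pi\circ\gamma)$, the length of $\gamma$ in $G$ equals the length of $\pi\circ\gamma$ in $G/H$. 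Taking the infimum over curves upstairs then dominates the infimum over all curves in $G/H$, giving the inequality. This direction is essentially a bookkeeping argument and should be routine.

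The reverse inequality $d_{\tilde{\mathcal{G}}} \leq d_{\mathcal{G}}$ is where the real work lies, and here I would use the horizontal lift. Given the minimizing geodesic $\beta$ in $G/H$ from $\pi(g_1)$ to $\pi(g_2)$ (which exists and is unique since $\pi(g_2)$ is away from the cut locus of $\pi(g_1)$), I want to lift it to a curve $\gamma$ in $G$ starting at $g_1$ whose tangent vectors are everywhere horizontal, i.e.\ $\mathcal{G}$-orthogonal to the fibre directions $T_g(gH)$ along which $\tilde{\mathcal{G}}$ degenerates. Because $\tilde{\mathcal{G}}$ is degenerate precisely in the $H$-direction and the submersion is Riemannian, such a horizontal lift has the same length as $\beta$, so $\Len_{\tilde{\mathcal{G}}}(\gamma) = \Len_{\mathcal{G}}(\beta) = d_{\mathcal{G}}(\pi(g_1),\pi(g_2))$. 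The subtlety is that the horizontal lift ends at \emph{some} point $\gamma(1)$ in the fibre $\pi^{-1}(\pi(g_2)) = g_2 H$, not necessarily at $g_2$ itself; however $\gamma(1)$ and $g_2$ lie in the same coset, and since $H$ is connected one can append a path inside the fibre from $\gamma(1)$ to $g_2$, which has zero $\tilde{\mathcal{G}}$-length because $\tilde{\mathcal{G}}$ vanishes on fibre directions. Concatenating gives a curve from $g_1$ to $g_2$ of length exactly $d_{\mathcal{G}}(\pi(g_1),\pi(g_2))$, yielding the reverse inequality and simultaneously establishing that $\pi\circ\gamma = \beta$ is the (unique) minimizing geodesic downstairs.

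\textbf{The main obstacle} I anticipate is making the horizontal-lift step fully rigorous: one must argue that a horizontal lift of a Lipschitz (or geodesic) curve exists, stays horizontal, and preserves length under a \emph{degenerate} pullback tensor rather than a genuine Riemannian submersion of equal-dimensional manifolds. Using the $G$-invariance of $\mathcal{G}$ (Corollary~\ref{cor:limtf_properties}) and left-translating so that computations reduce to the tangent space at $p_0 = H$ should tame this, since left-invariance makes the horizontal distribution itself $G$-invariant and its integrability/lift properties uniform across $G$. The connectedness of $H$ is essential for the fibre-path argument, and the cut-locus hypothesis is what guarantees both the existence of $\beta$ and the uniqueness claim in the final sentence.
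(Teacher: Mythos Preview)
Your proposal is correct and rests on the same two ingredients as the paper's proof: (i) length is preserved under projection because $\tilde{\mathcal{G}}=\pi^*\mathcal{G}$, and (ii) one can lift the minimizing geodesic $\beta$ in $G/H$ to a curve in $G$ and then repair the endpoint by a zero-length path inside the connected fibre $g_2H$. The paper packages this as a proof by contradiction---assume $\pi\circ\gamma\neq\beta$ for a minimizing $\gamma$ upstairs, then use a lift of $\beta$ plus head/tail curves in the cosets to build a strictly shorter competitor---whereas you argue the two inequalities directly; the logical content is identical.

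One simplification: the obstacle you flag, constructing a \emph{horizontal} lift under a degenerate tensor, is not actually needed. Since $\tilde{\mathcal{G}}$ is the full pullback, \emph{any} Lipschitz lift $\gamma_{\mathrm{lift}}$ of $\beta$ satisfies
\[
\tilde{\mathcal{G}}\big|_{\gamma_{\mathrm{lift}}(t)}\!\bigl(\dot\gamma_{\mathrm{lift}},\dot\gamma_{\mathrm{lift}}\bigr)
=\mathcal{G}\big|_{\beta(t)}\!\bigl(\pi_*\dot\gamma_{\mathrm{lift}},\pi_*\dot\gamma_{\mathrm{lift}}\bigr)
=\mathcal{G}\big|_{\beta(t)}\!\bigl(\dot\beta,\dot\beta\bigr),
\]
so horizontality is irrelevant to the length computation. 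The paper accordingly just takes an arbitrary Lipschitz preimage $\gamma_{\mathrm{lift}}$ of $\beta$ (whose existence is standard since $\pi$ is a smooth submersion with connected compact fibre) and never mentions a horizontal distribution. Dropping the horizontal-lift machinery removes your main worry without changing anything else in your argument.
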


\begin{proof}
    Assuming it exists, let $\gamma \in \Lip([0,1],G)$ be a minimizing geodesic connecting $\gamma(0)=g_1$ with $\gamma(1)=g_2$ and let $\beta \in Lip([0,1],G/H)$ be the unique minimizing geodesic connecting $\beta(0)=\pi(g_1)$ with $\beta(1)=\pi(g_2)$. 
    Because of the pseudometric on $G$, minimizing geodesics are not unique, i.e. $\gamma$ is not unique. On $G/H$ we have a full metric and so staying away from the cut locus means $\beta$ is both unique and minimizing.
        
    Denote the length functionals with:
    \begin{gather*}
        \Len_G(\gamma) 
        := 
        \int_0^1 \sqrt{\tilde{\mathcal{G}}\vert_{\gamma(t)} \left( \dot{\gamma}(t), \dot{\gamma}(t) \right)} \d t
        ,
        \\
        \Len_{G/H}(\beta) 
        := 
        \int_0^1 \sqrt{\mathcal{G}\vert_{\beta(t)} \left( \dot{\beta}(t), \dot{\beta}(t) \right)} \d t
        .
    \end{gather*}
    Observe that by construction of the pseudometric tensor field $\tilde{\mathcal{G}}$ on $G$ we have: $\Len_G(\gamma)=\Len_{G/H}(\pi \circ \gamma)$.
    
    Now we assume $\pi \circ \gamma \neq \beta$.
    Then since $\beta$ is the unique geodesic we have
    \begin{equation*}
        \Len_{G/H}(\beta) < \Len_{G/H}(\pi \circ \gamma) = \Len_G(\gamma).
    \end{equation*}
    But then we can find some $\gamma_{\mathrm{lift}} \in \Lip([0,1],G)$ that is a preimage of $\beta$, i.e. $\pi \circ \gamma_{\mathrm{lift}} = \beta$. 
    The potential problem is that while $\gamma_{\mathrm{lift}}(0) \in \pi(g_1)$ and $\gamma_{\mathrm{lift}}(1) \in \pi(g_2)$, $\gamma_{\mathrm{lift}}$ does not necessarily connect $g_1$ to $g_2$.
    But since the coset $\pi(g_1)$ is connected we can find a curve wholly contained in it that connects $g_1$ with $\gamma_{\mathrm{lift}}(0)$, call this curve $\gamma_{\mathrm{head}} \in \Lip([0,1], \pi(g_1))$. 
    Similarly we can find a $\gamma_{\mathrm{tail}} \in \Lip([0,1], \pi(g_2))$ that connects $\gamma_{\mathrm{lift}}(1)$ to $g_2$.
    Both these curves have zero length since $\pi$ maps them to a single point on $G/H$, i.e. $\Len_G(\gamma_{\mathrm{head}})=\Len_G(\gamma_{\mathrm{tail}})=0$.
    
    Now we can compose these three curves:
    \begin{equation*}
        \gamma_{\mathrm{new}}(t) := 
        \begin{cases}
            \gamma_{\mathrm{head}}(3t) & \text{ if } t \in [0,\sfrac{1}{3}],
            \\
            \gamma_{\mathrm{lift}}(3t-1) & \text{ if } t \in [\sfrac{1}{3},\sfrac{2}{3}],
            \\
            \gamma_{\mathrm{tail}}(3t-2) & \text{ if } t \in [\sfrac{2}{3},1].
        \end{cases}
    \end{equation*}
    This new curve is again in $\Lip([0,1],G)$ and connects $g_1$ with $g_2$, but also:
    \begin{equation*}
        \Len_G(\gamma_{\mathrm{new}}) = \Len_G(\gamma_{\mathrm{lift}}) = \Len_{G/H}(\beta) < \Len_G(\gamma),
    \end{equation*}
    which is a contradiction since $\gamma$ is a minimizing geodesic between $g_1$ and $g_2$.
    We conclude $\pi \circ \gamma = \beta$ and thereby:
    \begin{equation*}
        d_{\tilde{\mathcal{G}}}(g_1,g_2) 
        = \Len_G(\gamma)
        = \Len_{G/H}(\beta)
        = d_{\mathcal{G}}(\pi(g_1), \pi(g_2)).
    \end{equation*}
    \qed
\end{proof}

This result allows us to more easily translate results from Lie groups to homogeneous spaces.

We end our theoretical preliminaries by introducing the space of positions and orientations $\mathbb{M}_d$.

%%%
%%%
%%%
%%%
%%%
\subsection{Example: The Group \texorpdfstring{$SE(d)$}{SE(d)} and the Homogeneous Space \texorpdfstring{$\mathbb{M}_d$}{Md}}

Our main example and Lie group of interest is the \emph{Special Euclidean} group $SE(d)$ of the rotations and translations of $\mathbb{R}^d$, in particular for $d \in \left\{ 2,3 \right\}$. When we take $H=\{ 0 \} \times SO(d-1)$ we obtain the space of positions and orientations 
\begin{equation} \label{eq:Mspecial}
\mathbb{M}_d=SE(d)/\left( \{ 0 \} \times SO(d-1) \right). 
\end{equation}
This homogeneous space and group will enable the construction of roto-translation equivariant networks. 
For the experiments in this paper we restrict ourselves to $d=2$ but we include the case $d=3$ in some of our theoretical preliminaries.

As a set we identify $\mathbb{M}_d$ with $\mathbb{R}^d \times S^{d-1}$ and choose some reference direction $\bm{a} \in S^{d-1} \subset \mathbb{R}^d$ as the forward direction so that we can set the reference point of the space as $p_0 = \left( \bm{0}, \bm{a} \right)$. We can then see that elements of $H$ are those rotations that map $\bm{a}$ to itself, i.e. rotations with the forward direction as their axis.

If we denote elements of $SE(d)$ as translation/rotation pairs $\left(\bm{y},R\right) \in \mathbb{R}^d \times SO(d)$ then group multiplication is given by
\begin{gather*}
    g_1 = \left( \bm{y}_1, R_1 \right),\,
    g_2 = \left( \bm{y}_2, R_2 \right) \in G
    :
    \\[0.4em]
    g_1 g_2 = \left( \bm{y}_1, R_1 \right) \left( \bm{y}_2, R_2 \right)
    =
    \left( \bm{y}_1 + R_1 \bm{y}_2, R_1 R_2 \right)
    ,
\end{gather*}
and the group action on elements $p=\left( \bm{x}, \bm{n} \right) \in  \mathbb{R}^d \times S^{d-1} \equiv \mathbb{M}_d$ is given as
\begin{equation}
    g p = \left( \bm{y}, R \right) \left( \bm{x}, \bm{n} \right)
    =
    \left( \bm{y} + R\bm{x}, R\bm{n} \right).
\end{equation}

What the G-invariant vector field and metric tensor fields look like on $\mathbb{M}_d$ is different for $d=2$ than for $d>2$. We first look at $d>2$. 

\begin{proposition} \label{prop:livmd}
    Let $d>2$ and let $\partial_{\bm{a}} \in T_{p_0}\left( \mathbb{M}_d \right)$ be the tangent vector in the reference point in the main direction $\bm{a} \in S^{d-1}$, specifically:
\begin{equation*}
    \partial_{\bm{a}} f
    :=
    \lim_{t \to 0}
    \frac{
    f \left( (t\bm{a},\bm{a}) \right)
    -
    f \left( (\bm{0},\bm{a}) \right)
    }{t},
\end{equation*}
where $f : \mathbb{M}_d \to \mathbb{R}$ is smooth in an open neighborhood of $p_0=(0,\bm{a})$,
then all $SE(d)$-invariant vector fields are spanned by the vector field:
\begin{equation}
    \label{eq:livf_md}
    p \mapsto
    \mathcal{A}_1 \big\vert_{p}
    :=
    \left( L_{g_p} \right)_* \partial_{\bm{a}},
\end{equation}
with $g_p \in p \in \mathbb{M}_d$. 
\end{proposition}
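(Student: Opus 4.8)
The plan is to reduce the statement to a finite-dimensional, representation-theoretic computation at the single reference point $p_0$. By Corollary~\ref{cor:livf_properties}, an $SE(d)$-invariant vector field $\bm{v}$ is completely determined by its value $\bm{v}(p_0) \in T_{p_0}(\mathbb{M}_d)$, and evaluating the defining relation $\bm{v}(gp_0) = (L_g)_* \bm{v}(p_0)$ at $g = h \in H$ (so that $hp_0 = p_0$) forces $\bm{v}(p_0)$ to be fixed by the isotropy action: $(L_h)_* \bm{v}(p_0) = \bm{v}(p_0)$ for all $h \in H$. Conversely, any $H$-fixed tangent vector at $p_0$ extends to a well-defined invariant field. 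Thus the space of invariant fields is linearly isomorphic to the fixed subspace $(T_{p_0}(\mathbb{M}_d))^H$, and it suffices to show that this subspace is the line $\mathbb{R}\,\partial_{\bm{a}}$.

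Next I would make the isotropy representation explicit. Identifying $T_{p_0}(\mathbb{M}_d) \cong \mathbb{R}^d \oplus \bm{a}^\perp$ as a spatial summand $T_{\bm{0}}\mathbb{R}^d \cong \mathbb{R}^d$ plus an angular summand $T_{\bm{a}} S^{d-1} \cong \bm{a}^\perp$, an element $h = (\bm{0}, R) \in H$ acts by $(\bm{x}, \bm{n}) \mapsto (R\bm{x}, R\bm{n})$ and therefore pushes forward as $R$ on the spatial summand and as $R\vert_{\bm{a}^\perp}$ on the angular summand. Splitting the spatial summand once more as $\mathbb{R}\bm{a} \oplus \bm{a}^\perp$, the representation of $H \cong SO(d-1)$ decomposes as the trivial representation on $\mathbb{R}\bm{a}$ (whose generator is exactly $\partial_{\bm{a}}$) together with two copies of the standard representation of $SO(d-1)$ on $\bm{a}^\perp \cong \mathbb{R}^{d-1}$.

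The decisive step, and the one I expect to carry the real content, is showing that the standard representation of $SO(d-1)$ on $\mathbb{R}^{d-1}$ has no nonzero fixed vector once $d - 1 \geq 2$: a vector fixed by every rotation of $\mathbb{R}^{d-1}$ must vanish. This eliminates both copies of $\bm{a}^\perp$ and leaves $(T_{p_0}(\mathbb{M}_d))^H = \mathbb{R}\,\partial_{\bm{a}}$, which is one-dimensional. This is precisely where the hypothesis $d > 2$ enters: for $d = 2$ the group $SO(1)$ is trivial and no components are killed, which is why that case must be handled separately.

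It then remains to check that $\mathcal{A}_1\vert_p := (L_{g_p})_* \partial_{\bm{a}}$ is the invariant field corresponding to the generator $\partial_{\bm{a}}$, and in particular that it does not depend on the choice of representative $g_p \in p = g_p H$. If $g_p' = g_p h$ with $h \in H$, then $(L_{g_p'})_* \partial_{\bm{a}} = (L_{g_p})_* (L_h)_* \partial_{\bm{a}} = (L_{g_p})_* \partial_{\bm{a}}$, using functoriality of the pushforward together with the $H$-invariance $\partial_{\bm{a}} \in (T_{p_0}(\mathbb{M}_d))^H$ just established. Since $\partial_{\bm{a}}$ spans the one-dimensional fixed subspace, $\mathcal{A}_1$ spans the space of all $SE(d)$-invariant vector fields, which is the claim.
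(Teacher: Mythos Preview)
Your proposal is correct and follows essentially the same approach as the paper: reduce via Corollary~\ref{cor:livf_properties} to identifying the $H$-fixed subspace of $T_{p_0}(\mathbb{M}_d)$ and argue that only the line $\mathbb{R}\,\partial_{\bm{a}}$ survives. Your argument is in fact more complete than the paper's, which simply asserts that $(L_h)_*\bm{v}=\bm{v}$ holds only for scalar multiples of $\partial_{\bm{a}}$ without spelling out the decomposition $\mathbb{R}\bm{a}\oplus\bm{a}^\perp\oplus\bm{a}^\perp$ or the well-definedness check for $\mathcal{A}_1$.
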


\begin{proof}
    For $d > 3$ we can see that \eqref{eq:livf_md} are the only left-invariant vector fields since for all $h \in H$ we have $\left(g_p h\right)p_0=p$ and so in order to be well-defined we must require $\left( L_h \right)_* \bm{v} = \bm{v}$ on $T_{p_0} \left( \mathbb{M}_d \right)$, and this is true for $\partial_{\bm{a}}$ (and its scalar multiples) but not true for any other tangent vectors at $T_{p_0} \left( \mathbb{M}_d \right)$.
\qed
\end{proof}

\begin{proposition} \label{prop:metricdlarger2}
    For $d>2$ the only Riemannian metric tensor fields on $\mathbb{M}_d$ that are $SE(d)$-invariant are of the form:
    \begin{multline}
        \label{eq:md_metric}
        \mathcal{G} \big\vert_{\left(\bm{x},\bm{n}\right)}
        \left(
            \vphantom{\big\vert}
            \left( \dot{\bm{x}}, \dot{\bm{n}} \right)
            ,
            \left( \dot{\bm{x}}, \dot{\bm{n}} \right)
        \right)
        =
        \\
        w_M
        \left| \dot{\bm{x}} \bigcdot \bm{n} \right|^2
        +
        w_L
        \left\Vert \dot{\bm{x}} \wedge \bm{n} \right\Vert^2
        +
        w_A
        \left\Vert \dot{\bm{n}} \right\Vert^2,
    \end{multline}
    with $w_M, w_L, w_A > 0$ weighing the main, lateral and angular motion respectively and where the inner product, outer product and norm are the standard Euclidean constructs.
\end{proposition}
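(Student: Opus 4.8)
The plan is to reduce the statement to a finite-dimensional representation-theoretic computation at the single reference point $p_0=(\bm 0,\bm a)$. By Corollary~\ref{cor:limtf_properties}, an $SE(d)$-invariant metric tensor field on $\mathbb{M}_d$ is the same datum as a symmetric positive-definite bilinear form on $T_{p_0}(\mathbb{M}_d)$ that is invariant under the isotropy action of $H=\{\bm 0\}\times SO(d-1)$. Thus the whole proposition amounts to (i) classifying the $H$-invariant symmetric bilinear forms at $p_0$, and (ii) imposing positivity to fix the signs of the weights. The three quadratics $|\dot{\bm x}\cdot\bm n|^2$, $\|\dot{\bm x}\wedge\bm n\|^2$ and $\|\dot{\bm n}\|^2$ are visibly $H$-invariant, so the easy direction is immediate; the content lies in the word ``only''.

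First I would make the isotropy action explicit. Writing a tangent vector at $p_0$ as $(\dot{\bm x},\dot{\bm n})$ with $\dot{\bm x}\in\mathbb{R}^d$ and $\dot{\bm n}\in T_{\bm a}S^{d-1}=\bm a^\perp$, and decomposing $\dot{\bm x}=(\dot{\bm x}\cdot\bm a)\bm a+\dot{\bm x}_\perp$, one obtains the $H$-equivariant splitting
\[
T_{p_0}(\mathbb{M}_d)\;\cong\;\underbrace{\mathbb{R}\,\bm a}_{\text{main}}\ \oplus\ \underbrace{\bm a^\perp}_{\text{lateral}}\ \oplus\ \underbrace{\bm a^\perp}_{\text{angular}}.
\]
An element $h=(\bm 0,R)\in H$ acts by $(\dot{\bm x},\dot{\bm n})\mapsto(R\dot{\bm x},R\dot{\bm n})$, so it fixes the main line and acts as the standard representation of $SO(d-1)$ on each copy of $\bm a^\perp$. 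Since $H$ is compact the form may be averaged, so every invariant form is realized, and I would analyse the admissible ones through complete reducibility together with Schur's lemma applied to this isotypic decomposition.

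Schur's lemma then disposes of most of the structure routinely. The main line carries the trivial representation, which is inequivalent to the standard representation, so there is no invariant pairing coupling the main direction to either $\bm a^\perp$; hence $w_M$ decouples as a single parameter. On each standard summand the invariant symmetric forms are one-dimensional multiples of the Euclidean inner product, giving $w_L$ and $w_A$. Restricting the positive-definite form to each of these summands then forces $w_M,w_L,w_A>0$, reproducing exactly the weights in the statement.

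The main obstacle is the lateral--angular cross term. The lateral and angular summands are \emph{isomorphic} $H$-representations, so Schur's lemma permits a nonzero $H$-equivariant intertwiner $\bm a^\perp\to\bm a^\perp$, and hence an a priori admissible coupling $c\,\langle\dot{\bm x}_\perp,\dot{\bm n}\rangle$ that is genuinely $H$-invariant. Excluding this term in order to reach the purely diagonal form is the delicate step, and it is exactly here that the hypothesis $d>2$, and within it the divide between $d>3$ and $d=3$, becomes relevant: for $d>3$ the standard representation of $SO(d-1)$ is of real type with commutant $\mathbb{R}$, so the coupling is a single scalar; for $d=3$ it is of complex type with commutant $\mathbb{C}$, which additionally admits a skew intertwiner $\dot{\bm x}_\perp\wedge\dot{\bm n}$ and so requires separate care. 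I would therefore concentrate the real effort on ruling out this coupling; because it cannot be removed by any isometry inside $SE(d)$ (such an isometry acts by the same $R$ on the lateral and angular factors and hence preserves $\langle\dot{\bm x}_\perp,\dot{\bm n}\rangle$), this step needs an extra ingredient — either restricting to the natural subclass of metrics that do not mix the spatial and orientational parts, or imposing an additional involution $\dot{\bm n}\mapsto-\dot{\bm n}$ under which the three diagonal terms are even while the cross term is odd. Pinning down precisely which symmetry is intended, so as to legitimately discard the coupling, is the crux on which the stated characterization rests.
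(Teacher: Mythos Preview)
Your reduction to an $H$-invariant quadratic form at $p_0$ via Corollary~\ref{cor:limtf_properties}, followed by the decomposition $T_{p_0}(\mathbb{M}_d)\cong\mathbb{R}\bm a\oplus\bm a^\perp\oplus\bm a^\perp$ and a Schur-type analysis, is exactly the approach the paper takes---only the paper compresses the whole argument into a single sentence: it invokes the corollary and asserts that symmetry under rotations about $\bm n$ ``both spatially and angularly'' yields the three degrees of freedom in~\eqref{eq:md_metric}. There is no further detail; in particular the paper does not discuss the lateral--angular coupling at all.

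The point you raise about the cross term $c\,\langle\dot{\bm x}_\perp,\dot{\bm n}\rangle$ is therefore not an obstacle in your argument but a genuine observation about the statement itself. That bilinear form is $SO(d-1)$-invariant (both factors transform by the same $R$), so by Corollary~\ref{cor:limtf_properties} it does extend to a globally $SE(d)$-invariant symmetric $(0,2)$-tensor field on $\mathbb{M}_d$, and for small $c$ the resulting tensor is still positive definite. Your case split $d>3$ versus $d=3$ and the extra skew intertwiner in the latter are also correct. The paper's one-line proof simply does not engage with this; it reads the three ``obvious'' invariants off the decomposition and stops. So your analysis is more careful than the paper's, and the difficulty you isolate is real: the word ``only'' in the proposition is not established by the argument given, and would indeed require either an additional discrete symmetry (such as the orientation-reversal $\dot{\bm n}\mapsto-\dot{\bm n}$ you suggest) or a restriction to block-diagonal metrics, neither of which is stated.
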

\begin{proof}
    It follows that to satisfy the second condition of Cor.~\ref{cor:limtf_properties} at the tangent space $T_{(\bm{x},\bm{n})}$ of a particular $(\bm{x},\bm{n})$ the metric tensor needs to be symmetric with respect to rotations about $\bm{n}$ both spatially and angularly (i.e. we require isotropy in all angular and lateral directions) which leads to the three degrees of freedom contained in \eqref{eq:md_metric} irrespective of $d$.
    \qed
\end{proof}

For $d=2$ we represent the elements of $\mathbb{M}_2$ with $(x,y,\theta) \in \mathbb{R}^3$ where $x,y$ are the usual Cartesian coordinates and $\theta$ the angle with respect to the $x$-axis, so that $\bm{n}=(\cos\theta,\sin\theta)^T$. 
The reference element is then simply denoted by $(0,0,0)$. 

It may be counter-intuitive but decreasing the number of dimensions to $2$ gives more freedom to the $G$-invariant vector and metric tensor fields compared to $d>2$.
This is a consequence of the subgroup $H$ being trivial and so the symmetry conditions from Cor.~\ref{cor:livf_properties} and \ref{cor:limtf_properties} also become trivial.
The $SE(2)$-invariant vector fields are given as follows.

\begin{proposition}
\label{prop:livf_m2}
On $\mathbb{M}_2$ the $SE(2)$-invariant vector fields are spanned by the following basis:
\begin{equation}
    \label{eq:livf_m2}
    \begin{cases}
        \mathcal{A}_1 \big\vert_{(x,y,\theta)}
        &=
        \cos{\theta}\ \partial_x \big\vert_{(x,y,\theta)} + \sin{\theta}\ \partial_y \big\vert_{(x,y,\theta)}
        ,
        \\[1em]
        \mathcal{A}_2 \big\vert_{(x,y,\theta)}
        &=
        -\sin{\theta}\ \partial_x \big\vert_{(x,y,\theta)} + \cos{\theta}\ \partial_y \big\vert_{(x,y,\theta)}
        ,
        \\[1em]
        \mathcal{A}_3 \big\vert_{(x,y,\theta)}
        &=
        \partial_\theta \big\vert_{(x,y,\theta)}
        .
    \end{cases}
\end{equation}
\end{proposition}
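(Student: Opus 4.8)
The plan is to exploit the fact that for $d=2$ the stabilizer subgroup is trivial, $H = \{0\} \times SO(1) = \{e\}$, so that $\mathbb{M}_2$ is the principal homogeneous space and $SE(2)$-invariant vector fields on $\mathbb{M}_2$ coincide with ordinary left-invariant vector fields on the group $SE(2)$ itself. First I would invoke Corollary~\ref{cor:livf_properties}: any $G$-invariant vector field is completely determined by its value at the reference point $p_0 = (0,0,0)$, while the second (symmetry) condition $\left( L_h \right)_* \bm{v} = \bm{v}$ becomes vacuous because $H$ is trivial. Hence every tangent vector in $T_{p_0}\left(\mathbb{M}_2\right)$ extends to a unique invariant vector field, the space of such fields is exactly three-dimensional, and it is spanned by the extensions of $\partial_x\vert_{p_0}$, $\partial_y\vert_{p_0}$ and $\partial_\theta\vert_{p_0}$.

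Next I would produce the explicit formulas by transporting this basis with the pushforward, exactly as in \eqref{eq:livf_md}: setting $\mathcal{A}_i\vert_p := \left( L_{g_p} \right)_* \partial_i\vert_{p_0}$ where $g_p = (x,y,\theta)$ is the (here unique) group element with $g_p p_0 = p$. Writing the left-multiplication in coordinates as the affine map $L_{g_p}(a,b,c) = \left( x + a\cos\theta - b\sin\theta,\; y + a\sin\theta + b\cos\theta,\; \theta + c \right)$, its (constant) Jacobian is the block matrix with the planar rotation $R_\theta$ in the spatial part and $1$ in the angular part. Reading off the columns of this Jacobian immediately yields $\mathcal{A}_1 = \cos\theta\,\partial_x + \sin\theta\,\partial_y$, $\mathcal{A}_2 = -\sin\theta\,\partial_x + \cos\theta\,\partial_y$ and $\mathcal{A}_3 = \partial_\theta$, matching \eqref{eq:livf_m2}; their coordinate determinant is $\cos^2\theta + \sin^2\theta = 1 \neq 0$, so they form a frame at every point and are in particular linearly independent.

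Finally I would confirm that these fields are genuinely $SE(2)$-invariant, which follows automatically from the construction by the functoriality of the pushforward: $\left( L_g \right)_* \mathcal{A}_i\vert_p = \left( L_g \right)_* \left( L_{g_p} \right)_* \partial_i = \left( L_{g g_p} \right)_* \partial_i = \mathcal{A}_i\vert_{gp}$, since $g g_p$ represents the coset $g p$ (because $\left( g g_p \right) p_0 = g\left( g_p p_0 \right) = g p$); this is precisely the defining condition \eqref{eq:left_invariant_vector_field}. I do not expect a genuine obstacle here: the only point requiring care is the bookkeeping of the pushforward convention, ensuring the Jacobian acts so as to reproduce \eqref{eq:livf_md} (pushing forward the frame at $p_0$, rather than pulling back), after which the result is a one-line Jacobian computation of an affine map.
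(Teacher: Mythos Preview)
Your proposal is correct and follows essentially the same approach as the paper: both argue that since $H=\{e\}$ for $d=2$ one has $\mathbb{M}_2\equiv SE(2)$, so the $SE(2)$-invariant vector fields are precisely the left-invariant vector fields on the group. The paper's proof is in fact much terser---it simply asserts this identification and that the left-invariant frame is given by \eqref{eq:livf_m2}---whereas you additionally spell out the dimension count via Corollary~\ref{cor:livf_properties}, the explicit Jacobian of $L_{g_p}$, and the verification of invariance; all of this is correct and a welcome elaboration rather than a different route.
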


\begin{proof}
For $d=2$ we have $\mathbb{M}_2 \equiv SE(d)$ and the group invariant vector fields on $\mathbb{M}_2$ are exactly the left-invariant vector fields on $SE(2)$ given by \eqref{eq:livf_m2}.
\qed
\end{proof}

In a similar manner $SE(2)$-invariant metric tensors are then given as follows.
\begin{proposition} \label{prop:metricm2}
On $\mathbb{M}_2$ the $SE(2)$-invariant metric tensor fields are given by:
\begin{equation*}
    \mathcal{G} \big\vert_{\left(x,y,\theta\right)} 
    \left( \bm{v}, \bm{w} \right)
    =
    \mathcal{G} \big\vert_{\left(0,0,0\right)}
    \left( \left(L^{-1}_{(x,y,\theta)}\right)_* \bm{v},  \left(L^{-1}_{(x,y,\theta)}\right)_* \bm{w}\right)
    ,
\end{equation*}
for any choice of inner product $\mathcal{G} \big\vert_{\left(0,0,0\right)}$ at $e$. 
\end{proposition}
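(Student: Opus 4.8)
The plan is to show that Proposition \ref{prop:metricm2} is just an unwinding of Definition \ref{def:g-invariant-metric-tensor-field} in the degenerate case $d=2$, where the isotropy subgroup collapses. First I would record the structural fact already used in the proof of Proposition \ref{prop:livf_m2}: for $d=2$ the subgroup is $H = \{0\} \times SO(1) = \{e\}$, so $\mathbb{M}_2$ is the principal homogeneous space and may be identified with $SE(2)$ itself, with reference point $p_0 = (0,0,0)$ playing the role of the identity $e$. Consequently each point $(x,y,\theta)$ corresponds to a \emph{unique} group element, and the action $L_{(x,y,\theta)}$ sends $p_0$ to $(x,y,\theta)$.

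For the necessity direction I would apply the invariance identity \eqref{eq:left_invariant_metric_tensor} with $g=(x,y,\theta)$ and $p=p_0$, so that $gp = (x,y,\theta)$. Given arbitrary $\bm{v}, \bm{w} \in T_{(x,y,\theta)}(\mathbb{M}_2)$, I would feed the vectors $(L^{-1}_{(x,y,\theta)})_* \bm{v}$ and $(L^{-1}_{(x,y,\theta)})_* \bm{w}$, which live at $p_0$, into \eqref{eq:left_invariant_metric_tensor}. Using functoriality of the pushforward, $(L_g)_* \circ (L_{g^{-1}})_* = (L_{g g^{-1}})_* = (L_e)_* = \mathrm{id}$, the right-hand side collapses to $\mathcal{G}\vert_{(x,y,\theta)}(\bm{v}, \bm{w})$, while the left-hand side is exactly $\mathcal{G}\vert_{(0,0,0)}\!\left((L^{-1}_{(x,y,\theta)})_* \bm{v},\, (L^{-1}_{(x,y,\theta)})_* \bm{w}\right)$, giving the stated formula.

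For sufficiency — the clause ``for any choice of inner product'' — I would start from an arbitrary inner product $\mathcal{G}\vert_{(0,0,0)}$ on $T_{p_0}(\mathbb{M}_2)$, define $\mathcal{G}$ at every other point by the displayed formula, and check that the resulting $(0,2)$-tensor field satisfies \eqref{eq:left_invariant_metric_tensor}. Well-definedness is automatic precisely because $H$ is trivial: there is no coset ambiguity in the choice of representative, so the second condition of Corollary \ref{cor:limtf_properties} — which in general forces $H$-symmetry of $\mathcal{G}\vert_{p_0}$ — is vacuous and imposes \emph{no} constraint on the inner product. The invariance verification itself is again pure bookkeeping with the composition law $(L_{g_1})_* \circ (L_{g_2})_* = (L_{g_1 g_2})_*$.

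I do not anticipate a genuine obstacle: the whole statement is Corollary \ref{cor:limtf_properties} specialized to $H = \{e\}$, and the only point needing care is keeping the inverse pushforwards and their composition law consistent. If a concrete form is wanted, one can expand $\mathcal{G}\vert_{(0,0,0)}$ in the coframe dual to the left-invariant frame $\{\mathcal{A}_1, \mathcal{A}_2, \mathcal{A}_3\}$ of Proposition \ref{prop:livf_m2}; the six free entries of the symmetric positive-definite Gram matrix then parametrize all invariant metrics, making explicit the extra freedom relative to the three-parameter family of the $d>2$ case in Proposition \ref{prop:metricdlarger2}.
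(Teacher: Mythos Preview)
Your proposal is correct and follows the same approach as the paper, which simply observes in one line that since $SE(2)\equiv\mathbb{M}_2$ the $G$-invariant metric tensor fields are exactly the left-invariant metric tensor fields. You have unpacked this observation carefully, verifying both directions and noting explicitly that the $H$-symmetry constraint of Corollary~\ref{cor:limtf_properties} becomes vacuous when $H=\{e\}$, which is the substance behind the paper's terse sentence.
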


\begin{proof}
    Since $SE(2) \equiv \mathbb{M}_2$ the $G$-invariant metric tensor fields are again exactly the left-invariant metric tensor fields.
    \qed
\end{proof}

This gives $SE(2)$-invariant metric tensor fields 6 degrees of freedom and hence 6 trainable parameters on $\mathbb{M}_2$.
Remarkably, the case $d=2$ allows for more degrees of freedom than the case $d=3$ where Proposition~\ref{prop:metricdlarger2} applies.
In our experiments so far we have restricted ourselves to those metric tensors that are diagonal with respect to the frame from Prop.~\ref{prop:livf_m2}. A diagonal metric tensor would have just 3 degrees of freedom and have the same general form as \eqref{eq:md_metric}, specifically:
\begin{align} 
    \begin{split}
    \mathcal{G} \big\vert_{\left(x,y,\theta\right)}
    &
        \left(
            \vphantom{\big\vert}
            \left( \dot{x}, \dot{y}, \dot{\theta} \right)
            ,
            \left( \dot{x}, \dot{y}, \dot{\theta} \right)
        \right)
        =
        \\
        &\qquad \ w_M
        \left| \dot{x} \cos{\theta} 
        +\dot{y} \sin{\theta} \right|^2
        \\
        &\qquad +
        w_L
        \left| -\dot{x} \sin{\theta} 
        +\dot{y} \cos{\theta} \right|^2
        \\
        &\qquad +
        w_A
         |\dot{\theta}|^2
        .
    \end{split}
    \label{eq:metric_m2}
\end{align}

We will expand into non-diagonal metric tensors in future work.

%%%
%%%
%%%
%%%
%%%
%%%
\section{Architecture}
\label{sec:architecture}

\subsection{Lifting \& Projecting}

The key ingredient of what we call a PDE-G-CNN is the PDE layer that we detail in the next section, however to make a complete network we need more. Specifically we need a layer that transforms the network's input into a format that is suitable for the PDE layers and a layer that takes the output of the PDE layers and transforms it to the desired output format.
We call this input and output transformation \emph{lifting} respectively \emph{projection}, this yields the overall architecture of a PDE-G-CNN as illustrated in Fig.~\ref{fig:overall_architecture}. 

As our theoretical preliminaries suggest we aim to do processing on homogeneous spaces but the input and output of the network do not necessarily live on that homogeneous space. Indeed in the case of images the data lives on $\mathbb{R}^2$ and not on $\mathbb{M}_2$ where we propose to do processing.

This necessitates the addition of \emph{lifting} and \emph{projection} layers to first transform the input to the desired homogeneous space and end with transforming it back to the required output space.
Of course for the entire network to be equivariant we require these transformation layers to be equivariant as well. In this paper we focus on the design of the PDE layers, details on appropriate equivariant lifting and projection layers in the case of $SE(2)$ can be found in \cite{bekkers2018roto,smets2019msc}.

\begin{remark}[General equivariant linear transformations between homogeneous spaces]
\label{remark:general_lift_project}
    A general way to lift and project from one homogeneous space to another in a trainable fashion is the following.
    Consider two homogeneous spaces $G/H_1$ and $G/H_2$ of a Lie group $G$, let $f: G/H_1 \to \mathbb{R}$ and $k : G/H_2 \to \mathbb{R}$ with the following property:
    \begin{equation*}
        \forall h \in H_1,\, q \in G/H_2: k \left( h q \right) = k(q)
        ,
    \end{equation*}
    where $H_1$ is compact.
    Then the operator $\mathcal{T}$ defined by
    \begin{equation}
        \label{eq:hs_to_hs_convolution}
        \forall q \in G/H_2:
        \left( \mathcal{T} f \right)(q) 
        :=
        \int_G k \left( g^{-1} q \right) \, f\left( g H_1 \right) \, \d\mu_G(g)
    \end{equation}
    transforms $f$ from a function on $G/H_1$ to a function on $G/H_2$ in an equivariant manner (assuming $f$ and $k$ are such that the integral exists). Here the kernel $k$ is the trainable part and $\mu_G$ is the left-invariant Haar measure on the group.
    
    Moreover it can be shown via the Dunford-Pettis\cite{arendt1994integral} theorem that (under mild restrictions) all linear transforms between homogeneous spaces are of this form.
\end{remark}

\begin{remark}[Lifting and projecting on $\mathbb{M}_2$]
    Lifting an image (function) on $\mathbb{R}^2$ to $\mathbb{M}_2$ can either be performed by a non-trainable \emph{Invertible Orientation Score Transform} \cite{duits2007invertible} or a trainable lift \cite{bekkers2018roto} in the style of Remark \ref{remark:general_lift_project}.

    Projecting from $\mathbb{M}_2$ back down to $\mathbb{R}^2$ can be performed by a simple maximum projection: let \mbox{$f:\mathbb{M}_2 \to \mathbb{R}$} then
    \begin{equation}
        \label{eq:max_projection}
        (x,y) \mapsto \max_{\theta \in \left[0,2\pi\right)} f(x,y,\theta)
    \end{equation}
    is a roto-translation equivariant projection as used in \cite{bekkers2018roto}. A variation on the above projection is detailed in \cite[Ch. 3.3.3]{smets2019msc}.
\end{remark}

%%%
%%%
%%%
%%%
%%%
\subsection{PDE Layer}
\label{sec:layer}

A PDE layer operates by taking its inputs as the initial conditions for a set of evolution equations, hence there will be a PDE associated with each input feature. The idea is that we let each of these evolution equations work on the inputs up to a fixed time $T>0$. Afterwards, we take these solutions at time $T$ and take affine combinations (really batch normalized linear combinations in practice) of them to produce the outputs of the layer and as such the initial conditions for the next set of PDEs. 

If we index network layers (i.e. the depth of the network) with $\ell$ and denote the width (i.e. the number of features or channels) at layer $\ell$ with $M_{\ell}$ then we have $M_{\ell}$ PDEs and take $M_{\ell+1}$ linear combinations of their solutions.
We divide a PDE layer into the PDE solvers that each apply the PDE evolution to their respective input channel and the affine combination unit. This design is illustrated in Fig. \ref{fig:traditional}, but let us formalize it. 

Let $\left(U_{\ell,c}\right)_{c=1}^{M_\ell}$ be the inputs of the $\ell$-th layer (i.e. some functions on $G/H$), let $a_{\ell i j}$ and $b_{\ell i} \in \mathbb{R}$ be the coefficients of the affine transforms for \mbox{$i=1 \ldots M_{\ell+1}$} and $j=1 \ldots M_{\ell}$. Let each PDE be parametrized by a set of parameters $\theta_{\ell j}$. Then the action of a PDE layer is described as:
\begin{equation}
    \label{eq:pde-layer}
    U_{\ell+1,i} = \sum_{j=1}^{M_\ell} a_{\ell i j} \Phi_{T,\theta_{\ell j}} \left( U_{\ell j} \right) + b_{\ell i}
    ,
\end{equation} where $\Phi_{T,\theta}$ is the evolution operator of the PDE at time $T\geq 0$ and parameter set $\theta$. We define the operator $\Phi_{t,\theta}$ so that $(p,t) \mapsto \left( \Phi_{t,\theta} U \right)(p)$ satisfies the Hamilton-Jacobi type PDE that we introduce in just a moment. In this layer formula the parameters $a_{\ell i j}$, $b_{\ell i}$ and $\theta_{\ell j}$ are the trainable weights, but the evolution time $T$ we keep fixed.

It is essential that we require the network layers, and thereby all the PDE units, to be \emph{equivariant}. This has consequences for the class of PDEs that is allowed.

The PDE solver that we will consider in this article, illustrated in Fig.~\ref{fig:cdde-layer}, computes the approximate solution to the PDE
\begin{flalign}
    \label{eq:cnn_pde}
    \begin{cases}
        \begin{aligned}[b]
        \frac{\partial W}{\partial t} (p,t)
        = \ 
        &\tikznode[inner sep=1pt, fill=blue!8]{N1}{\vphantom{\bigg|}\  - \bm{c} W(p,t) \ \ }
        &\tikznode[inner sep=1pt]{tag1}{\text{(convection)}}
        \\
        &\tikznode[inner sep=1pt, fill=orange!8]{N2}{\vphantom{\bigg|}- \left( - \Delta_{\mathcal{G}_1} \right)^{\alpha} W (p,t)\ }
        &\tikznode[inner sep=1pt]{tag2}{\text{(diffusion)}}
        \\
        &\tikznode[inner sep=1pt, fill=green!8]{N3}{\vphantom{\bigg|}+ \left\Vert \nabla_{\mathcal{G}_2^+} W (p,t) \right\Vert^{2 \alpha}_{\mathcal{G}_2^+}\ }
        &\tikznode[inner sep=1pt]{tag3}{\text{(dilation)}}
        \\
        &\tikznode[inner sep=1pt, fill=cyan!8]{N4}{\vphantom{\bigg|}- \left\Vert \nabla_{\mathcal{G}_2^-} W (p,t) \right\Vert^{2 \alpha}_{\mathcal{G}_2^-}\ }
        &\tikznode[inner sep=1pt]{tag4}{\text{(erosion)}}
        \end{aligned}
        \\
        \hfill\text{for } p \in G/H,\ t \geq 0,
        \\
        W(p,0) = U(p) \hfill\text{for } p \in G/H.
    \end{cases}
\end{flalign}
\begin{tikzpicture}[overlay,remember picture,very thick]
    \draw[blue!50] (tag1.south west) -- (tag1.south east);
    %\draw[-latex, blue!50] (tag1.south) --(N1);
    %
    \draw[orange!50] (tag2.south west) -- (tag2.south east);
    %\draw[-latex, orange!50] (tag2.south) --(N2);
    %
    \draw[green!50] (tag3.south west) -- (tag3.south east);
    %\draw[-latex, green!50] (tag3.south) --(N3);
    %
    \draw[cyan!50] (tag4.south west) -- (tag4.south east);
    %\draw[-latex, cyan!50] (tag4.south) --(N4);
\end{tikzpicture}
Here, $\bm{c}$ is a $G$-invariant vector field on $G/H$ (recall \eqref{eq:livf_md} and our use of tangent vectors as differential operators per Remark \ref{remark:tangent_vectors}), $\alpha \in \left[\sfrac{1}{2}, 1 \right]$, $\mathcal{G}_1$ and $\mathcal{G}_2^\pm$ are $G$-invariant metric tensor fields on $G/H$, $U$ is the initial condition and $\Delta_{\mathcal{G}}$ and $\Vert \cdot \Vert_{\mathcal{G}}$ denote the Laplace-Beltrami operator and norm induced by the metric tensor field $\mathcal{G}$.
As the labels indicate, the four terms have distinct effects:
\begin{itemize}
    \item convection: moving data around,
    \item (fractional) diffusion: regularizing data (which relates to subsampling by destroying data),
    \item dilation: pooling of data,
    \item erosion: sharpening of data.
\end{itemize}
This is also why we refer to a layer using this PDE as a CDDE layer. Summarized the parameters of this PDE are given by $\theta=\left( \bm{c},\ \mathcal{G}_{1},\  \mathcal{G}^+_{2},\  \mathcal{G}^-_{2} \right)$.
The geometric interpretation of each of the terms in \eqref{eq:cnn_pde} is illustrated in Fig. \ref{fig:interpretation_r2} for $G = \mathbb{R}^2$ and in Fig.~\ref{fig:interpretation_m2} for $G=\mathbb{M}_2$.

Since the convection vector field $\bm{c}$ and the metric tensor fields $\mathcal{G}_1$ and $\mathcal{G}_2^{\pm}$ are $G$-invariant, the PDE unit, and so the network layer, is automatically equivariant.

\begin{figure*}[ht]
    \centering
    \includegraphics[width=0.9\linewidth]{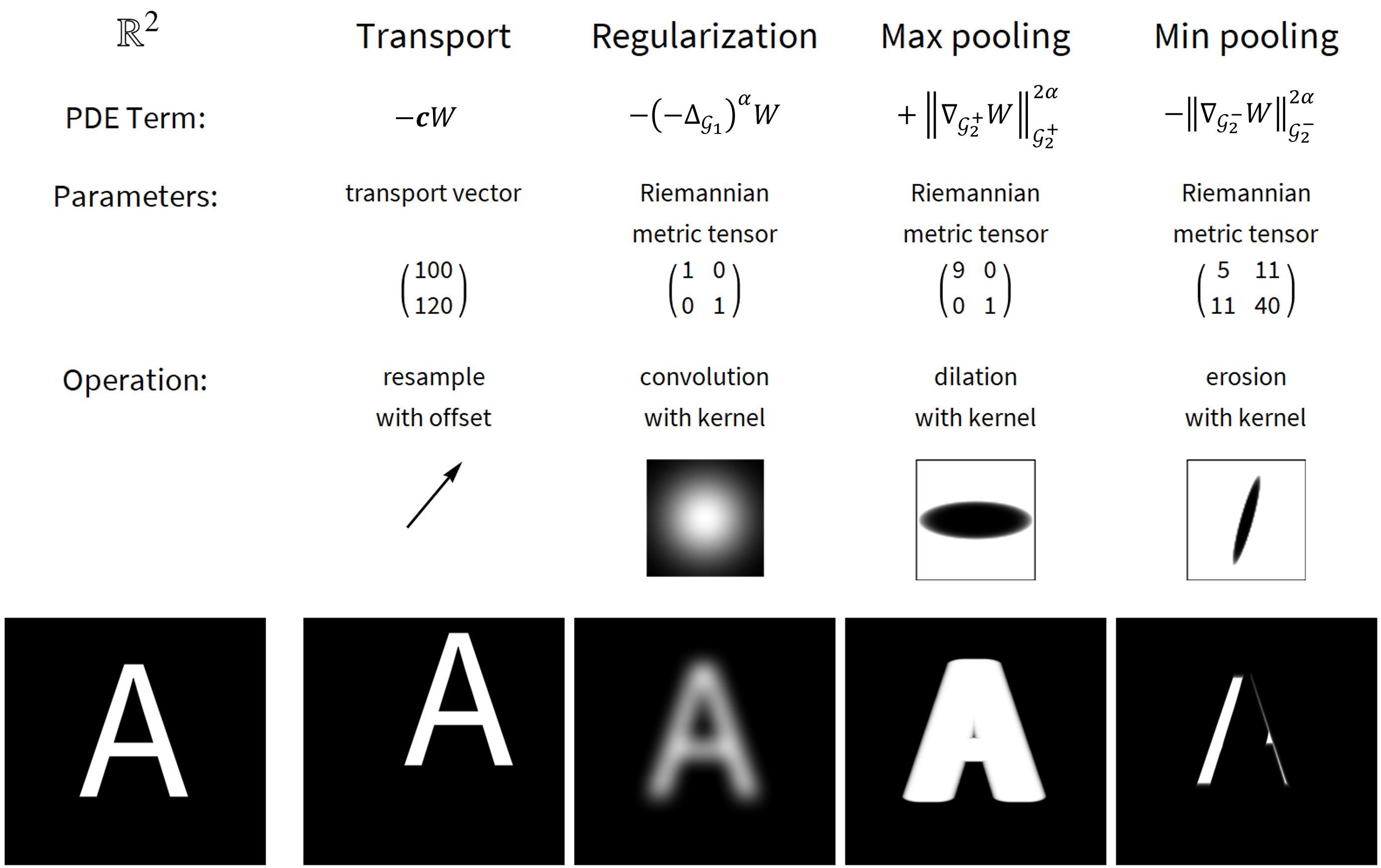}
    \caption[Geometric interpretation of the PDE]{Geometric interpretation of the terms of the PDE \eqref{eq:cnn_pde} illustrated for $\mathbb{R}^2$. In this setting the $G$-invariant vector field $\bm{c}$ is the constant vector field given by two translation parameters. For the other terms we use Riemannian metric tensors parametrized by a positive definite $2 \times 2$ matrix in the standard basis. The kernels used in the diffusion, dilation and erosion terms are functions of the distance-map induced by the metric tensors.}
    \label{fig:interpretation_r2}
\end{figure*}

\begin{figure*}[ht]
    \centering
    \includegraphics[width=0.9\linewidth]{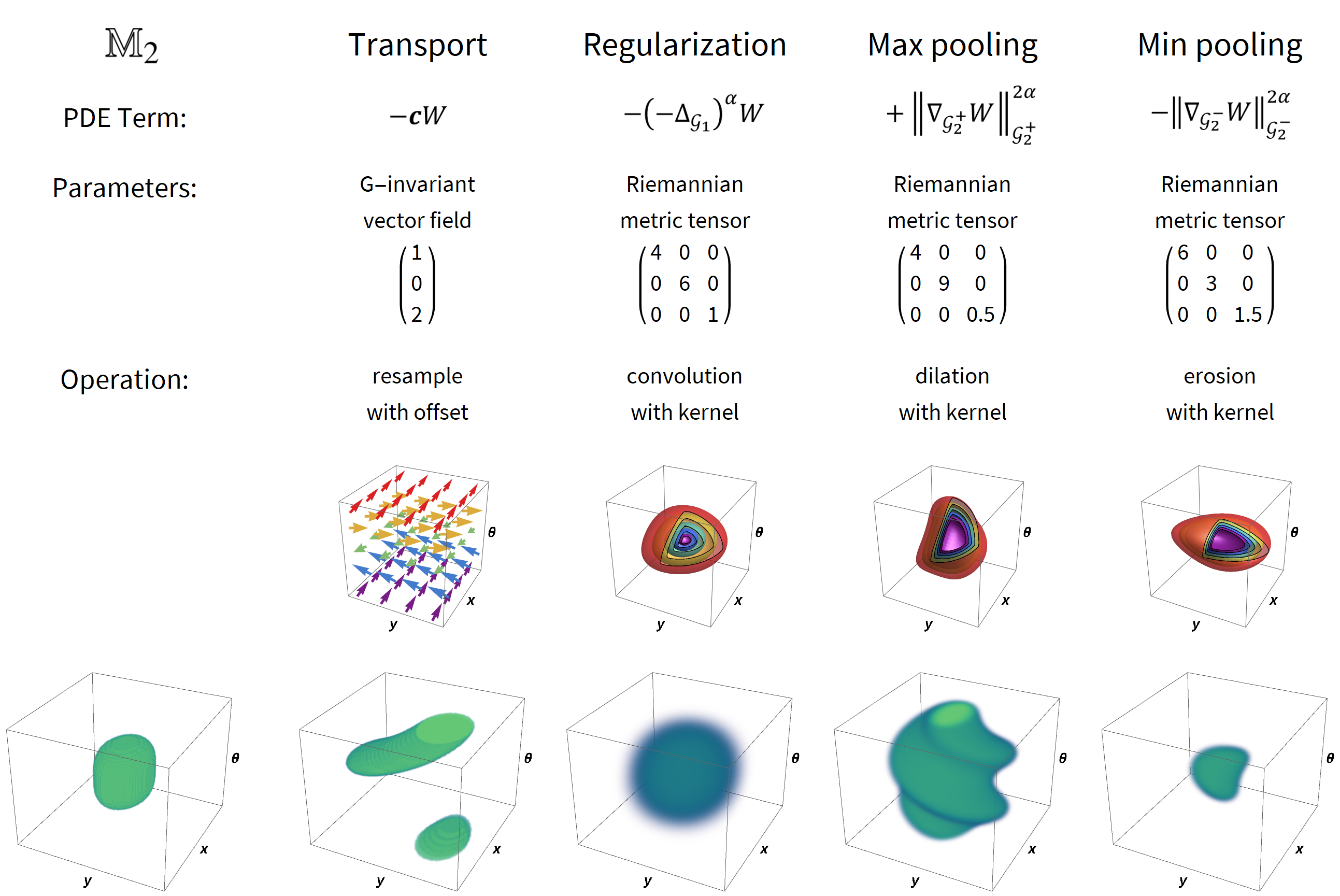}
    \caption[Geometric interpretation of the PDE]{Geometric interpretation of the terms of the PDE \eqref{eq:cnn_pde} illustrated for $\mathbb{M}_2$. In this setting the $G$-invariant vector field $\bm{c}$ is a left-invariant vector field given by two translation and one rotation parameter. For the other terms we use Riemannian metric tensors parametrized by a positive definite $3 \times 3$ matrix in the left-invariant basis (the matrix does not need to be diagonal but we keep that for future work). The kernels used in the diffusion, dilation and erosion terms are functions of the distance-map induced by the metric tensors and are visualized by partial plots of their level sets.}
    \label{fig:interpretation_m2}
\end{figure*}

\subsection{Training}

Training the PDE layer comes down to adapting the parameters in the PDEs in order to minimize a given loss function (the choice of which depends on the application and we will not consider in this article). In this sense, the vector field and the metric tensors are analogous to the weights of this layer.

Since we required the convection vector field and the metric tensor fields to be $G$-invariant, the parameter space is finite-dimensional as a consequence of Cor. \ref{cor:livf_properties} and \ref{cor:limtf_properties} if we restrict ourselves to Riemannian metric tensor fields.

For our main application on $\mathbb{M}_2$ each PDE unit would have the following $12$ trainable parameters:
\begin{itemize}
    \item 3 parameters to specify the convection vector field as a linear combination of \eqref{eq:livf_m2},
    \item 3 parameters to specify the fractional diffusion metric tensor field $\mathcal{G}_1$,
    \item and 3 parameters each to specify the dilation and erosion metric tensor fields $\mathcal{G}_2^{\pm}$,
\end{itemize}
where the metric tensor fields are of the form \eqref{eq:metric_m2} that are diagonal with respect to the frame from Prop.~\ref{prop:livf_m2}.

Surprisingly for higher dimensions $\mathbb{M}_d$ has less trainable parameters than for $d=2$. This is caused by the $SE(d)$-invariant vector fields on $\mathbb{M}_d$ for $d \geq 3$ being spanned by a single basis element (per Proposition~\ref{prop:livmd}) instead of the three \eqref{eq:livf_m2} basis elements available for $d=2$. Since the left-invariant metric tensor fields are determined by only $3$ parameters irrespective of dimensions we count a total of $7$ parameters for each PDE unit for applications on $\mathbb{M}_d$ for $d \geq 3$.

In our own experiments we always use some form of stochastic gradient descent (usually ADAM) with a small amount of $L^2$ regularization applied uniformly over all the parameters. Similarly we stick to a single learning rate for all the parameters. Given that in our setting different parameters have distinct effects treating all of them the same is likely far from optimal, however we leave that topic for future investigation.

%%%
%%%
%%%
%%%
\section{PDE Solver}
\label{sec:pde_solver}

Our PDE solver will consist of an iteration of time step units, each of which is a composition of convection, diffusion, dilation and erosion substeps. These units all take their input as an initial condition of a PDE, and produce as output the solution of a PDE at time $t=T$. 

The convection, diffusion and dilation/erosion steps are implemented with respectively a shifted resample, linear convolution, and two morphological convolutions, as illustrated in Fig.~\ref{fig:operator_splitting}.
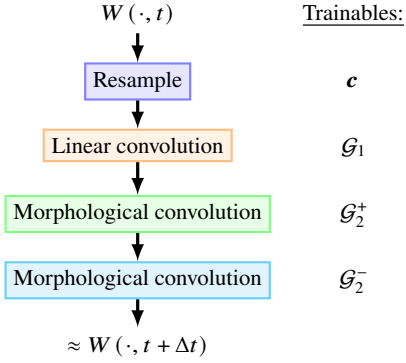
\begin{figure}[ht]
\begin{center}
\renewcommand*{\arraystretch}{2.5}
\begin{tabular}{cc}
    \tikznode[inner sep=3pt]{A}{W(\cdot,t)} 
    & 
    \underline{Trainables:}
    \\
    \tikznode[inner sep=3pt, fill=blue!10, draw=blue!50, thick]{B}{\textrm{Resample}}
    &
    $\bm{c}$
    \\
    \tikznode[inner sep=3pt, fill=orange!10, draw=orange!50, thick]{C}{\textrm{Linear convolution}}
    &
    $\mathcal{G}_1$
    \\
    \tikznode[inner sep=3pt, fill=green!10, draw=green!50, thick]{D}{\textrm{Morphological convolution}}
    &
    $\mathcal{G}_2^+$
    \\
    \tikznode[inner sep=3pt, fill=cyan!10, draw=cyan!50, thick]{E}{\textrm{Morphological convolution}}
    &
    $\mathcal{G}_2^-$
    \\
    \tikznode[inner sep=3pt]{F}{\approx W(\cdot,t+\Delta t)}
    &
\end{tabular}
\begin{tikzpicture}[overlay,remember picture,very thick]
    \draw[-latex, black] (A.south) --(B);
    \draw[-latex, black] (B.south) --(C);
    \draw[-latex, black] (C.south) --(D);
    \draw[-latex, black] (D.south) --(E);
    \draw[-latex, black] (E.south) --(F);
\end{tikzpicture}
\end{center}
\caption{Evolving the PDE through operator splitting, each operation corresponds to a term of \eqref{eq:cnn_pde}.}
\label{fig:operator_splitting}
\end{figure}
The composition of the substeps does not  solve \eqref{eq:cnn_pde} exactly, but for small $\Delta t$, it approximates the solution by a principle called \emph{operator splitting}.

We will now discuss each of these substeps separately.

%%%
%%%
%%%
\subsection{Convection}

The convection step has as input a function $U_1:G/H \to \mathbb{R}$ and takes it as initial condition of the PDE
\begin{equation}
\label{eq:convection_pde}
%\tikznode[inner sep=3pt, fill=blue!0, draw=blue!50, thick]{PDE_CONV}{
    \begin{cases}
    \frac{\partial W_1}{\partial t}(p,t)
    =
    - \bm{c}(p)  W_1(\,\cdot\, ,t) \ &\text{for } p \in G/H, t \geq 0,
    \\[1em]
    W_1(p,0) = U_1(p) & \text{for } p \in G/H.
    \end{cases}
%}
\end{equation}
The output of the layer is the solution of the PDE evaluated at time $t=T$, i.e. the output is the function $p \mapsto W_1(p,T)$.

\begin{proposition}[Convection solution]
The solution of the convection PDE \eqref{eq:convection_pde} is found by the method of characteristics, and is given by
\begin{align} 
    \nonumber
    W^{1}(p,t) 
    &= \left(\mathcal{L}_{g_{p}^{-1}} U_1\right)\left( \gamma_{\bm{c}}(t)^{-1} p_0 \right) 
    \\
    &= U_1 \left( g_p \, \gamma_{\bm{c}}(t)^{-1}  p_0 \right)
    \\
    &= U_1 \left( g_p \, \gamma_{-\bm{c}}(t)  p_0 \right),
    \label{eq:convection_solution}
\end{align}
where $g_p \in p$ (i.e. $g_p p_0=p$) and $\gamma_{\bm{c}}:\mathbb{R} \to G$ is the exponential  curve that satisfies $\gamma_{\bm{c}}(0)=e$ and
\begin{equation}
    \frac{\partial}{\partial t} \left( \gamma_{\bm{c}}(t)  p \right)(t)
    =
    \bm{c} \left( \gamma_{\bm{c}}(t)  p \right),
\end{equation}
i.e. $\gamma_{\bm{c}}$ is the exponential curve in the group $G$ that induces the integral curves of the $G$-invariant vector field $\bm{c}$ on $G/H$ when acting on elements of the homogeneous space. 
\end{proposition}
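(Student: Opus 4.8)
The plan is to solve the first-order linear transport equation \eqref{eq:convection_pde} by the method of characteristics, reading $\bm{c}$ as the directional-derivative operator attached to the $G$-invariant vector field (Remark~\ref{remark:tangent_vectors}). The first task is to identify the flow of $\bm{c}$ on $G/H$. Since $\bm{c}$ is $G$-invariant, its integral curve through $p=g_p p_0$ ought to be the $L_{g_p}$-image of the integral curve through the base point $p_0$; using the invariance relation \eqref{eq:left_invariant_vector_field} together with the defining property of the exponential curve $\gamma_{\bm{c}}$, I would show that the integral curve through $p$ is $t \mapsto g_p\,\gamma_{\bm{c}}(t)\,p_0$. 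The one-parameter-subgroup property $\gamma_{\bm{c}}(t+s)=\gamma_{\bm{c}}(t)\gamma_{\bm{c}}(s)$ is exactly what promotes this from a curve with the right velocity at $t=0$ to a genuine integral curve for all $t$. Note that, because $G$ is noncommutative, the correct flow is the \emph{right} translate $g_p\,\gamma_{\bm{c}}(t)\,p_0$ and not the left translate $\gamma_{\bm{c}}(t)\,p$, so the ordering must be tracked carefully.

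Given the flow, the transport equation forces solutions to be constant along characteristics, so propagating the data backwards by time $t$ gives the candidate $W^1(p,t)=U^1\!\left(g_p\,\gamma_{\bm{c}}(t)^{-1} p_0\right)$, which is precisely \eqref{eq:convection_solution} (the two displayed forms coincide at once from $\mathcal{L}_{g_p^{-1}}U^1(q)=U^1(g_p q)$). Rather than invoke a black-box characteristics theorem, I would verify this candidate directly. The initial condition $W^1(p,0)=U^1(p)$ is immediate from $\gamma_{\bm{c}}(0)=e$. For the evolution equation I would set $F(\tau):=U^1\!\left(g_p\,\gamma_{\bm{c}}(\tau)p_0\right)$, so that $W^1(p,t)=F(-t)$ and $\partial_t W^1(p,t)=-F'(-t)$; separately I would expand the right-hand side $-\bm{c}(p)\,W^1(\cdot,t)$ by writing $\bm{c}(p)=(L_{g_p})_*\bm{c}(p_0)$ and applying the pushforward identity \eqref{eq:pushforward} to transfer the differentiation onto the curve $\epsilon\mapsto\gamma_{\bm{c}}(\epsilon)p_0$. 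Collapsing $\gamma_{\bm{c}}(\epsilon)\gamma_{\bm{c}}(t)^{-1}=\gamma_{\bm{c}}(\epsilon-t)$ shows both computations return $-F'(-t)$, establishing the PDE.

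The step I expect to demand the most care is not the differentiation but the \emph{well-definedness} of \eqref{eq:convection_solution} on cosets: the formula selects a representative $g_p\in p$, and replacing $g_p$ by $g_p h$ with $h\in H$ must leave the value unchanged. This reduces to the claim $h\,\gamma_{\bm{c}}(t)=\gamma_{\bm{c}}(t)\,h$ for $h\in H$, equivalently that the generator of $\gamma_{\bm{c}}$ is $\mathrm{Ad}(H)$-invariant. Here I would exploit the standing assumption that $H$ is compact, so that an $\mathrm{Ad}(H)$-invariant reductive splitting $\gothic{g}=\gothic{h}\oplus\gothic{m}$ with $\mathrm{Ad}(H)\gothic{m}\subseteq\gothic{m}$ exists, and take $\gamma_{\bm{c}}$ to be the exponential of the horizontal lift in $\gothic{m}$ of $\bm{c}(p_0)$. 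The $H$-invariance of $\bm{c}(p_0)$ supplied by Corollary~\ref{cor:livf_properties} then upgrades to exact $\mathrm{Ad}(H)$-invariance of that generator, giving $h\,\gamma_{\bm{c}}(t)=\gamma_{\bm{c}}(t)\,h$ and hence $g_p h\,\gamma_{\bm{c}}(t)^{-1}p_0=g_p\,\gamma_{\bm{c}}(t)^{-1}h\,p_0=g_p\,\gamma_{\bm{c}}(t)^{-1}p_0$, since $h\,p_0=p_0$. The same identity shows the flow $g_p\,\gamma_{\bm{c}}(t)\,p_0$ used above is itself representative-independent. Uniqueness finally follows because any solution is constant along the characteristics, which foliate $G/H$, so it is entirely determined by $U^1$.
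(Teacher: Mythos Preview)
Your proof is correct and follows the paper's approach: both verify the candidate formula directly by differentiating in $t$, invoking the one-parameter-group property $\gamma_{\bm{c}}(t+s)=\gamma_{\bm{c}}(t)\gamma_{\bm{c}}(s)$ and the $G$-invariance of $\bm{c}$ to recognise the result as $-\bm{c}(p)\,W^1(\cdot,t)$ (the paper packages the computation through the auxiliary $\bar{U}=\mathcal{L}_{\gamma_{\bm{c}}(t)g_p^{-1}}U^1$ where you use $F(\tau)=U^1(g_p\gamma_{\bm{c}}(\tau)p_0)$, but the substance is identical). You go beyond the paper by checking that \eqref{eq:convection_solution} is independent of the choice of representative $g_p\in p$, via $\mathrm{Ad}(H)$-invariance of the generator in a reductive complement, and by supplying a uniqueness argument from the characteristic foliation; both additions are sound and close gaps the paper leaves implicit.
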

Note that this exponential curve existing is a consequence of the vector field $\bm{c}$ being $G$-invariant, such exponential curves do not exist for general convection vector fields.

\begin{proof}
\begin{align*}
    \frac{\partial W_1}{\partial t} (p,t)
    &=
    \lim_{h \to 0}
    \frac{W_1(p,t+h)-W_1(p,t)}{h}
    \\
    &\hspace{-3.5em}=
    \lim_{h \to 0}
    \frac{
        U_1 \left( g_p \, \gamma_{\bm{c}}(t+h)^{-1} p_0 \right)
        -
        U_1 \left( g_p \, \gamma_{\bm{c}}(t)^{-1} p_0 \right)
    }{h}
    \\
    &\hspace{-3.5em}=
    \lim_{h \to 0}
    \frac{
        U_1 \left(  g_p \, \gamma_{\bm{c}}(t)^{-1} \, \gamma_{\bm{c}}(h)^{-1} p_0 \right)
        -
        U_1 \left( g_p \, \gamma_{\bm{c}}(t)^{-1} p_0 \right)
    }{h}
    ,
\intertext{now let $\bar{U}:=\mathcal{L}_{\gamma_{\bm{c}}(t)\,g_p^{-1}} U_1$, then}
    &=
    \lim_{h \to 0}
    \frac{
        \bar{U} \left(  \gamma_{\bm{c}}(h)^{-1} p_0 \right)
        -
        \bar{U} \left(p_0 \right)
    }{h}
    \\
    &=
    - \bm{c}(p_0) \, \bar{U}
    \\
    &=
    - \left( L_{g_p} \right)_* \bm{c}(p_0)
    \ \mathcal{L}_{g_p} \bar{U}
\intertext{due to the $G$-invariance of $\bm{c}$ this yields}
    &=
    - \bm{c}(p) \, \mathcal{L}_{g_p} \, \mathcal{L}_{\gamma_{\bm{c}}(t)\,g_p^{-1}} U_1
    \\
    &=
    - \bm{c}(p) \,
    \left[
        p \mapsto
        U_1 \left(
            g_p \gamma_{\bm{c}}(t)^{-1} g_p^{-1} p
        \right)
    \right]
    \\
    &=
    - \bm{c}(p) \,
    \left[
        p \mapsto
        U_1 \left(
            g_p \gamma_{\bm{c}}(t)^{-1} p_0
        \right)
    \right]
    \\
    &=
    - \bm{c}(p) \, W_1 (\cdot, t).
\end{align*}
\qed
\end{proof}

In our experiments equation \eqref{eq:convection_solution} is numerically implemented as a resampling operation with trilinear interpolation to account for the off-grid coordinates.

%%%
%%%
%%%
\subsection{Fractional Diffusion}

The (fractional) diffusion step solves the PDE
\begin{equation}
\label{eq:diffusion_system}
%\tikznode[inner sep=10pt, fill=orange!0, draw=orange!50, thick]{PDE_DIFF}{
    \begin{cases}
    \frac{\partial W_2}{\partial t}
    =
    - \left( - \Delta_{\mathcal{G}_1} \right)^{\alpha} W_2 (p,t) \ 
    & \text{for } p \in G/H, t \geq 0, 
    \\[1em]
    W_2(p,0) = U_2(p) 
    & \text{for } p \in G/H.
    \end{cases}
%}
\end{equation}
As with (fractional) diffusion on $\mathbb{R}^n$, there exists a smooth function
\[
K_{\cdot}^{\alpha} : (0,\infty) \times (G/H) \to [0,\infty),
\]
called the fundamental solution of the $\alpha$-diffusion equation, such that for every initial condition $U_2$, the solution to the PDE \eqref{eq:diffusion_system} is given by the convolution of the function $U_2$ with the fundamental solution $K_{t}^{\alpha}$:
\begin{equation} \label{eq:diffusion_solution}
    W^{2}(p,t) = \left( K_{t}^{\alpha} *_{G/H} U_2  \right)(p).
\end{equation}
The convolution $*_{G/H}$ on a homogeneous space $G/H$ is specified by the following definition.

\begin{definition}[Linear group convolution]
    \label{def:linear_convolution}
    Let $p_0=H$ be compact, let $f \in L^2 \left( G/H \right)$ and $k \in L^1 \left( G/H \right)$ such that:
    \begin{equation*}
        \forall h \in H,\ p \in G/H: k\left( h  p \right)=k\left( p \right)
        \tag{kernel compatibility}
    \end{equation*}
    then we define:
    \begin{equation}
        \label{eq:linear_group_convolution}
        \left( k *_{G/H} f \right) \left( p \right) %BART PLEASE CHECK
        :=
        \frac{1}{\mu_H(H)}\; \int_G 
        k \left( g^{-1}  p \right)
        \ 
        f \left( g  p_0 \right)
        \d\mu_G(g)
        ,
    \end{equation}
    where $\mu_H$ and $\mu_G$ are the left-invariant Haar measures (determined up to scalar-multiplication) on $H$ respectively $G$.
\end{definition}
\begin{remark}
In the remainder of this article we refer the to the left-invariant Haar measure on $G$ as `the Haar measure on $G$' as right-invariant Haar measures on $G$ do not play a role in our framework.
\end{remark}

\begin{remark} \label{rem:Weil}
Compactness of $H$ is crucial as otherwise the integral in the righthand side of (\ref{eq:linear_group_convolution}) does not converge.
To this end we note that one can
always decompose (by Weil's integral formula  \cite[Lem. 2.1]{fuhr2005abstract}) the Haar measure $\mu_{G}$ on the group as a product of a measure on the quotient $G/H$ times the measure on the subgroup $H$. As Haar-measures are determined up to a constant we take the following convention:
we normalize the Haar-measure $\mu_G$ such that
\begin{equation} \label{eq:PFmeasure}
\mu_{G}\left( \pi^{-1}(A) \right) = \mu_H(H) \, \mu_{\mathcal{G}} \left( A \right),
\qquad \forall A \subset G/H,
\end{equation}
where $\mu_{\mathcal{G}}$ is the Riemannian measure induced by $\mathcal{G}$ and $\mu_H$ is a choice of Haar measure on $H$.
Thereby (\ref{eq:PFmeasure}) boils down to
 Weil's integration formula:
\begin{equation}
\mu_H(H) \int_{G/H} f(p) \; {\rm d}\mu_{\mathcal{G}}(p) =\int_{G} f(gH)\; {\rm d}\mu_G(g)
\end{equation}
whenever $f$ is measurable. Since $H$ is compact we can indeed normalize the Haar measure $\mu_H$ so that $\mu_H(H)=1$.
\end{remark}
In general an exact analytic expression for the fundamental solution $K_{t}^{\alpha}$ requires complicated steerable filter operators \cite[Thm. 1 \& 2]{duits2019fourier} and for that reason we contend ourselves with more easily computable approximations. 
For now let us construct our approximations and address their quality and the involved asymptotics later.
\begin{remark}
In the approximations we will make use of logarithmic map as the inverse of the Lie group exponential map $\exp_G$. 
Locally, such inversion can always be done by the inverse function theorem.
Specifically, there is always a neighborhood $V \subset T_e G$ of the origin so that $\exp_G \vert_V$ is a diffeomorphism between $V$ and $W=\exp_G(V) \subset G$, where $W$ is a neighborhood of $e$.
Then we define the logarithmic map $\log_G :W \to V$
by $\textrm{exp}_G \circ \log_{G} =
\textrm{id}_W$
%\right|_{G_0}$
and $
%\left.
\log_{G}
%\right|_{G_0} 
\circ \left.\textrm{exp}_G \right|_{V}=\textrm{id}_{V}$.
For the moment, for simplicity, we assume $V=T_e(G)$ in the general setting\footnote{In our primary case of interest $G=SE(2)$
we have $V=\{\sum_{k=1}^3 c^k A_k \;|\; c^{3} \in [-\pi,\pi)\}$.
}. 
\end{remark}

The idea is that instead of basing our kernels on the metric $d_{\mathcal{G}}$ (which is hard to calculate \cite{duits2018optimal}) we approximate it using the seminorm from Def.~\ref{def:seminorm} (which is easy to calculate). We can use this seminorm on elements of the homogeneous space by using the group's logarithmic map $\log_G$. We can take the group logarithm of all the group elements that constitute a particular equivalence class of $G/H$ and then pick the group element with the lowest seminorm:
\begin{equation} \label{eq:approxd}
    d_{\mathcal{G}} \left( p_0, p \right)
    \approx
    \inf_{g \in p} \left\Vert \log_G g \right\Vert_{\tilde{\mathcal{G}}}.
\end{equation}
Henceforth, we write this estimate as $d_{\mathcal{G}} \left( p_0, p \right) \approx \rho_{\mathcal{G}}(p)$ relying on the following definition.
\begin{definition}[Logarithmic metric estimate]
    \label{def:metric_estimate}
    Let $\mathcal{G}$ be a $G$-invariant metric tensor field on the homogeneous space $G/H$, then we define
    \begin{equation} \label{eq:rho}
        \begin{split}
        \rho_{\mathcal{G}}(p) 
        &:= 
        \inf_{g \in p} \left\Vert \log_G g \right\Vert_{\tilde{\mathcal{G}}}
        \\
        &:=
        \inf_{g \in p}
        \sqrt{\mathcal{G}\left( \pi_* \log_G g,\ \pi_* \log_G g \right)}
        ,
        \end{split}
    \end{equation}
\end{definition}
where $\pi_*$ is the push-forward of the projection map $\pi$ given by (\ref{eq:quotientmap}).

We can interpret this metric estimate as finding all exponential curves in $G$ whose actions on the homogeneous space connect $p_0$ (at $t=0$) to $p$ (at $t=1$) and then from that set we choose the exponential curve that has the lowest constant velocity according to the seminorm in Def. \ref{def:seminorm} and use that velocity as the distance estimate. 
%Or more succinctly: instead of the length of the geodesic connecting two points of $G/H$ we take the length of the shortest exponential curve %connecting those two points. 

Summarizing, Def.~\ref{def:metric_estimate} and Eq.~(\ref{eq:approxd}), can be intuitively reformulated as: `instead of the length of the geodesic connecting two points of $G/H$ we take the length of the shortest exponential curve connecting those two points'.

The following lemma quantifies how well our estimate approximates the true metric.
\iffalse
\begin{lemma}[Bounding the logarithmic metric estimate] \label{th:core}
     \\ \noindent
    \label{lem:bounding-metric}
    For all compact neighborhoods of $p_0$ away from the cut-locus there exists a $C_{\mathrm{metr}}\geq 1$ so that
    \begin{equation*}
        d_{\mathcal{G}}(p_0, p) \leq \rho_{\mathcal{G}}(p) \leq C_{\mathrm{metr}} d_{\mathcal{G}}(p_0,p)
    \end{equation*}
    for all $p$ in that neighborhood.
\end{lemma}
\fi

\begin{lemma}[Bounding the logarithmic metric estimate] \label{th:core}
    \noindent
    \label{lem:bounding-metric}
   For all $p \in G/H$ sufficiently close to $p_0$ we have
   \begin{equation*}
       d_{\mathcal{G}}(p_0,p)^2
       \leq
       \rho_{\mathcal{G}}(p)^2
       \leq
       d_{\mathcal{G}}(p_0,p)^2
       +
       O \left( d_{\mathcal{G}}(p_0,p)^4 \right)
       ,
   \end{equation*}
   which has as a weaker corollary that for all compact neighborhoods of $p_0$ there exists a $C_{\mathrm{metr}}>1$ so that
   \begin{equation*}
        d_{\mathcal{G}}(p_0, p) \leq \rho_{\mathcal{G}}(p) \leq C_{\mathrm{metr}} \, d_{\mathcal{G}}(p_0,p)
    \end{equation*}
    for all $p$ in that neighborhood.
    Note that the constant $C_{\mathrm{metr}}$ depends on both the choice of compact neighborhood and the metric tensor field. 
\end{lemma}

The proof of this lemma can be found in Appendix~\ref{appendix:proof-of-bounding-lemma}. % due to its length.

\begin{remark}[Logarithmic metric estimate in principal homogeneous spaces]
    When we take a principal homogeneous space such as $\mathbb{M}_2 \equiv SE(2)$ with a left-invariant metric tensor field the metric estimate simplifies to
    \begin{equation*}
        \label{eq:metric_estimate}
        \rho_{\mathcal{G}}(g)
        =
        \left\Vert
            \log_G g
        \right\Vert_{\mathcal{G}\vert_{e}}
        ,
    \end{equation*}
    hence we see that this construction generalizes the logarithmic estimate, as used in \cite{portegies2015new,terelst1998weighted}, to homogeneous spaces other than the principal.
\end{remark}

\begin{remark}[Logarithmic metric estimate for $\mathbb{M}_2$]
    Using the $(x,y,\theta)$ coordinates for $\mathbb{M}_2$ and a left-invariant metric tensor field of the form \eqref{eq:metric_m2} we formulate the metric estimate in terms of the following auxiliary functions called the exponential coordinates of the first kind:
    \begin{align*} \label{eq:ctjes}
        c^1(x,y,\theta)
        &:=
        \begin{cases}
            \frac{\theta}{2} \left( y + x \cot{\frac{\theta}{2}} \right)
            \qquad &\text{if } \theta \neq 0,
            \\
            x
            &\text{if } \theta = 0,
        \end{cases}
        \\
        c^2(x,y,\theta)
        &:=
        \begin{cases}
            \frac{\theta}{2} \left( -x + y \cot{\frac{\theta}{2}} \right)
            \qquad &\text{if } \theta \neq 0,
            \\
            y
            &\text{if } \theta = 0,
        \end{cases}
        \\
        c^3(x,y,\theta) &:= \theta.
    \end{align*}
    The logarithmic metric estimate for $SE(2)$ is then given by
    \begin{gather*}
        \rho_{\mathcal{G}}(x,y,\theta)
        =
        \\
        \sqrt{w_M \ c^1(x,y,\theta)^2 + w_L \ c^2(x,y,\theta)^2 + w_A \ c^3(x,y,\theta)^2},
    \end{gather*}
    this estimate is illustrated in Fig. \ref{fig:distance_comparison} where it is contrasted against the exact metric.
\end{remark}

\begin{figure}[ht]
    \centering
    \begin{subfigure}{0.45\linewidth}
        \centering
        \includegraphics[width=0.9\linewidth]{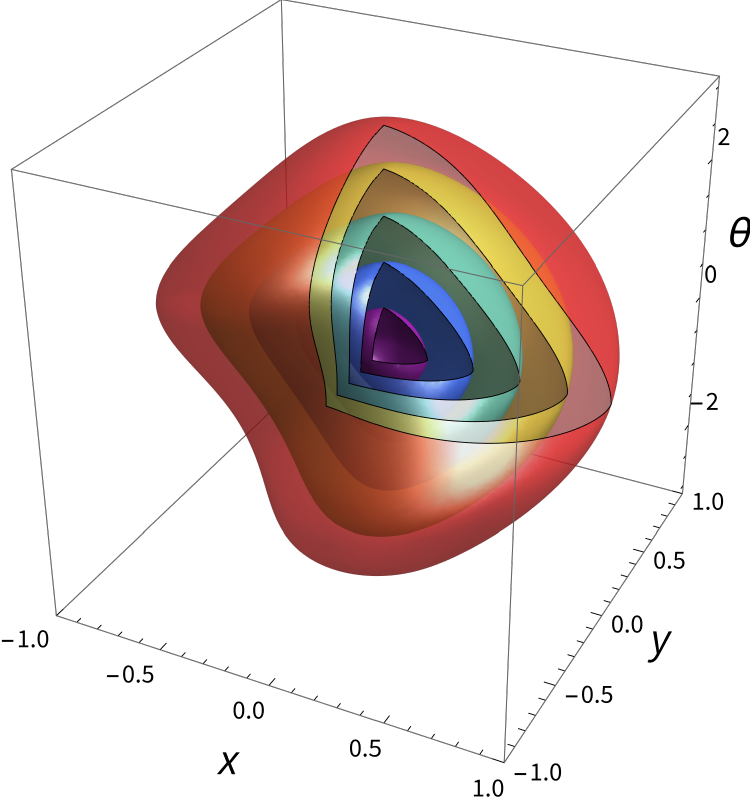}
    \end{subfigure}
    \begin{subfigure}{0.45\linewidth}
        \centering
        \includegraphics[width=0.9\linewidth]{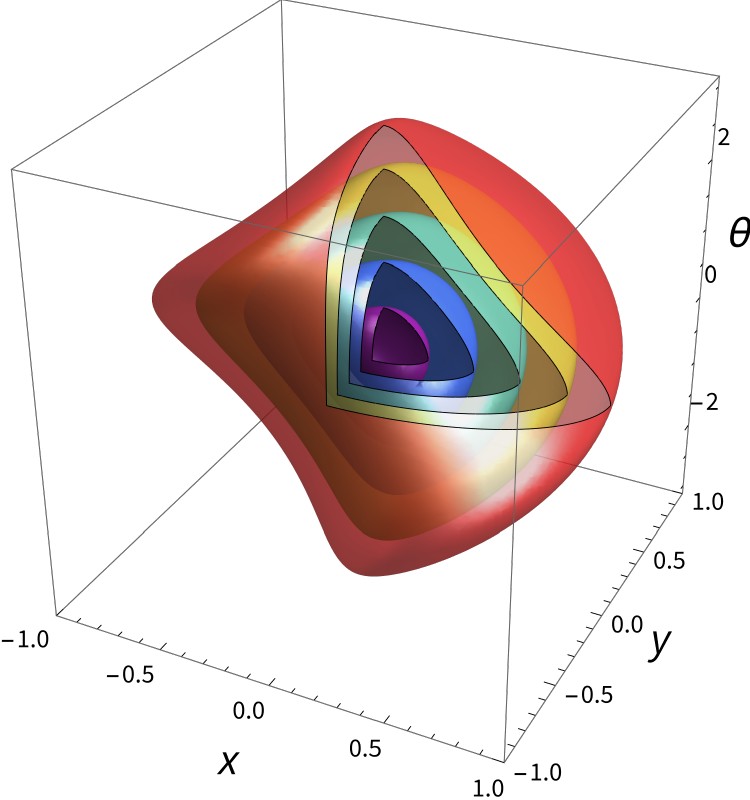}
    \end{subfigure}
    \includegraphics[width=0.4\linewidth]{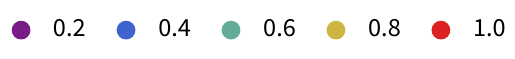}
    \caption{Comparing the `exact' Riemannian distance (left) obtained through numerically solving the Eikonal equation \cite{bekkers2015pde} versus the logarithmic metric estimate (right) on $SE(2)$ endowed with a left-invariant Riemannian metric tensor field \eqref{eq:metric_m2} with $w_M=1$, $w_L=2$, $w_A=1/\pi$. The relative $L^1$ error in the plotted volume is $0.20$.}
    \label{fig:distance_comparison}
\end{figure}

We can see that the metric estimate $\rho_{\mathcal{G}}$ (and consequently any function of $\rho_{\mathcal{G}}$) has the necessary compatibility property to be a kernel used in convolutions per Def.~\ref{def:linear_convolution}.
\begin{proposition}[Kernel compatibility of $\rho_{\mathcal{G}}$]
    \label{prop:compatibility}
   Let $\mathcal{G}$ be a $G$-invariant metric tensor field on $G/H$, then we have
    \begin{equation}
        \forall p \in G/H,\ \forall h \in H : \rho_{\mathcal{G}}(hp) = \rho_{\mathcal{G}}(p).
    \end{equation}
\end{proposition}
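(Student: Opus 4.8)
The plan is to exploit the fact that the infimum defining $\rho_{\mathcal{G}}$ is taken over a left coset, together with the $H$-symmetry of the metric tensor at $p_0$ recorded in Corollary~\ref{cor:limtf_properties}. Writing $p = g_0 H$, the translated coset is $hp = h g_0 H$, and the key observation is that conjugation $C_h : g \mapsto h g h^{-1}$ restricts to a \emph{bijection} $p \to hp$. Indeed $C_h(g_0 k) = h g_0 (k h^{-1})$ and, since $h^{-1}\in H$, the element $k h^{-1}$ ranges over all of $H$ as $k$ does, so $C_h(p) = h g_0 H = hp$. Reindexing the infimum along this bijection gives $\rho_{\mathcal{G}}(hp) = \inf_{g\in p} \| \log_G(h g h^{-1}) \|_{\tilde{\mathcal{G}}}$, which is the form I want because the logarithm interacts cleanly with conjugation (unlike with the left translation $g \mapsto hg$, for which $\log_G(hg)$ has no usable expression).

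Next I would use the conjugation identity for the exponential and logarithm. Since $h\,\exp_G(X)\,h^{-1} = \exp_G(\mathrm{Ad}(h)X)$, applying $\log_G$ yields $\log_G(h g h^{-1}) = \mathrm{Ad}(h)\,\log_G g$, where $\mathrm{Ad}(h) = (C_h)_{*,e}$ is the adjoint action on $T_e(G)$. It then remains to show that $\mathrm{Ad}(h)$ acts as an isometry for the seminorm $\|\cdot\|_{\tilde{\mathcal{G}}}$. For this I would compute the pushforward by $\pi$: because $h^{-1}\in H$ we have $h g h^{-1} H = h g H$, hence $\pi\circ C_h = L_h\circ\pi$, and differentiating at $e$ gives the naturality relation $\pi_*\circ \mathrm{Ad}(h) = (L_h)_*\circ \pi_*$.

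Combining these, for $X := \log_G g$ I would write $\|\mathrm{Ad}(h)X\|_{\tilde{\mathcal{G}}}^2 = \mathcal{G}\vert_{p_0}\big(\pi_*\mathrm{Ad}(h)X,\ \pi_*\mathrm{Ad}(h)X\big) = \mathcal{G}\vert_{p_0}\big((L_h)_*\pi_* X,\ (L_h)_*\pi_* X\big)$, noting that $(L_h)_*\pi_* X \in T_{p_0}(G/H)$ since $h p_0 = p_0$. Invoking the second property of Corollary~\ref{cor:limtf_properties} — the $H$-symmetry of $\mathcal{G}\vert_{p_0}$ — this equals $\mathcal{G}\vert_{p_0}(\pi_* X, \pi_* X) = \|X\|_{\tilde{\mathcal{G}}}^2$. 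Thus $C_h$ is a seminorm-preserving bijection $p\to hp$, the two infima coincide termwise, and $\rho_{\mathcal{G}}(hp)=\rho_{\mathcal{G}}(p)$.

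I expect the only genuine subtlety to be the logarithm identity $\log_G(h g h^{-1}) = \mathrm{Ad}(h)\log_G g$: it requires that $\mathrm{Ad}(h)\log_G g$ land in the domain $V$ on which $\log_G$ was defined as the inverse of $\exp_G$, i.e.\ that $V$ be $\mathrm{Ad}(H)$-invariant. I would either assume this of $V$ (it holds trivially for $\mathbb{M}_2$, where $H=\{e\}$ and $\mathrm{Ad}(h)=\mathrm{id}$) or verify it directly from the explicit form of $V$; once past this point, every remaining step is a direct application of the definitions and Corollary~\ref{cor:limtf_properties}.
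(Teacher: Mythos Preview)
Your proof is correct and rests on the same core ingredient as the paper's argument, namely the $H$-symmetry of $\mathcal{G}|_{p_0}$ from Corollary~\ref{cor:limtf_properties}. The execution differs: the paper rewrites $\rho_{\mathcal{G}}(p)$ as an infimum of initial speeds of curves $t\mapsto\exp_G(t\log_G g)\,p_0$, inserts $(L_h)_*$ by $G$-invariance, and then observes that the resulting curves $t\mapsto h\,\exp_G(t\log_G g)\,p_0$ connect $p_0$ to $hp$, so the infimum becomes $\rho_{\mathcal{G}}(hp)$. Your route makes the reindexing explicit via the conjugation bijection $C_h:p\to hp$ together with $\log_G\circ C_h=\mathrm{Ad}(h)\circ\log_G$ and the naturality $\pi_*\circ\mathrm{Ad}(h)=(L_h)_*\circ\pi_*$. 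This is a tighter packaging: the paper's passage from ``curves ending in $hp$'' back to the definition of $\rho_{\mathcal{G}}(hp)$ tacitly requires exactly the bijection and identification you spell out, so your version is more transparent at that step. Conversely the paper's curve-based formulation sidesteps the need to name $\mathrm{Ad}$ or to worry about the domain $V$ of $\log_G$; your flagged subtlety about $\mathrm{Ad}(H)$-invariance of $V$ is genuine but harmless under the paper's standing assumptions (and trivial for $\mathbb{M}_2$).
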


Note that, since we use left cosets, $p h = p$ but $h p \neq p$ in general, this requirement is not trivial.
Proof of this proposition is included in Appendix \ref{appendix:proof-of-compatibility}.

Now that we have developed and analyzed the logarithmic metric estimate we can use it to construct an approximation to the diffusion kernel for $\alpha=1$.

\begin{definition}[Approximate $\alpha=1$ kernel]
    \label{def:diffusion_kernel_1}
    \begin{gather}
        K_t^{1,\mathrm{appr}}(p) 
        := \eta_t \exp{\left( - \frac{\rho_{\mathcal{G}}(p)^2}{4t} \right)}
    \end{gather}
    where $\eta_t$ is a normalization constant for a given $t$, this can either be the $L^1$ normalization constant or in the case of groups of polynomial growth one typically sets
    $\eta_t=\mu_{\mathcal{G}}\left( B(p_0,\sqrt{t}) \right)^{-1}$,
    see the definition of polynomial growth below
    .
\end{definition}

On Lie groups of polynomial growth this approximate kernel be bounded from above and below by the exact kernels.

\begin{definition}[Polynomial growth]
    A Lie group $G$ with left-invariant Haar measure $\mu_G$ is of polynomial growth
    when the volume of a sphere of radius $r$ around $g \in G$:
    \begin{equation*}
    B(g,r)=\left\{ g' \in G \ \big\vert\  d_{\tilde{\mathcal{G}}}(g,g') < r \right\},
    \end{equation*}
    can be polynomialy bounded as follows: there exists constants $\delta>0$ and $C_{\mathrm{grow}}>0$ so that
    \begin{equation*}
        \frac{1}{C_{\mathrm{grow}}} r^\delta \leq \mu_G \left( B(g, r) \right) \leq C_{\mathrm{grow}} r^\delta
        , \qquad r \geq 1,
    \end{equation*}
    take note that the exponent $\delta$ is the same on both the lower and upper bound.
    Since $\mu_G$ is left-invariant the choice of $g$ does not matter.
\end{definition}

\begin{lemma}
\label{lem:heat_kernel_squeeze}
Let $G$ be of polynomial growth and let $K_t^1$ be the fundamental solution to the $\alpha=1$ diffusion equation on $G/H$ then there exists constants $C\geq 1$ , $D_1 \in (0,1)$ and $D_2>D_{1}$
%$D_2>0$ 
so that for all $t>0$ the following holds:
\begin{align}
    \begin{split}
        \frac{1}{C} K^1_{D_1 t}(p)
        \leq
        K_t^{1,\mathrm{appr}}(p)
        \leq
        C K^1_{D_2 t}(p).
    \end{split}
\end{align} 
for all $p \in G/H$.
\end{lemma}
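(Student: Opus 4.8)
The plan is to reduce the statement to the classical two-sided Gaussian estimates for the heat kernel on a Lie group of polynomial growth, and then to absorb the discrepancy between the true Riemannian distance $d_{\mathcal{G}}$ and the logarithmic estimate $\rho_{\mathcal{G}}$ into the time-scaling constants $D_1,D_2$. Concretely, on a group of polynomial growth the results of Varopoulos--Saloff-Coste--Coulhon furnish constants $C_1 \geq 1$ and $0 < b_2 \leq b_1$ such that the fundamental solution obeys
\[
\frac{C_1^{-1}}{\mu_{\mathcal{G}}(B(p_0,\sqrt{t}))}\, e^{-b_1 d_{\mathcal{G}}(p_0,p)^2/t} \;\leq\; K_t^1(p) \;\leq\; \frac{C_1}{\mu_{\mathcal{G}}(B(p_0,\sqrt{t}))}\, e^{-b_2 d_{\mathcal{G}}(p_0,p)^2/t}.
\]
These bounds are originally stated on $G$; I would transfer them to the quotient $G/H$ using that $H$ is compact (so the quotient kernel is an average over $H$) together with the isometric correspondence of the two distances from Lemma~\ref{lem:metric_correspondence}. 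The crucial observation is that the approximate kernel $K_t^{1,\mathrm{appr}}$ of Definition~\ref{def:diffusion_kernel_1} is itself exactly of this Gaussian shape, with the matching volume normalisation $\eta_t = \mu_{\mathcal{G}}(B(p_0,\sqrt{t}))^{-1}$; the whole lemma is therefore a comparison of Gaussians differing by a spatial scale in the exponent and a time scale in the volume factor.

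The second ingredient is the metric comparison. The lower inequality $d_{\mathcal{G}}(p_0,p) \leq \rho_{\mathcal{G}}(p)$ holds globally and essentially for free, since by the computation behind Definition~\ref{def:metric_estimate} the quantity $\rho_{\mathcal{G}}(p)$ is the length of a particular projected exponential curve from $p_0$ to $p$, hence an infimum over a restricted family of connecting curves, which can only exceed the geodesic distance. The upper inequality $\rho_{\mathcal{G}}(p) \leq C_{\mathrm{metr}}\, d_{\mathcal{G}}(p_0,p)$ is supplied by Lemma~\ref{lem:bounding-metric}. Feeding $d_{\mathcal{G}} \leq \rho_{\mathcal{G}}$ into the upper Gaussian estimate gives $K_t^{1,\mathrm{appr}}(p) \leq \eta_t\, e^{-d_{\mathcal{G}}(p_0,p)^2/4t}$, and feeding $\rho_{\mathcal{G}} \leq C_{\mathrm{metr}} d_{\mathcal{G}}$ into the lower one gives $K_t^{1,\mathrm{appr}}(p) \geq \eta_t\, e^{-C_{\mathrm{metr}}^2 d_{\mathcal{G}}(p_0,p)^2/4t}$.

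I would then choose the constants so that the Gaussian exponents line up. Taking $D_2 \geq 4 b_1$ ensures $e^{-d_{\mathcal{G}}^2/4t} \leq e^{-b_1 d_{\mathcal{G}}^2/(D_2 t)}$, so that $K_t^{1,\mathrm{appr}}$ is dominated by the lower Gaussian bound for $K_{D_2 t}^1$; taking $D_1 \leq \min\{4 b_2/C_{\mathrm{metr}}^2,\ \tfrac{1}{2}\}$ ensures $e^{-C_{\mathrm{metr}}^2 d_{\mathcal{G}}^2/4t} \geq e^{-b_2 d_{\mathcal{G}}^2/(D_1 t)}$ together with $D_1 \in (0,1)$ and $D_1 < D_2$, so that $K_t^{1,\mathrm{appr}}$ dominates the upper Gaussian bound for $K_{D_1 t}^1$ in the exponent. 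The remaining mismatch lives purely in the volume prefactors, where one compares $\mu_{\mathcal{G}}(B(p_0,\sqrt{t}))^{-1}$ against $\mu_{\mathcal{G}}(B(p_0,\sqrt{D_i t}))^{-1}$; here I would invoke the volume-doubling property, a standard consequence of polynomial growth, to bound the ratio $\mu_{\mathcal{G}}(B(p_0,\sqrt{D_i t}))/\mu_{\mathcal{G}}(B(p_0,\sqrt{t}))$ by a constant depending only on $D_i$, uniformly in $t>0$. Collecting all constants produces a single $C \geq 1$ valid for both inequalities.

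The step I expect to be the main obstacle is making the lower bound valid uniformly over all of $G/H$, i.e.\ away from a compact neighbourhood of $p_0$: Lemma~\ref{lem:bounding-metric} only provides $\rho_{\mathcal{G}} \leq C_{\mathrm{metr}} d_{\mathcal{G}}$ on a compact set away from the cut locus, whereas the statement is claimed for every $p$. In the tail both kernels are exponentially small, but controlling their ratio requires a \emph{global}, large-scale comparison of $\rho_{\mathcal{G}}$ and $d_{\mathcal{G}}$, which rests on the structure of polynomial-growth groups (all reasonable left-invariant metrics and the exponential-coordinate estimate being equivalent outside a compact set); the behaviour near the cut locus, and the restriction $V \subset T_e(G)$ with $V \neq T_e(G)$ as for $SE(2)$, must be handled with care. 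Establishing and cleanly packaging this global comparison, together with verifying that polynomial growth yields the doubling property and that the Gaussian bounds transfer to the quotient, is the technical heart of the argument.
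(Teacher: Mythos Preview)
Your approach is essentially the same as the paper's: invoke two-sided Gaussian heat-kernel bounds valid under polynomial growth (the paper cites Maheux rather than Varopoulos--Saloff-Coste--Coulhon, and applies them directly on $G/H$ rather than transferring from $G$, but this is cosmetic), replace $d_{\mathcal{G}}$ by $\rho_{\mathcal{G}}$ via Lemma~\ref{lem:bounding-metric}, and absorb the resulting $\eta_t/\eta_{ct}$ ratios using volume doubling and reverse doubling. Your parametrisation of the Gaussian constants differs slightly (the paper carries an $\varepsilon>0$ from Maheux's upper bound, then fixes it at the end), but the skeleton is identical.

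The obstacle you flag---that Lemma~\ref{lem:bounding-metric} only yields $\rho_{\mathcal{G}}\leq C_{\mathrm{metr}}\,d_{\mathcal{G}}$ on a compact neighbourhood away from the cut locus, whereas the lemma is stated for all $p\in G/H$---is real, and the paper's proof simply applies the metric comparison without commenting on it. So you are not missing an ingredient that the paper supplies; rather, you have correctly identified a point the paper leaves unaddressed.
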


\begin{proof}
    On a group of polynomial growth we have
    $\eta_t=\mu_{\mathcal{G}}\left( B(p_0,\sqrt{t}) \right)^{-1}$.
    If $G$ is of polynomial growth we can apply \cite[Thm. 2.12]{Maheux} to find that there exists constants $C_1,C_2 >0$ and for all $\varepsilon>0$ there exists a constant $C_\varepsilon$ so that:
    \begin{equation*}
        \label{eq:enclosure_1}
    \begin{split}
        &C_1 \eta_t \exp{\left( -\frac{d_{\mathcal{G}}(p_0,p)^2}{4C_2 t} \right)}
        \leq
        K^1_t(p)
        \\
        &\leq
        C_{\varepsilon} \eta_t \exp{\left( -\frac{d_{\mathcal{G}}(p_0,p)^2}{4(1+\varepsilon)t} \right)}
        .
    \end{split}
    \end{equation*}
    
    \begin{remark}[Left vs. right cosets]
    Note that Maheux \cite{Maheux} uses right cosets while we use left cosets. We can translate the results easily by inversion in view of $(gH)^{-1}=H^{-1} g^{-1}=H g^{-1}$. 
    We then apply the result of Maheux to the correct (invertible) $G$-invariant metric tensor field on $G/H$.
    
    Also note the different (but equivalent) way Maheux relates distance on the group with distance on the homogeneous space. While we use a pseudometric on $G$ induced by a metric on $G/H$, Maheux uses a metric on $G/H$ induced by a metric on $G$ by:
    \begin{equation}
        \label{eq:metric_maheux}
        \begin{split}
        d_{G/H}^{\mathrm{maheux}}(p_1,p_2) 
        &=\inf_{g_1 \in p_1}\inf_{g_2 \in p_2} d_{G}^{\mathrm{maheux}}(g_1, g_2)
        \\
        &=\inf_{g_2 \in p_2} d_{G}^{\mathrm{maheux}}(q_1, g_2), 
        \end{split}
    \end{equation}
    for any choice of $q_1 \in p_1$. We avoid having to minimize inside the cosets as in \eqref{eq:metric_maheux} thanks to the inherent symmetries in our pseudometric.
    \end{remark}

    Now using the inequalities from Lemma \ref{lem:bounding-metric} we obtain:
    \begin{equation*}
        \label{eq:enclosure_2}
        \begin{split}
            &C_1 \eta_t \exp{\left( -\frac{\rho_{\mathcal{G}}(p)^2}{4C_2 t} \right)}
            \leq
            K^1_t(p)
            \\
            &\leq
            C_{\varepsilon} \eta_t \exp{\left( -\frac{\rho_{\mathcal{G}}(p)^2}{4C_{\mathrm{metr}}^2 (1+\varepsilon)t} \right)}
            ,
        \end{split}
    \end{equation*}
    which leads to:
    \begin{equation*}
        \label{eq:enclosure_3}
        \begin{split}
            &C_1 \frac{\eta_t}{\eta_{c_2 t}} K^{1,\mathrm{appr}}_{C_2 t}(p)
            \leq
            K^1_t(p)
            \\
            &\leq
            C_{\varepsilon} \frac{\eta_t}{\eta_{C^2_{\mathrm{metr}}(1+\varepsilon)t}} 
            K^{1,\mathrm{appr}}_{C^2_{\mathrm{metr}}(1+\varepsilon)t}(p)
            .
        \end{split}
    \end{equation*}
    
    The group $G$ being of polynomial growth also implies $G/H$ is a doubling space \cite[Thm. 2.17]{Maheux}. Using the volume doubling and reverse volume doubling property of doubling spaces \cite[Prop. 3.2 and 3.3]{grigor2009heat} we find that there exist constants $C_3,C_4,\beta,\beta'>0$ so that:
    \begin{gather*}
         \frac{\eta_t}{\eta_{c_2 t}}
         \geq
         C_3 \left( \frac{\sqrt{t}}{\sqrt{C_2 t}} \right)^\beta
         =
         C_3 C_2^{-\beta/2}
         ,
         \\
         \begin{split}
         \frac{\eta_t}{\eta_{C^2_{\mathrm{metr}}(1+\varepsilon)t}}
         &\leq
         C_4 \left( \frac{\sqrt{t}}{\sqrt{C^2_{\mathrm{metr}}(1+\varepsilon)t}} \right)^{\beta'}
         \\
         &=
         C_4 \left( C^2_{\mathrm{metr}}(1+\varepsilon)  \right)^{-\beta'/2}
        .
        \end{split}
    \end{gather*}
    Applying these inequalities we get: 
    \begin{equation*}
        C_1' := C_1 C_3 C_2^{-\beta/2}
    \end{equation*} 
    and 
    \begin{equation*}
        C_\varepsilon' := C_\varepsilon C_4 \left( C^2_{\mathrm{metr}}(1+\varepsilon)  \right)^{-\beta'/2}
    \end{equation*} we obtain:
    \begin{equation*}
        \label{eq:enclosure_4}
            C_1'  K^{1,\mathrm{appr}}_{C_2 t}(p)
            \leq
            K^1_t(p)
            \leq
            C_{\varepsilon}'
            K^{1,\mathrm{appr}}_{C^2_{\mathrm{metr}}(1+\varepsilon)t}(p)
            .
    \end{equation*}
    Reparametrising $t$ in both inequalities gives:
    \begin{equation*}
        \frac{1}{C_\varepsilon'} K^1_{t/(C^2_{\mathrm{metr}}(1+\varepsilon))}(p)
        \leq
        K_t^{1,\mathrm{appr}}(p)
        \leq
        \frac{1}{C_1^\prime} K^1_{C_2^{-1}t}(p).
    \end{equation*}
    Finally we fix $\varepsilon>0$ and relabel constants:
    \begin{align*}
        &C := \max\left\{ C_1^{\prime -1},\ C_\varepsilon',\ 1 \right\}
        ,
        \\
        &D_1 := \frac{1}{C^2_{\mathrm{metr}}(1+\varepsilon)}
        ,
        \\
        &D_2 := \frac{1}{C_2},
    \end{align*}
    observe that since $\varepsilon>0$ and $C_{\mathrm{metr}} \geq 1$ we have $0<D_1< 1$.
    \qed
\end{proof}

Depending on the actually achievable constants, Lem.~\ref{lem:heat_kernel_squeeze} provides a very strong or very weak bound on how much our approximation deviates from the fundamental solution. 
Fortunately in the $SE(2)$ case our approximation is very close to the exact kernel in the vicinity of the origin, as can be seen in Fig.~\ref{fig:heat_kernel_approx}.
In our experiments we sample the kernel on a grid around the origin, hence this approximation is good for reasonable values of the metric parameters, which we may expect from Lemma~\ref{lem:bounding-metric} providing a second order relative error.

\begin{figure}[ht!]
    \centering
    \begin{subfigure}{0.45\linewidth}
        \centering
        \includegraphics[width=0.9\linewidth]{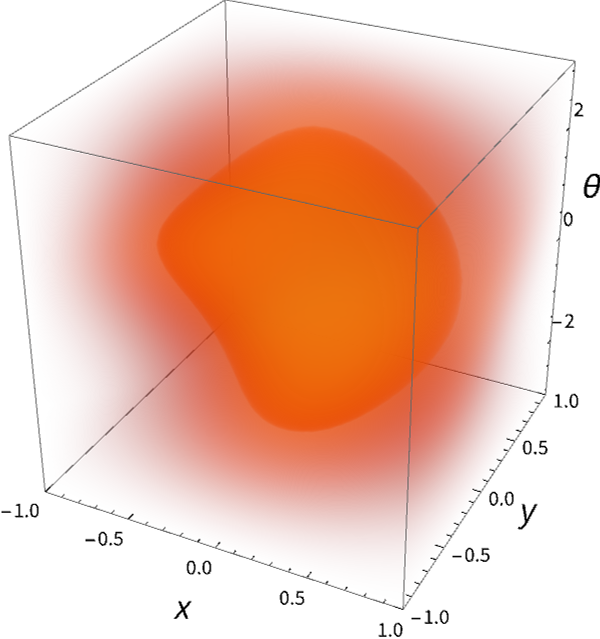}
    \end{subfigure}
    \begin{subfigure}{0.45\linewidth}
        \centering
        \includegraphics[width=0.9\linewidth]{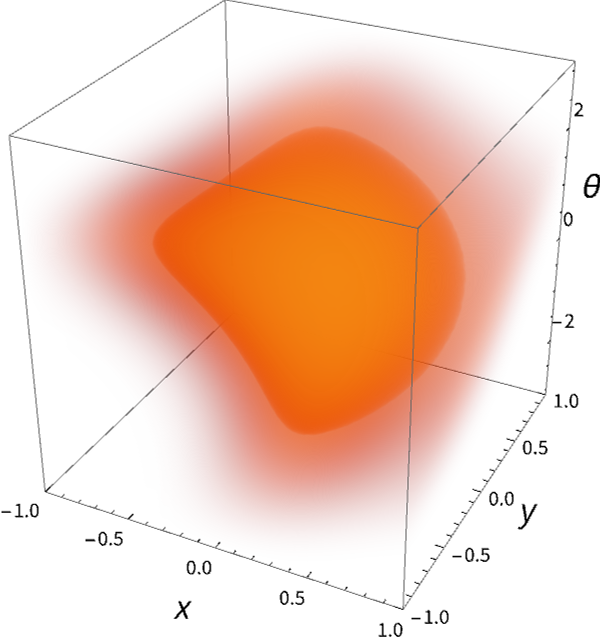}
    \end{subfigure}
    \caption{Comparing the numerically computed heat kernel $K_t^1$ (left) with our approximation $K_t^{1,\mathrm{appr}}$ based on the logarithmic norm estimate (right) for $G/H=SE(2)$. Shown here at $t=1$ with the same metric as in Fig.~\ref{fig:distance_comparison}. Especially in deep learning applications where discretization is very coarse our approximation is sufficiently accurate as long as the spatial anisotropies $w_M/w_L$ and $w_L/w_M$ do not become too high. In this case with $w_L/w_M=2$ we have a relative $L^2$ error of $0.23$ in the plotted volume.}
    \label{fig:heat_kernel_approx}
\end{figure}

Now let us develop an approximation for values of $\alpha$ other than $1$. From semi-group theory \cite{yosida1968functional} it follows that semi-groups generated by taking fractional powers of the generator (in our case $\Delta_{\mathcal{G}} \rightarrow -(-\Delta_{\mathcal{G}})^{\alpha}$) amounts to the following key relation between the $\alpha$-kernel and the diffusion kernel:
\begin{equation}
\label{keyrelation}
K_{t}^{\alpha}(p) := \int_{0}^{\infty} q_{t,\alpha}(\tau) \, K_{\tau}^{1}(p) \; {\rm d}\tau,
\end{equation}
for $\alpha \in (0,1)$ and $t>0$ where
$q_{t,\alpha}$ is defined as follows.
\begin{definition}
    Let $\mathcal{L}^{-1}$ be the inverse Laplace transform then
    \begin{equation*}
        q_{t,\alpha}(\tau)
        :=
        \mathcal{L}^{-1} \left( r \mapsto e^{-t r^{\alpha}} \right)(\tau)
        \qquad
        \hfill\mathrm{for }\ \tau \geq 0
        .
    \end{equation*}
\end{definition}
For explicit formulas of this kernel see \cite[Ch.~IX:11 eq.~17]{yosida1968functional}. Since $e^{-t r^\alpha}$ is positive for all $r$ it follows that $q_{t,\alpha}$ is also positive everywhere.

Now instead of integrating $K_t^1$ to obtain the exact fundamental solution, we can replace it with our approximation $K_t^{1,\mathrm{appr}}$ to obtain an approximate $\alpha$-kernel.

\begin{definition}[Approximate $\alpha \in (0,1)$ kernel]
    \label{def:alpha_kernel_approx}
    Akin to (\ref{keyrelation}) we set $\alpha \in (0,1)$, $t>0$ and define:
    \begin{equation} 
    \label{eq:alpha_kernel_approx}
    K_{t}^{\alpha,\mathrm{appr}}(p) := \int_{0}^{\infty} q_{t,\alpha}(\tau) \, K_{\tau}^{1,\mathrm{appr}}(p) \; {\rm d}\tau \geq 0,
    \end{equation}
    for $p \in G/H$.
\end{definition}

The bounding of $K_t^1$ we obtained in Lem.~\ref{lem:heat_kernel_squeeze} transfers directly to our approximation for other $\alpha$. 

\begin{theorem}
    \label{thm:alpha_kernel_squeeze}
Let $G$ be of polynomial growth and let $K_t^\alpha$ be the fundamental solution to the $\alpha\in(0,1]$ diffusion equation on $G/H$, then there exists constants $C\geq 1$ , $D_1 \in (0,1)$ and %$D_2>0$
$D_{2}>D_{1}$
so that for all $t>0$ and $p \in G/H$ the following holds:
\begin{align}
    \begin{split}
        \frac{1}{C} K^\alpha_{D_1^\alpha t}(p)
        \leq
        K_t^{\alpha,\mathrm{appr}}(p)
        \leq
        C K^\alpha_{D_2^\alpha t}(p).
    \end{split}
\end{align}   
\end{theorem}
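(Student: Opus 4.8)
The plan is to reduce the statement entirely to the already-proven $\alpha=1$ case in Lemma~\ref{lem:heat_kernel_squeeze}, by combining the subordination formula (\ref{keyrelation}) with a self-similarity (scaling) property of the subordinator density $q_{t,\alpha}$. The case $\alpha=1$ is precisely Lemma~\ref{lem:heat_kernel_squeeze} (here $q_{t,1}$ collapses to a point mass at $\tau=t$), so from here on I would assume $\alpha\in(0,1)$.

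First I would exploit positivity. By the remark following the definition of $q_{t,\alpha}$ we have $q_{t,\alpha}(\tau)\ge 0$ for all $\tau\ge0$, and by Lemma~\ref{lem:heat_kernel_squeeze} the pointwise bound $\tfrac1C K^1_{D_1\tau}(p)\le K^{1,\mathrm{appr}}_\tau(p)\le C\,K^1_{D_2\tau}(p)$ holds for every fixed $\tau>0$. Starting from Definition~\ref{def:alpha_kernel_approx}, multiplying these inequalities by the nonnegative weight $q_{t,\alpha}(\tau)$ and integrating over $\tau\in(0,\infty)$ preserves the ordering, yielding
\[
\frac{1}{C}\int_0^\infty q_{t,\alpha}(\tau)\,K^1_{D_1\tau}(p)\,\mathrm{d}\tau \ \le\ K^{\alpha,\mathrm{appr}}_t(p)\ \le\ C\int_0^\infty q_{t,\alpha}(\tau)\,K^1_{D_2\tau}(p)\,\mathrm{d}\tau .
\]

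The crux is recognizing each flanking integral as a genuine $\alpha$-kernel at a rescaled time. To do so I would first establish the scaling identity $q_{t,\alpha}(\tau)=t^{-1/\alpha}\,q_{1,\alpha}(t^{-1/\alpha}\tau)$, which follows directly from the defining Laplace transform $\int_0^\infty e^{-r\tau}q_{t,\alpha}(\tau)\,\mathrm{d}\tau=e^{-tr^\alpha}$ via the substitution $u=t^{-1/\alpha}\tau$. Then, in a generic integral $\int_0^\infty q_{t,\alpha}(\tau)K^1_{D\tau}(p)\,\mathrm{d}\tau$, I would substitute $\sigma=D\tau$ and use the scaling identity to verify the bookkeeping relation $D^{-1}q_{t,\alpha}(\sigma/D)=q_{D^\alpha t,\alpha}(\sigma)$; comparing the arguments of $q_{1,\alpha}$ on both sides forces exactly the time rescaling $t\mapsto D^\alpha t$. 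Consequently, by (\ref{keyrelation}), each flanking integral equals $K^\alpha_{D^\alpha t}(p)$ with $D=D_1$ respectively $D=D_2$, and we obtain $\tfrac1C K^\alpha_{D_1^\alpha t}(p)\le K^{\alpha,\mathrm{appr}}_t(p)\le C\,K^\alpha_{D_2^\alpha t}(p)$ with the \emph{same} constants $C,D_1,D_2$ as in Lemma~\ref{lem:heat_kernel_squeeze}; in particular $D_1\in(0,1)$ and $D_2>D_1$ carry over verbatim.

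The main obstacle is purely the bookkeeping of this self-similarity: correctly deriving $q_{t,\alpha}(\tau)=t^{-1/\alpha}q_{1,\alpha}(t^{-1/\alpha}\tau)$ and confirming that the change of variables produces precisely $q_{D^\alpha t,\alpha}$ rather than some other reparametrization. Once that identity is in hand, the remainder is monotone integration of a pointwise bound against a positive measure. A minor point I would also confirm is that all the integrals converge, so that the pointwise bounds may legitimately be integrated termwise; this is guaranteed by the integrability already built into the subordination formula (\ref{keyrelation}) together with the nonnegativity of both $q_{t,\alpha}$ and the kernels.
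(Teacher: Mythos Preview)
Your proposal is correct and follows essentially the same route as the paper: integrate the pointwise bound of Lemma~\ref{lem:heat_kernel_squeeze} against the nonnegative subordinator density $q_{t,\alpha}$, then use the scaling identity $D^{-1}q_{t,\alpha}(\sigma/D)=q_{D^\alpha t,\alpha}(\sigma)$ to identify the flanking integrals via (\ref{keyrelation}) as $K^\alpha_{D_1^\alpha t}$ and $K^\alpha_{D_2^\alpha t}$. The only cosmetic difference is that you derive the scaling law explicitly from the Laplace transform, whereas the paper invokes the Bromwich integral in a single line; the constants inherited from Lemma~\ref{lem:heat_kernel_squeeze} are indeed the same in both arguments.
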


\begin{proof}
    This is an consequence of Lem.~\ref{lem:heat_kernel_squeeze} and the fact that $q_{t,\alpha}$ is positive, applying the integral from \eqref{keyrelation} yields:
    \begin{equation*}
    \begin{split}
        K_t^{\alpha,\mathrm{appr}}(p) 
        \ &= \  
        \int_0^\infty q_{t,\alpha}(\tau) K_\tau^{1,\mathrm{appr}}(p) \d\tau
        \\
        {\scriptstyle\mathrm{(Lem.~\ref{lem:heat_kernel_squeeze})}} &\leq
        C \int_0^\infty q_{t,\alpha}(\tau) K_{D_2\tau}^1(p) \d\tau
        \\
        {\scriptstyle(\tau'= D_2 \tau)}&=
        C \int_0^\infty \frac{1}{D_2} q_{t,\alpha}\left( \frac{\tau'}{D_2} \right) K_{\tau'}^1(p) \d\tau'
        \\
        {\scriptstyle\left(\substack{\mathrm{Bromwich}\\ \mathrm{integral}}\right)}&=
        C\int_0^\infty q_{D_2^\alpha t,\alpha}\left( \tau' \right) K_{\tau'}^1(p) \d\tau'
        \\
        &=
        C K^{\alpha}_{D_2^\alpha t}(p).
    \end{split}
    \end{equation*}
    The other inequality works the same way.
    \qed
\end{proof}

Although the approximation \eqref{eq:alpha_kernel_approx} is helpful in the proof above it contains some integration and is not an explicit expression. 
Our initial experiments with diffusion for $\alpha=1$ showed that (for the applications under consideration at least) adding diffusion did not improve performance.
For that reason we chose not to focus further on diffusion in this work.
We leave developing a more explicit and computable approximation for diffusion kernels for $0<\alpha<1$ for future work.

\begin{figure}[ht]
    \centering
    \includegraphics[width=0.8\linewidth]{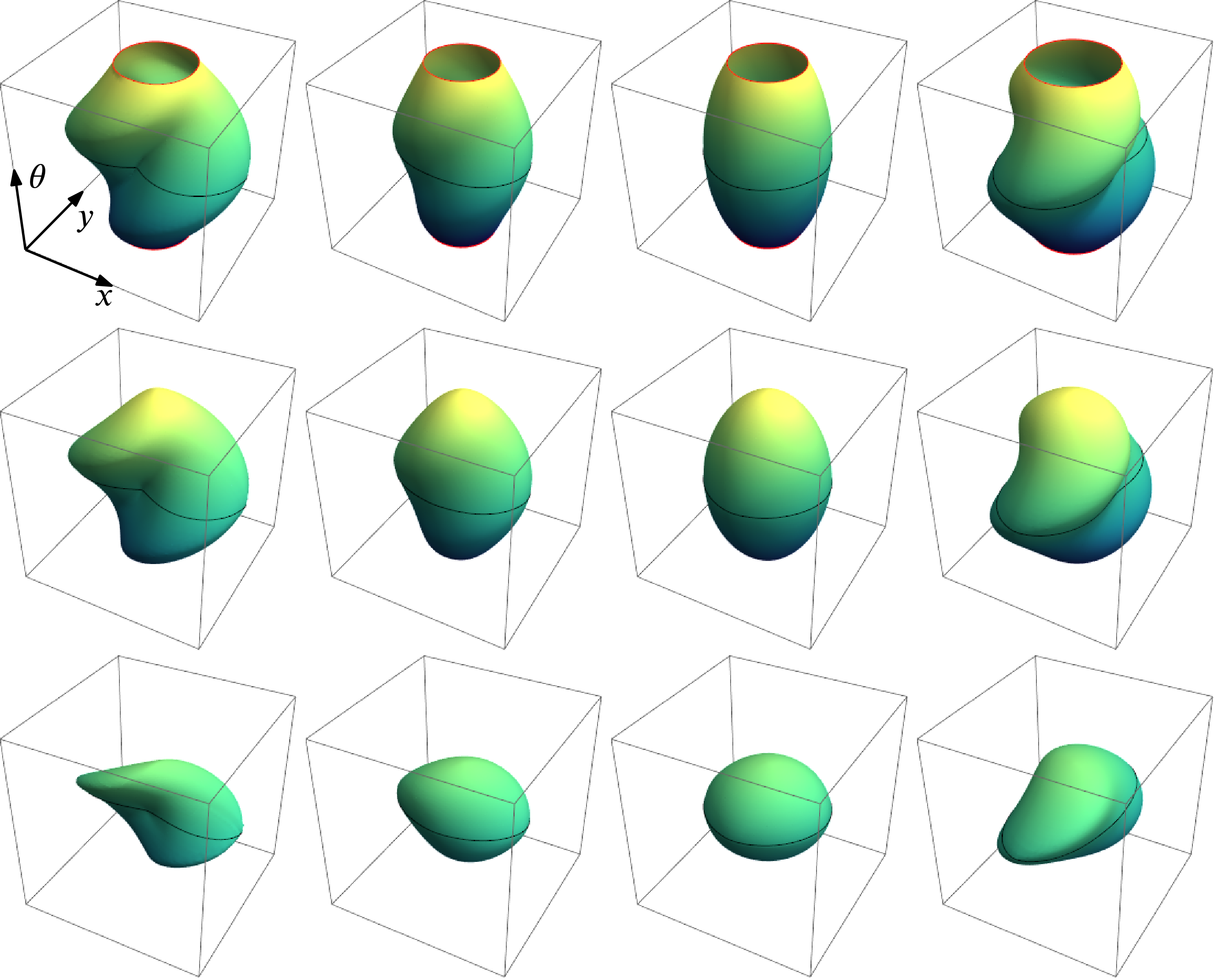}
    \caption[$SE(2)$ kernel shapes]{Shapes of the level sets of the kernels on $\mathbb{M}_2$ for solving fractional diffusion ($K^{\alpha}_t$) and dilation/erosion ($k^{\alpha}_t$) for various values of the trainable metric tensor field parameters $w_M,w_L$ and $w_A$. This shape is essentially what is being optimized during the training process of a metric tensor field on $\mathbb{M}_2$.}
    \label{fig:shapes}
\end{figure}

%%%
%%%
%%%
\subsection{Dilation and Erosion}
The dilation/erosion step solves the PDE
\begin{equation}
\label{eq:erosion_system}
    \begin{cases}
    \frac{\partial W_3}{\partial t}(p,t)
    =\pm
    \left\Vert \nabla_{\mathcal{G}_2^\pm} W_3 (p,t) \right\Vert^{2 \alpha}_{\mathcal{G}_2^\pm} \ &
    \text{for } p \in G/H,
    \\
    & t \geq 0,
    \\[1em]
    W_3(p,0) = U_3(p)&
    \text{for } p \in G/H.
    \end{cases}
\end{equation}

By a generalization of the Hopf-Lax formula \cite[Ch.10.3]{evans2010partial}, the solution is given by morphological convolution 
\begin{equation}
\label{eq:solve_dilation}
W_3(p,t) = -\left(k^{\alpha}_{t} \morph_G -U_3\right)(p)
\end{equation}
for the $(+)$ (dilation) variant and 
\begin{equation}
\label{eq:solve_erosion}
W_3(p,t) = \left(k^{\alpha}_{t} \morph_G U_3\right)(p)
\end{equation}
for the $(-)$ (erosion) variant,
where the kernel $k^{\alpha}_{t}$ (also called the structuring element in the context of morphology) is given as follows.

\begin{definition}[Dilation/erosion kernels]
    \label{def:morph_kernel}
    The morphological convolution kernel $k^{\alpha}_{t}$ for small times $t$ and $\alpha\in\left( \sfrac{1}{2},1\right]$ is given by
    \begin{equation}
        \label{eq:dilation_kernel}
        k^{\alpha}_{t} (p)
        :=
        \nu_{\alpha}
       t^{-\frac{1}{2 \alpha -1}}
        \d_{\mathcal{G}_2}(p_0,p)^{\frac{2 \alpha}{2 \alpha -1}},
    \end{equation}
    with $\nu_{\alpha}:=\left( \frac{2 \alpha - 1}{ \left(2 \alpha\right)^{2\alpha/(2\alpha-1)}} \right)$
    and for $\alpha=\sfrac{1}{2}$ by
    \begin{equation}
        \label{eq:dilation_kernel_1/2}
        k^{\sfrac{1}{2}}_{t} (p)
        =
        \begin{cases}
            0 \qquad &\text{if } \d_{\mathcal{G}_2}(p_0,p) \leq t,
            \\
            \infty &\text{if } \d_{\mathcal{G}_2}(p_0,p) > t.
        \end{cases}
    \end{equation}
\end{definition}
In the above definition and for the rest of the section we write $\mathcal{G}_2$ for either $\mathcal{G}_2^+$ or $\mathcal{G}_2^-$ depending on whether we are dealing with the dilation or erosion variant.
The morphological convolution $\morph_G$ (alternatively: the infimal convolution) is specified as follows.
\begin{definition}[Morphological group convolution]
    \label{def:morphological_convolution}
    Let $f \in L^\infty\left( G/H \right)$, let $k: G/H \to \mathbb{R} \cup \left\{ \infty \right\}$ be proper (not everywhere $\infty$) then we define:
    \begin{equation*}
    \begin{array}{ll}
        \left( k \morph_G f \right) (p)
        &:=
        \inf \limits_{g \in G}
        \left\{ k\left( g^{-1}  p \right) + f\left( g  p_0  \right) \right\}
        \\
         &=
           \inf \limits_{g \in G}
        \left\{ k\left( g^{-1}  p \right) + f\left( g H  \right) \right\}
        .
        \end{array}
    \end{equation*}
\end{definition}

\begin{remark}[Grayscale morphology]
    Morphological convolution is related to the grayscale morphology operations $\oplus$ (dilation) and $\ominus$ (erosion) on $\mathbb{R}^d$ as follows:
    \begin{align*}
        f_1 \oplus f_2 &= - \left( - f_1 \morph_{\mathbb{R}^d} -f_2 \right),
        \\
        f_1 \ominus f_2 &= f_1 \morph_{\mathbb{R}^d} \left[x \mapsto - f_2 \left( -x \right) \right],
    \end{align*}
    where $f_1$ and $f_2$ are proper functions on $\mathbb{R}^d$.
    Hence our use of the terms dilation and erosion, but mathematically we will only use $\morph_G$ as the actual operation to be performed and avoid $\oplus$ and $\ominus$.
\end{remark}

Combining morphological convolution with the structuring element $k_t^\alpha$ allows us to solve \eqref{eq:erosion_system}.

\begin{theorem}
\label{thm:erosion_dilation_solution}
Let $G$ be of polynomial growth, let $\alpha\in(\sfrac{1}{2},1]$ and let $U_3 : G/H \to \mathbb{R}$ be Lipschitz. Then $W_3 :  G/H \times (0,\infty) \to \mathbb{R}$ given by
\[
W_3(p, t) := (k_t^\alpha \morph_G U_3)(p)
\]
is Lipschitz and solves the $(-)$-variant, the erosion variant, of the system (\ref{eq:erosion_system}) in the sense of Theorem 2.1 in \cite{balogh2012functional}, while
\[
W_3(p, t) := -(k_t^\alpha \morph_G -U_3)(p)
\]
is Lipschitz and solves the $(+)$-variant, the dilation variant, of system (\ref{eq:erosion_system}) in the sense of Theorem 2.1 in \cite{balogh2012functional}. The kernels satisfy the semigroup property
\[
k_t^{\alpha} \morph_{G} k_{s}^{\alpha} = k_{t+s}^{\alpha}
\]
for all $s,t\geq 0$ and $\alpha \in (\sfrac{1}{2},1]$.
\end{theorem}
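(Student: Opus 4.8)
The plan is to recognize the morphological convolutions \eqref{eq:solve_dilation}--\eqref{eq:solve_erosion} as a metric Hopf--Lax semigroup on the Riemannian space $(G/H, d_{\mathcal{G}_2})$, invoke \cite[Thm.~2.1]{balogh2012functional} for the Lipschitz regularity and the PDE, and reserve a direct computation for the semigroup identity.

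\textbf{Reduction and rewriting.} First I would dispose of the dilation variant: if $W$ solves the erosion equation with initial datum $-U^3$, then $-W$ solves the dilation equation with datum $U^3$, since the Hamiltonian $\|\cdot\|^{2\alpha}$ is even; this is exactly the relation between \eqref{eq:solve_dilation} and \eqref{eq:solve_erosion}, so it suffices to treat erosion. Next, using the $G$-invariance of $d_{\mathcal{G}_2}$ (so $d_{\mathcal{G}_2}(p_0, g^{-1}p) = d_{\mathcal{G}_2}(gp_0, p)$) together with transitivity, I would rewrite the group infimal convolution of Def.~\ref{def:morphological_convolution} as an infimum over the quotient,
\begin{equation*}
(k_t^\alpha \morph_G U^3)(p) = \inf_{q \in G/H}\left\{ U^3(q) + \nu_\alpha\, t^{-\frac{1}{2\alpha-1}}\, d_{\mathcal{G}_2}(q,p)^{\frac{2\alpha}{2\alpha-1}} \right\},
\end{equation*}
the integrand depending on $g$ only through $q = gp_0$. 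This is precisely the metric Hopf--Lax formula.

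\textbf{Matching the Hamilton--Jacobi equation.} The crux is to check that this formula solves $\partial_t W = -\|\nabla_{\mathcal{G}_2}W\|^{2\alpha}_{\mathcal{G}_2}$. I would verify that the exponent $\frac{2\alpha}{2\alpha-1}$ and the constant $\nu_\alpha$ are exactly those produced by the Legendre transform of the Hamiltonian $\xi \mapsto |\xi|^{2\alpha}$: its dual Lagrangian is $L(v) = \nu_\alpha |v|^{2\alpha/(2\alpha-1)}$, and $t\,L(d/t) = k_t^\alpha$ with $d = d_{\mathcal{G}_2}(p_0,\cdot)$, so the structuring element is the Hopf--Lax kernel for this equation. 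Since $G/H$ with a $G$-invariant metric tensor field is a homogeneous, hence complete, connected Riemannian manifold, Hopf--Rinow makes it a geodesic length space with minimizing geodesics between any two points, so the hypotheses of \cite[Thm.~2.1]{balogh2012functional} hold and the cited theorem yields both the Lipschitz regularity of $W^3$ and the fact that it solves the erosion equation in the stated generalized sense. The point requiring care is the identification of Balogh's metric local slope $|\nabla^- W|$ with the Riemannian gradient norm $\|\nabla_{\mathcal{G}_2}W\|_{\mathcal{G}_2}$; these coincide for Lipschitz functions at points of differentiability (Rademacher), but one must argue that the non-smoothness of $d_{\mathcal{G}_2}$ on the cut locus (Def.~\ref{def:cutlocus}) does not obstruct this. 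I expect this identification to be the main obstacle.

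\textbf{Semigroup property.} Finally, for $k_t^\alpha \morph_G k_s^\alpha = k_{t+s}^\alpha$ I would again pass to the quotient and write, with $r = \frac{2\alpha}{2\alpha-1}$, $c = \frac{1}{2\alpha-1}$ and $D = d_{\mathcal{G}_2}(p_0,p)$,
\begin{equation*}
(k_t^\alpha \morph_G k_s^\alpha)(p) = \inf_{q \in G/H}\left\{ \nu_\alpha s^{-c}\, d_{\mathcal{G}_2}(p_0,q)^r + \nu_\alpha t^{-c}\, d_{\mathcal{G}_2}(q,p)^r \right\}.
\end{equation*}
Writing $\eta = d_{\mathcal{G}_2}(p_0,q)$ and $\xi = d_{\mathcal{G}_2}(q,p)$, the triangle inequality gives $\xi + \eta \geq D$; since the objective is increasing in $\xi$ and $\eta$ and $r > 1$, its minimum over $\{\xi+\eta \geq D\}$ is attained on the line $\xi + \eta = D$, where a one-line Lagrange/power-mean computation (using $r - c = 1$) yields exactly $\nu_\alpha (t+s)^{-c} D^r = k_{t+s}^\alpha(p)$. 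The matching upper bound comes from choosing $q$ on a minimizing geodesic from $p_0$ to $p$ at arclength fraction $s/(t+s)$, which exists by completeness. The boundary cases $s=0$ or $t=0$ follow by noting that $k_0^\alpha$ is the morphological identity ($0$ at $p_0$, $+\infty$ elsewhere), the $t\to 0^+$ limit of $k_t^\alpha$.
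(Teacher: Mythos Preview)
Your approach is essentially the paper's: rewrite the morphological convolution as a metric Hopf--Lax formula on $(G/H,d_{\mathcal{G}_2})$, identify the Lagrangian $L(v)=\nu_\alpha |v|^{2\alpha/(2\alpha-1)}$ as the Legendre dual of $\mathcal{H}(\xi)=|\xi|^{2\alpha}$, and invoke Balogh et~al.

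Two remarks. First, your worry about identifying the metric local slope with $\|\nabla_{\mathcal{G}_2}W\|_{\mathcal{G}_2}$ is unnecessary here: the theorem statement already commits to the solution concept ``in the sense of Theorem~2.1 in \cite{balogh2012functional}'', which \emph{is} the metric-slope formulation, so no such identification is required (the paper defers that discussion to a separate remark). Second, you never use the hypothesis that $G$ has polynomial growth. The paper does: via \cite{Maheux} it deduces that $G/H$ is doubling and admits a Poincar\'e inequality, which is what allows it to invoke the sharper parts (vii) and (viii) of \cite[Thm.~2.3]{balogh2012functional}. Your proof should at least account for where that hypothesis enters, even if Thm.~2.1 alone suffices for the bare Hopf--Lax statement.

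For the semigroup property the paper simply cites \cite[Thm.~2.1(ii)]{balogh2012functional}; your direct computation via the constrained minimization on $\xi+\eta=D$ and the geodesic midpoint is a correct and self-contained alternative.
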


\begin{proof}
The Riemannian manifold $(G/H, \mathcal{G}_2)$ is a proper length space, and therefore the theory of \cite{balogh2012functional} applies.
Moreover since $G$ is of polynomial growth we have that $G/H$ is a doubling space \cite[Thm.~2.17]{Maheux} and also admits a Poincaré constant \cite[Thm.~2.18]{Maheux}. So we meet the additional requirements of \cite[Thm.~2.3 (vii) and (viii)]{balogh2012functional}.

The Hamiltonian $\mathcal{H}: \mathbb{R}_+ \to \mathbb{R}_+$ in \cite{balogh2012functional} is given by $\mathcal{H}(x)= x^{2\alpha}$. This Hamiltonian is indeed superlinear, convex, and satisfies $\mathcal{H}(0) = 0$. The corresponding Lagrangian $\mathcal{L}: \mathbb{R}_+ \to \mathbb{R}_+$ becomes
\[
\mathcal{L}(x) = %\frac{2\alpha - 1}{(2\alpha)^{\frac{2\alpha}{2\alpha - 1}}} 
\nu_{\alpha}\, 
x^{\frac{2\alpha}{2\alpha - 1}}.
\]
According to \cite{balogh2012functional} the solution (in the sense of their Theorem 2.1) to the $(-)$-variant of system (\ref{eq:erosion_system}) is given by
\[
\begin{split}
W_3(p, t) &= \inf_{x \in G / H} \left\{ t \mathcal{L}\left( \frac{\d_{\mathcal{G}_2}(p, x)}{t} \right)  + U_3 (x) \right\}
\\
&= \inf_{g \in G} \left\{ t \mathcal{L}\left( \frac{\d_{\mathcal{G}_2}(p, g p_0)}{t} \right)  + U_3 (g p_0) \right\}
\\
&= \inf_{g \in G} \left\{ t \mathcal{L}\left( \frac{\d_{\mathcal{G}_2}(g^{-1} p, p_0)}{t} \right)  + U_3 (g p_0) \right\}
\\
&= \inf_{g \in G} \left\{ \nu_\alpha \frac{\d_{\mathcal{G}_2}(g^{-1} p, p_0)^{\frac{2\alpha}{2\alpha-1}}}{t^{\frac{2\alpha}{2\alpha-1}-1}} + U_3(g p_0) \right\}
\\
&= \inf_{g \in G} \left\{ \nu_\alpha t^{1-\frac{2\alpha}{2\alpha-1}}
\d_{\mathcal{G}_2}(g^{-1} p, p_0)^{\frac{2\alpha}{2\alpha-1}}  + U_3(g p_0) \right\}
\\
&= \inf_{g \in G} \left\{ \nu_\alpha t^{\frac{-1}{2\alpha-1}}
\d_{\mathcal{G}_2}(g^{-1} p, p_0)^{\frac{2\alpha}{2\alpha-1}}  + U_3(g p_0) \right\}
\\
&= (k_t^{\alpha} \morph_G U_3 )(p)
.
\end{split}
\]
The $(+)$-variant is proven analogously. 

The semigroup property follows directly from \cite[Thm 2.1(ii)]{balogh2012functional}.
\qed
\end{proof}

\begin{remark}[Solution according to Balogh et al.]
    This theorem builds on the work by Balogh et al.\cite{balogh2012functional} who provide a solution concept that is (potentially) different from the strong, weak or viscosity solution.
    The point of departure is to replace the norm of the gradient (i.e. the dual norm of the differential) with a metric subgradient, i.e. we replace $\left\| \nabla_{\mathcal{G}_2} W (p,t) \right\|_{\mathcal{G}_2}$ by:
    \begin{equation*}
        \limsup_{p' \to p}{\frac{\max{\left( W(p,t) - W(p',t) ,\ 0\right)}}{d_{\mathcal{G}_2}(p,p')}},
    \end{equation*}
    and we get a solution concept in terms of this slightly different notion of a gradient.
\end{remark}

\begin{remark}[Unique viscosity solutions]
    For the case $\alpha=\sfrac{1}{2}$ we lose the superlinearity of the Hamiltonian and can no longer apply Balogh et al.'s approach \cite{balogh2012functional}. The solution for $\alpha>\sfrac{1}{2}$ \eqref{eq:dilation_kernel} converges pointwise to the solution for $\alpha=\sfrac{1}{2}$ \eqref{eq:dilation_kernel_1/2} as $\alpha \downarrow \sfrac{1}{2}$. However, the solution concept changes from that of Balogh et al. to that of a viscosity solution \cite{dragoni2007metric,azagra2005nonsmooth}. In the general Riemannian homogeneous space setting the result by Azagra \cite[Thm~6.24]{azagra2005nonsmooth} applies. It states that viscosity solutions of Eikonal PDEs on complete Riemannian manifolds are given by the distance map departing from the boundary of a given open and bounded set. As Eikonal equations  directly relate to geodesically equidistant wavefront propagation on manifolds (\cite[ch.~3]{rund1966hamilton},\cite[ch.~4,app.~E]{bekkers2015pde}, \cite{evans2010partial}) one expects that the solutions (\ref{eq:solve_dilation}),(\ref{eq:solve_erosion}) of (\ref{eq:erosion_system}) are indeed the viscosity solutions (for resp. the $+$ and $-$-case) for $\alpha=\sfrac{1}{2}$. 
    
    In many matrix Lie group quotients, like the Heisenberg group $H(2d+1)$ studied in \cite{manfredi2002version}, or in our case of interest: the homogeneous space $\mathbb{M}_d$ of positions and orientations) this is indeed the case. One can describe $G$-invariant vector fields via explicit coordinates and transfer HJB systems on $G/H$ directly towards HJB-systems on $\R^n$ or $\R^d \times S^q$, with $n=d+q=\textrm{dim}(G/H)$. Then one can directly apply results by Dragoni \cite[Thm.4]{dragoni2007metric} and deduce that \emph{our solutions, the dilations in (\ref{eq:solve_dilation}) resp. erosions in (\ref{eq:solve_erosion}), are indeed the unique viscosity solutions of HJB-PDE system (\ref{eq:erosion_system}) for the $+$ and $-$-case, for all $\alpha \in [\sfrac{1}{2},1]$}. Details are left for future research.
\end{remark}

To get an idea of how the kernel in \eqref{eq:dilation_kernel} operates in conjunction with morphological convolution we take $G=G/H=\mathbb{R}$ and see how the operation evolves simple data, the kernels and results at $t=1$ are shown in Fig. \ref{fig:soft_max_pooling}. Observe that with $\alpha$ close to $\sfrac{1}{2}$ (kernel and result in red) we obtain what amounts to an equivariant version of max/min pooling.
\begin{figure*}[ht]
    \centering
    \includegraphics[width=1.0\linewidth]{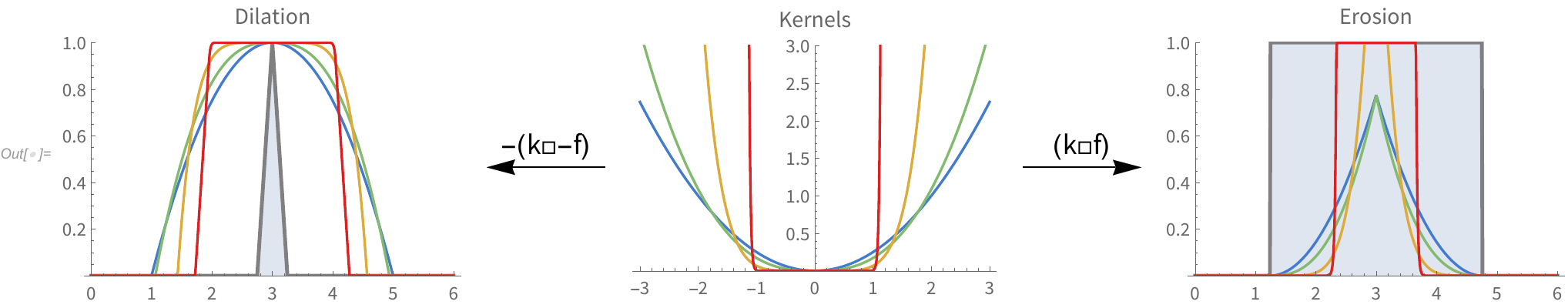}
    \includegraphics[width=0.4\linewidth]{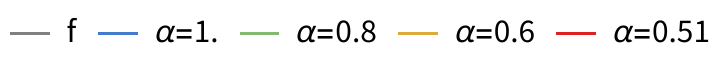}
	\caption{In the center we have kernels of the type \eqref{eq:dilation_kernel} in $\mathbb{R}$ (or the signed distance on a manifold of choice) for some $\alpha \in \left( \sfrac{1}{2}, 1\right]$ and $t=1$, which solves dilation/erosion. For $\alpha \to \sfrac{1}{2}$ this kernel converges to the type in \eqref{eq:dilation_kernel_1/2}, i.e. the solution is obtained by max/min pooling. On the left we morphologically convolve a spike (in gray) with a few of these kernels, we see that if $\alpha \to \sfrac{1}{2}$ we get max pooling, conversely we can call the case $\alpha>\sfrac{1}{2}$  \emph{soft} max pooling. On the right we similarly erode a plateau, which for $\alpha \to \sfrac{1}{2}$ yields min pooling. The effects of these operations in the image processing context can also be seen in the last two columns of Fig.~\ref{fig:interpretation_r2}.}
	\label{fig:soft_max_pooling}
\end{figure*}

The level sets of the kernels $k^{\alpha}_{t}$ for $\alpha > \sfrac{1}{2}$ are of the same shape as for the approximate diffusion kernels, see Fig.~\ref{fig:shapes}, for $\alpha=\sfrac{1}{2}$ these are the stencils over which we would perform min/max pooling. 
\begin{remark}
The level sets in Fig.~\ref{fig:shapes} are balls in $G/H=\mathbb{M}_2$ that do not depend on $\alpha$. It is only the growth of the kernel values when passing through these level sets that depends on $\alpha$. As such the example $G/H=\R$ and Fig.~\ref{fig:soft_max_pooling} is very representative to the general $G/H$ case. In the general $G/H$ case Fig.~\ref{fig:shapes} still applies when one replaces the horizontal $\R$-axis with a signed distance along a minimizing geodesic in $G/H$ passing through the origin. In that sense $\alpha \in [\sfrac{1}{2},1]$ regulates soft-max pooling over Riemannian balls in $G/H$.
\end{remark}

We can now define a more tangible approximate kernel by again replacing the exact metric $d_{\mathcal{G}_2}$ with the logarithmic approximation $\rho_{\mathcal{G}_2}$.
\begin{definition}[Approximate dilation/erosion kernel]
    \label{prop:approximate_morph_kernel}
    The approximate morphological convolution kernel $k^{\alpha,\mathrm{appr}}_{t}$ for small times $t$ and $\alpha\in\left( \sfrac{1}{2},1\right]$ is given by
    \begin{equation}
        \label{eq:approx_dilation_kernel}
        k^{\alpha,\mathrm{appr}}_{t} (p)
        :=
        \nu_{\alpha}
       t^{-\frac{1}{2 \alpha -1}}
        \rho_{\mathcal{G}_2}(p)^{\frac{2 \alpha}{2 \alpha -1}},
    \end{equation}
    with $\nu_{\alpha}:=\left( \frac{2 \alpha - 1}{ \left(2 \alpha\right)^{2\alpha/(2\alpha-1)}} \right)$
    and for $\alpha=\sfrac{1}{2}$ by
    \begin{equation}
        \label{eq:approx_dilation_kernel_1/2}
        k^{\sfrac{1}{2},\mathrm{appr}}_{t} (p)
        =
        \begin{cases}
            0 \qquad &\text{if } \rho_{\mathcal{G}_2}(p) \leq t,
            \\
            \infty &\text{if } \rho_{\mathcal{G}_2}(p) > t
        \end{cases}.
    \end{equation}
\end{definition}
We used this approximation in our parallel GPU-algorithms (for our PDE-G-CNNs experiments in Section~\ref{sec:experiments}). It is highly preferable over the `exact' solution based on the true distance as this would require Eikonal PDE solvers (\cite{bekkers2015pde,mirebeau2019hamiltonian} which would not be practical for parallel GPU implementations of PDE-G-CNNs. Again the approximations are reasonable as long as the spatial anisotropy does not get too high, see Fig.~\ref{fig:distance_comparison} for an example.

%for now 
Next we formalize the  theoretical underpinning of the approximations in the upcoming corollary.

An immediate consequence of Def.~\ref{prop:approximate_morph_kernel} and Lem.~\ref{lem:bounding-metric} (keeping in mind that the kernel expressions in Def.~\ref{prop:approximate_morph_kernel} are monotonic w.r.t. $\rho:=\rho_{\mathcal{G}_{2}}(p)$) is that we can enclose our approximate morphological kernel with the exact morphological kernels in the same way as we did for the (fractional) diffusion kernel in Theorem~\ref{thm:alpha_kernel_squeeze}. This proves the following Corollaries:

\begin{corollary}
    Let $\alpha \in \left( \sfrac{1}{2}, 1 \right]$, then for all $t>0$
    \begin{equation*}
        k_t^\alpha(p)
        \leq 
        k_t^{\alpha,\mathrm{appr}}(p) 
        \leq
        C_{\mathrm{metr}}^{\frac{2\alpha}{2\alpha-1}} k_t^{\alpha}(p)
        \qquad\hfill
        \mathrm{for } \ p \in G/H.
    \end{equation*}
    For the case $\alpha=\sfrac{1}{2}$ the approximation is exact in an inner and outer region:
    \begin{align*}
        &k_t^{\sfrac{1}{2},\mathrm{appr}}(p) = k_t^{\sfrac{1}{2}}(p) = 0 
        \qquad &\mathrm{if }\ \rho_{\mathcal{G}_2}(p)^{2 \alpha} \leq t,
        \\
        &k_t^{\sfrac{1}{2},\mathrm{appr}}(p) = k_t^{\sfrac{1}{2}}(p) = \infty
        &\mathrm{if }\ \d_{\mathcal{G}_2}(p_0,p)^{2\alpha} > t ,
    \end{align*} 
    but in the intermediate region where $\rho_{\mathcal{G}_2}(p)^{2 \alpha} > t$ and $\d_{\mathcal{G}_2}(p_0,p)^{2\alpha} \leq t$ we have $k_t^{\sfrac{1}{2},\mathrm{appr}}=\infty$ while $k_t^{\sfrac{1}{2}}=0$.
\end{corollary}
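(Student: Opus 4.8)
The plan is to exploit the fact that both the exact kernel $k_t^\alpha$ and its approximation $k_t^{\alpha,\mathrm{appr}}$ carry the identical positive prefactor $\nu_\alpha\, t^{-1/(2\alpha-1)}$ and differ only by having $d_{\mathcal{G}_2}(p_0,p)$ replaced by $\rho_{\mathcal{G}_2}(p)$. Hence the whole comparison reduces to transporting the two-sided bound of Lemma~\ref{lem:bounding-metric}, namely $d_{\mathcal{G}_2}(p_0,p) \le \rho_{\mathcal{G}_2}(p) \le C_{\mathrm{metr}}\, d_{\mathcal{G}_2}(p_0,p)$, through the power map $x \mapsto x^{2\alpha/(2\alpha-1)}$ and then rescaling by the common prefactor.

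For $\alpha \in (\sfrac{1}{2},1]$ I would first note that the exponent $\frac{2\alpha}{2\alpha-1}$ is strictly positive, so $x \mapsto x^{2\alpha/(2\alpha-1)}$ is monotonically increasing on $[0,\infty)$. Applying this monotone map to each term of the inequality from Lemma~\ref{lem:bounding-metric} preserves the ordering and raises the constant to $C_{\mathrm{metr}}^{2\alpha/(2\alpha-1)}$; multiplying throughout by the positive factor $\nu_\alpha\, t^{-1/(2\alpha-1)}$ then yields exactly the claimed sandwich $k_t^\alpha(p) \le k_t^{\alpha,\mathrm{appr}}(p) \le C_{\mathrm{metr}}^{2\alpha/(2\alpha-1)}\, k_t^\alpha(p)$. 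This part is routine once the direction of monotonicity is fixed.

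The case $\alpha=\sfrac{1}{2}$ requires a separate argument because both kernels degenerate into $\{0,\infty\}$-valued indicators (here $2\alpha=1$), with the exact one switching at the threshold $d_{\mathcal{G}_2}(p_0,p)=t$ and the approximate one at $\rho_{\mathcal{G}_2}(p)=t$. Here I would use only the lower bound $d_{\mathcal{G}_2}(p_0,p) \le \rho_{\mathcal{G}_2}(p)$ and split into three regions. In the inner region $\rho_{\mathcal{G}_2}(p) \le t$ the lower bound forces $d_{\mathcal{G}_2}(p_0,p) \le t$, so both kernels vanish; in the outer region $d_{\mathcal{G}_2}(p_0,p) > t$ the same inequality forces $\rho_{\mathcal{G}_2}(p) > t$, so both kernels equal $\infty$. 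It is exactly the intermediate region $d_{\mathcal{G}_2}(p_0,p) \le t < \rho_{\mathcal{G}_2}(p)$ — arising wherever the estimate $d_{\mathcal{G}_2} \le \rho_{\mathcal{G}_2}$ is strict — where the approximation returns $\infty$ while the exact kernel is $0$.

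The only place where I expect to pause is this intermediate region in the $\alpha=\sfrac{1}{2}$ case: unlike the $\alpha>\sfrac{1}{2}$ regime the discrepancy cannot be absorbed into a multiplicative constant, so one must report the disagreement honestly rather than bound it. This is the discrete shadow of the blow-up $C_{\mathrm{metr}}^{2\alpha/(2\alpha-1)} \to \infty$ as $\alpha \downarrow \sfrac{1}{2}$ when $C_{\mathrm{metr}}>1$, so the two cases are mutually consistent; I would flag this limiting behaviour as a sanity check. Everything else is a direct substitution into Lemma~\ref{lem:bounding-metric} and carries no further analytic content.
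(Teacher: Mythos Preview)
Your proposal is correct and mirrors the paper's own argument, which simply declares the corollary an immediate consequence of Def.~\ref{prop:approximate_morph_kernel} and Lem.~\ref{lem:bounding-metric} via monotonicity of the kernel expressions in $\rho_{\mathcal{G}_2}$. You have merely spelled out the one-line substitution in full, including the limiting $\alpha\downarrow\sfrac{1}{2}$ sanity check, so there is nothing to add.
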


Alternatively, instead of bounding by value we can bound in time, in which case we do not need to distinguish different cases of $\alpha$.
\begin{corollary}
Let $\alpha \in \left[ \sfrac{1}{2}, 1 \right], t>0$ then for all $p \in G/H$ one has
    \begin{equation*}
        k_t^\alpha(p)
        \leq 
        k_t^{\alpha,\mathrm{appr}}(p) 
        \leq
        k_{C_{\mathrm{metr}}^{-2\alpha}t}^{\alpha}(p)
        \end{equation*}
\end{corollary}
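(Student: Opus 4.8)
The plan is to reduce the claim to two elementary facts about the kernel of Def.~\ref{def:morph_kernel}: that it is a nondecreasing function of its distance argument, and that it obeys a scaling law in which a dilation of the distance can be traded for a rescaling of time. Abbreviating $d:=d_{\mathcal{G}_2}(p_0,p)$ and $\rho:=\rho_{\mathcal{G}_2}(p)$, and writing $\hat{k}_t^\alpha(r)$ for the common defining expression so that $k_t^\alpha(p)=\hat{k}_t^\alpha(d)$ and $k_t^{\alpha,\mathrm{appr}}(p)=\hat{k}_t^\alpha(\rho)$, I would first record that for every $\lambda>0$ and every $\alpha\in[\sfrac{1}{2},1]$,
\[
\hat{k}_t^\alpha(\lambda r)=\hat{k}_{\lambda^{-2\alpha}t}^\alpha(r).
\]
For $\alpha\in(\sfrac{1}{2},1]$ this is immediate from the power-law form \eqref{eq:dilation_kernel}: the factor $\lambda^{2\alpha/(2\alpha-1)}$ pulled out of $r^{2\alpha/(2\alpha-1)}$ is exactly the factor produced by replacing $t$ by $\lambda^{-2\alpha}t$ in $t^{-1/(2\alpha-1)}$. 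For $\alpha=\sfrac{1}{2}$ it holds because the threshold condition $\lambda r\leq t$ coincides with $r\leq\lambda^{-1}t$.

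With these in hand the corollary is a two-line application of the metric estimate $d\leq\rho\leq C_{\mathrm{metr}}\,d$ from Lemma~\ref{lem:bounding-metric}. For the lower bound, monotonicity together with $d\leq\rho$ gives $k_t^\alpha(p)=\hat{k}_t^\alpha(d)\leq\hat{k}_t^\alpha(\rho)=k_t^{\alpha,\mathrm{appr}}(p)$. For the upper bound, monotonicity together with $\rho\leq C_{\mathrm{metr}}\,d$ gives $k_t^{\alpha,\mathrm{appr}}(p)=\hat{k}_t^\alpha(\rho)\leq\hat{k}_t^\alpha(C_{\mathrm{metr}}\,d)$, and the scaling law with $\lambda=C_{\mathrm{metr}}$ rewrites the right-hand side as $\hat{k}_{C_{\mathrm{metr}}^{-2\alpha}t}^\alpha(d)=k_{C_{\mathrm{metr}}^{-2\alpha}t}^\alpha(p)$, which is precisely the claimed enclosure.

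The step that needs genuine care, and which I expect to be the main obstacle, is the endpoint $\alpha=\sfrac{1}{2}$, where the kernel is no longer a finite power law but the $\{0,\infty\}$-valued indicator \eqref{eq:dilation_kernel_1/2}. Here the value-based enclosure of the preceding corollary fails—there is an intermediate shell on which $k_t^{\sfrac{1}{2},\mathrm{appr}}=\infty$ while $k_t^{\sfrac{1}{2}}=0$—so one cannot simply quote it. Instead I would verify the two inequalities directly from the definitions: the lower bound because $\rho\leq t$ forces $d\leq t$, and the upper bound because $d\leq C_{\mathrm{metr}}^{-1}t$ forces $\rho\leq C_{\mathrm{metr}}\,d\leq t$. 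These are exactly the statements that the shifted threshold $C_{\mathrm{metr}}^{-2\alpha}t=C_{\mathrm{metr}}^{-1}t$ absorbs the gap between $d$ and $\rho$, showing that the time-shifted formulation—unlike the value-scaled one—stays valid across the entire range $\alpha\in[\sfrac{1}{2},1]$.
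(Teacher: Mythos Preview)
Your argument is correct and matches the paper's own reasoning, which treats the corollary as an immediate consequence of the monotonicity of the kernel in its distance argument together with the metric bound $d\leq\rho\leq C_{\mathrm{metr}}\,d$ from Lemma~\ref{lem:bounding-metric}. Your explicit scaling identity $\hat{k}_t^\alpha(\lambda r)=\hat{k}_{\lambda^{-2\alpha}t}^\alpha(r)$ is a clean way to make the time shift transparent and already covers the endpoint $\alpha=\sfrac{1}{2}$, so your final paragraph is a helpful sanity check rather than a separate case that the argument requires.
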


With these two bounds on our approximate morphological kernels we end our theoretical results.

%%%
%%%
%%%
%%%
\section{Generalization of (Group-)CNNs}
\label{sec:G-CNN-generalization}
    
In this section we point out the similarities between common (G-)CNN operations and our PDE-based approach.
Our goal here is not so much claiming that our PDE approach serves as a useful model for analyzing (G-)CNNs, but that modern CNNs already bear some resemblance to a network of PDE solvers.
Noticing that similarity, our approach is then just taking the next logical step by structuring a network to explicitly solve a set of PDEs.

\subsection{Discrete Convolution as Convection \& Diffusion}

Now that we have seen how PDE-G-CNNs are designed we show how they generalize conventional G-CNNs. Starting with an initial condition $U$ we show how group convolution with a general kernel $k$ can be interpreted as a superposition of solutions \eqref{eq:convection_solution} of convection PDEs:
\begin{align*}
    \left( k *_{G/H} U \right)(p)
    &=
    \frac{1}{\mu_G(H)}
    \int_G
    k \left( g^{-1} p \right)
    U \left( g p_0 \right) \d\mu_G(g)
    \\
    =&
    \frac{1}{\mu_G(H)}
    \int_G
    k \left( g^{-1} g_p p_0 \right)
    U \left( g p_0 \right) \d\mu_G(g)
    ,
\intertext{for any $g_p \in p$, now change variables to $q=g_p^{-1} g$ and recall that $\mu_G$ is left invariant:}
    =&
    \frac{1}{\mu_G(H)}
    \int_G
    k \left( q^{-1} p_0 \right)
    U \left( g_p q p_0 \right)
    \d\mu_G(q)
    .
\end{align*}
In this last expression we recognize \eqref{eq:convection_solution} and see that we can interpret $p \mapsto U \left( g_p \,q  p_0 \right)$ as the solution of the convection PDE \eqref{eq:convection_pde} at time $t=1$ for a convection vector field $\bm{c}$ that has flow lines given by $\gamma_{\bm{c}}(t)=\exp_G \left( - t \log_G q \right) p_0$ so that $(\gamma_{\mathbf{c}}(1))^{-1} p_0= q p_0$.
As a result the output $k *_{G/H} U$ can then be seen as a weighted sum of solutions over all possible left invariant convection vector fields.

Using this result we can consider what happens in the discrete case where we take the kernel $k$ to be a linear combination of displaced diffusion kernels $K_t^\alpha$ (for some choice of $\alpha$) as follows:
\begin{equation}
    k(p) = \sum_{i=1}^n k_i \, K^\alpha_{t_i} \left( g_i^{-1} p \right),
\end{equation}
where for all $i$ we fix a weight $k_i \in \mathbb{R}$, diffusion time $t_i \geq 0$ and a displacement $g_i \in G$. Convolving with this kernel yields:
\begin{align*}
    &\left( k *_{G/H} U \right)(p)
    \\
    &=
    \int_G
    \sum_{i=1}^n k_i \, K^\alpha_{t_i} \left( g_i^{-1} \, g^{-1} p \right)
    U \left( g p_0 \right)
    \d\mu_G(g)
    \\
    &=
    \sum_{i=1}^n
    k_i
    \int \limits_G
    K^\alpha_{t_i} \left( g_i^{-1} \, g^{-1} p \right)
    U \left( g p_0 \right)
    \d\mu_G(g),
\intertext{we change variables to $h=g\, g_i$:}
    &=
    \sum_{i=1}^n
    k_i
    \int \limits_G
    K^\alpha_{t_i} \left( h^{-1} p \right)
    U \left( h \, g_i^{-1} p_0 \right)
    \d\mu_G(h)
    \\
    &=
    \sum_{i=1}^n
    k_i
    \left(
        K^\alpha_{t_i}
        *_{G/H}
        \left[
            \vphantom{\Big[}
            q \mapsto U \left( g_q \, g_i^{-1} p_0 \right)
        \right]
    \right)(p)
    .
\end{align*}
Here again we recognize $q \mapsto U \left( g_q \, g_i^{-1} p_0 \right)$ as the solution \eqref{eq:convection_solution} of the convection PDE at $t=1$ with flow lines induced by $\gamma_{\bm{c}}(t)=\exp_G(t \log_G g_i)$. Subsequently we take these solutions and convolve them with a (fractional) diffusion kernel with scale $t_i$, i.e. after convection we apply the fractional diffusion PDE with evolution time $t_i$ and finally make a linear combination of the results.

We can conclude that G-CNNs fit in our PDE-based model by looking at a single discretized group convolution as a set of single-step PDE units working on an input, without the morphological convolution and with specific choices made for the convection vector fields and diffusion times.

%%%
%%%
%%%
\subsection{Max Pooling as Morphological Convolution}

The ordinary max pooling operation commonly found in convolutional neural networks can also be seen as a morphological convolution with a kernel for $\alpha=\sfrac{1}{2}$.

\begin{proposition}[Max pooling]
    \label{thm:max_pooling}
    Let $f \in L^\infty\left( G/H \right)$, let $S \subset G/H$ be non empty and define $k_{S} : G/H \to \mathbb{R} \cup \left\{ \infty \right\}$ as:
    \begin{equation}
        \label{eq:max_pooling_kernel}
        k_{S} (p)
        :=
        \begin{cases}
            0 \qquad &\text{if } p \in S,
            \\
            \infty \qquad&\text{else}.
        \end{cases}
    \end{equation}
    Then:
    \begin{equation}
        -\left( k_S \morph -f \right)(p)
        =
        \sup_{g \in G : g^{-1}  p \in S}
        f \left( g  p_0 \right)
        .
    \end{equation}
\end{proposition}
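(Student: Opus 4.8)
The plan is to unfold the definition of the morphological convolution from Definition~\ref{def:morphological_convolution} and to carry out the infimum directly, handling the two possible values of the indicator-type kernel $k_S$ separately. First I would write
\[
\left( k_S \morph -f \right)(p) = \inf_{g \in G} \left\{ k_S(g^{-1}p) - f(g p_0) \right\},
\]
and then split the index set $G$ into those $g$ with $g^{-1}p \in S$ and those with $g^{-1}p \notin S$. On the first set the kernel contributes the value $0$, and on the second it contributes $+\infty$, directly from the definition \eqref{eq:max_pooling_kernel} of $k_S$.

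The key observation is that, since $f \in L^\infty\left( G/H \right)$, the quantity $-f(g p_0)$ is finite and bounded, so any $g$ with $g^{-1}p \notin S$ yields $k_S(g^{-1}p) - f(g p_0) = +\infty$ and hence does not contribute to the infimum, \emph{provided} the constraint set $\left\{ g \in G : g^{-1}p \in S \right\}$ is nonempty. To verify nonemptiness I would invoke the transitivity of the $G$-action on $G/H$: choosing any $q \in S$ (possible because $S \neq \emptyset$), transitivity furnishes a $g \in G$ with $g q = p$, i.e. $g^{-1}p = q \in S$. Thus the restricted infimum is over a nonempty set and the $+\infty$ terms are genuinely irrelevant.

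It then remains to pull the minus sign through the infimum:
\[
\left( k_S \morph -f \right)(p) = \inf_{g \,:\, g^{-1}p \in S} \left( -f(g p_0) \right) = -\sup_{g \,:\, g^{-1}p \in S} f(g p_0),
\]
and negating both sides yields the claim. I expect no serious obstacle here; the proof is essentially a direct computation. The only points requiring care are the bookkeeping with the value $+\infty$ (together with the convention $\infty - c = \infty$ for finite $c$, which is where boundedness of $f$ enters) and the transitivity argument guaranteeing that the constraint set is nonempty, so that the restricted infimum is meaningful rather than vacuously $+\infty$.
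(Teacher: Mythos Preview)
Your proof is correct and follows essentially the same route as the paper: unfold Definition~\ref{def:morphological_convolution}, split the infimum over $G$ according to whether $g^{-1}p \in S$, discard the $+\infty$ branch, and convert $-\inf(-f)$ to $\sup f$. Your explicit check that the constraint set $\{g \in G : g^{-1}p \in S\}$ is nonempty via transitivity, and your remark that $f \in L^\infty$ is what makes $\infty - f(gp_0) = \infty$ legitimate, are details the paper leaves implicit; otherwise the arguments coincide.
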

\noindent
We can recognize the morphological convolution as a generalized form of max pooling of the function $f$ with stencil~$S$.

\begin{proof}
    Filling in \eqref{eq:max_pooling_kernel} into Def. \ref{def:morphological_convolution} yields:
    \begin{align*}
        &-\left( k_S \morph -f \right) (p)
        \\
        &=
        -\inf \left\{
            \inf_{g \in G: g^{-1}  p \in S} -f \left( g  p_0 \right)
            ,
            \inf_{g \in G: g^{-1}  p \notin S} -f \left( g  p_0 \right) + \infty
        \right\}
        \\[1em]
        &=
        -\inf_{g \in G: g^{-1}  p \in S} -f \left( g  p_0 \right)
        \\[1em]
        &=
        \sup_{g \in G: g^{-1}  p \in S} f \left( g  p_0 \right)
        .
    \end{align*}
    \qed
\end{proof}

In particular cases we recover a more familiar form of max pooling as the following corollary shows.
\begin{corollary}[Euclidean Max Pooling]
    Let $G=G/H=\mathbb{R}^n$ and let $f \in C^0 \left( \mathbb{R}^n \right)$ with $S \subset \mathbb{R}^n$ compact then:
    \begin{equation*}
        -\left( k_S \morph_{\mathbb{R}^n} -f \right)(x)
        =
        \max_{y \in S} f \left( x-y \right)
        ,
    \end{equation*}
    for all $x \in \mathbb{R}^n$.
\end{corollary}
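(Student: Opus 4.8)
The plan is to derive this corollary as a direct specialization of Proposition~\ref{thm:max_pooling}, so that the only genuine analytic content is a single application of the extreme value theorem. First I would invoke Proposition~\ref{thm:max_pooling} verbatim, which already gives
\[
-\left( k_S \morph_{\mathbb{R}^n} -f \right)(x) = \sup_{g \in \mathbb{R}^n \,:\, g^{-1} x \in S} f \left( g\, p_0 \right).
\]
Everything after this is a matter of rewriting the abstract group operations in the concrete additive structure of the case $G = G/H = \mathbb{R}^n$.

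Next I would translate each symbol into $\mathbb{R}^n$-arithmetic. Here $H = \{0\}$ is trivial, so the reference point is $p_0 = 0$; the group law is addition, the inverse $g^{-1}$ is $-g$, and the action $g\, p_0$ collapses to $g$. Hence $g^{-1} x = x - g$ and $f(g\, p_0) = f(g)$, turning the constraint into $x - g \in S$ and the expression into $\sup_{g \,:\, x - g \in S} f(g)$. I would then apply the change of variables $y = x - g$ (equivalently $g = x - y$): as $g$ ranges over $\{g : x - g \in S\}$ the variable $y$ ranges exactly over $S$, while $f(g) = f(x - y)$, so the supremum becomes $\sup_{y \in S} f(x - y)$.

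Finally, since $S$ is compact and $f \in C^0(\mathbb{R}^n)$, the map $y \mapsto f(x - y)$ is continuous on the compact set $S$, so by the extreme value theorem the supremum is attained and may be written as $\max_{y \in S} f(x - y)$, which is precisely the claimed identity. The main, and essentially the only, obstacle is this sup-to-max upgrade, which is exactly where the hypotheses ``$S$ compact'' and ``$f$ continuous'' are used; the preceding steps are purely notational substitutions specializing the homogeneous-space formula of Proposition~\ref{thm:max_pooling} to Euclidean space, and carry no additional difficulty.
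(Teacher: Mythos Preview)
Your proposal is correct and follows exactly the approach the paper intends: the paper states this corollary without proof, leaving it as an immediate specialization of Proposition~\ref{thm:max_pooling}, and your argument supplies precisely the routine translation to additive $\mathbb{R}^n$ together with the extreme value theorem for the sup-to-max step.
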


The observation that max pooling is a particular limiting case of morphological convolution allows us to think of the case with $\alpha > \sfrac{1}{2}$ as a \emph{soft} variant of max pooling, one that is better behaved under small perturbations in a discretized context.  

\subsection{ReLUs as Morphological Convolution}

Max pooling is not the only common CNN operation that can be generalized by morphological convolution as the following proposition shows.

\begin{proposition}
    Let $f$ be a compactly supported continuous function on $G/H$. Then dilation with the kernel
    \begin{equation*}
        k_{\mathrm{ReLU},f}(p) :=
        \begin{cases}
            \ 0 \qquad & \text{if } p = p_0,
            \\
            \ \sup\limits_{q \in G/H} f(q) & \text{else},
        \end{cases}
    \end{equation*}
    equates to applying a Rectified Linear Unit to the function $f$:
    \begin{equation*}
        -\left( k_{\text{ReLU},f} \morph -f \right)(p)
        =
        \max \left\{ 0, f(p) \vphantom{\big\vert}\right\}
        .
    \end{equation*}
\end{proposition}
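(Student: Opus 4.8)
The plan is to unfold Definition~\ref{def:morphological_convolution} of the morphological convolution $\morph_G$ and split the infimum over $g \in G$ according to the two branches of the kernel $k_{\mathrm{ReLU},f}$. Writing $M := \sup_{q \in G/H} f(q)$, which is finite and attained because $f$ is continuous with compact support, I would start from
\begin{equation*}
    -\left( k_{\mathrm{ReLU},f} \morph -f \right)(p)
    =
    -\inf_{g \in G}\left\{ k_{\mathrm{ReLU},f}(g^{-1}p) - f(g p_0) \right\}.
\end{equation*}
The kernel value $k_{\mathrm{ReLU},f}(g^{-1}p)$ equals $0$ precisely when $g^{-1}p = p_0$, i.e. when $g p_0 = p$ (equivalently $g \in p$), and equals $M$ otherwise. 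This lets me rewrite the inner infimum as the minimum of two partial infima, handled in the two steps below.

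First I would treat the branch $g \in p$. For every such $g$ one has $g p_0 = p$, so the summand is the single value $-f(p)$, and hence the partial infimum over this branch is exactly $-f(p)$. Next I would treat the branch $g p_0 \neq p$, where the summand is $M - f(g p_0) \geq 0$ for every $g$ since $f(g p_0) \leq M$. As $g$ ranges over all group elements with $g p_0 \neq p$, the point $g p_0$ ranges over $(G/H)\setminus\{p\}$, so this partial infimum equals $M - \sup_{q \neq p} f(q)$.

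The crucial observation — and the one place where the hypotheses genuinely enter — is that $\sup_{q \neq p} f(q) = M$. This holds because $G/H$ is a homogeneous space of positive dimension and therefore has no isolated points: if the maximum $M$ is attained at some $q^\ast \neq p$ the claim is immediate, and if it is attained only at $q^\ast = p$ then continuity of $f$ together with the existence of points arbitrarily close to (but distinct from) $p$ forces $\sup_{q\neq p} f = M$ as well. Consequently the second branch contributes $0$. Combining the two branches gives $\inf_{g}\{\cdots\} = \min\{-f(p),\,0\}$, and negating yields
\begin{equation*}
    -\left( k_{\mathrm{ReLU},f} \morph -f \right)(p)
    = -\min\{-f(p),0\} = \max\{0, f(p)\},
\end{equation*}
which is the claimed ReLU. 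The only real subtlety — the main obstacle — is justifying $\sup_{q\neq p} f = M$; everything else is bookkeeping with the definition of $\morph_G$. This is exactly why continuity and compact support of $f$ are assumed (guaranteeing $M$ finite and attained), and why it is essential that the homogeneous space have no isolated points.
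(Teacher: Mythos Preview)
Your proposal is correct and follows essentially the same approach as the paper: unfold the definition of $\morph_G$, split the infimum into the branches $g^{-1}p = p_0$ and $g^{-1}p \neq p_0$, and then use that $\sup_{q\neq p} f(q) = M$ to reduce the second branch to $0$. You are in fact more explicit than the paper about why that last equality holds (invoking continuity together with the absence of isolated points in $G/H$), whereas the paper simply asserts it as a consequence of continuity and compact support.
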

\begin{proof}
    Filling in $k$ into the definition of morphological convolution:
    \begin{align*}
        &-\left( k_{\text{ReLU},f} \morph_G -f \right)(p)
        \\
        &=
        - \inf_{g \in G} k_{\text{ReLU}}(g^{-1} p) - f(g.p_0)
        \\
        &=
        - \inf_{g \in G} \left\{ \inf_{g^{-1} p = p_0} -f(g p_0) ,  \inf_{g^{-1} p \neq p_0} - f(g p_0) +    \sup_{y \in G/H} f(y)  \right\}
        \\
        &=
        \sup \left\{ f(p) ,\  \sup_{z \in G/H:z \neq p} f(z) -  \sup_{y \in G/H} f(y)  \right\},
        \intertext{due to the continuity and compact support of $f$ its supremum exists and moreover we have $\sup_{z \in G/H:z \neq p_0} f(z) =  \sup_{y \in G/H} f(y)$ and thereby we obtain the required result}
        &=
        \max \left\{f(p) \vphantom{\big\vert} ,\ 0\right\}
        .
    \end{align*}
    \qed
\end{proof}

We conclude that morphological convolution allows us to:
\begin{itemize}
    \item do pooling in an equivariant manner with transformations other then translation,
    \item do \emph{soft}  pooling that is continuous under domain transformations (illustrated in Fig. \ref{fig:soft_max_pooling}),
    \item learn the pooling region by considering the kernel $k$ as trainable,
    \item effectively fold the action of a ReLU into trainable non-linearities.
\end{itemize}

%%%
%%%
%%%
\subsection{Residual Networks}

So called \emph{residual networks} \cite{he2016deep} were introduced mainly as a means of dealing with the vanishing gradient problem in very deep networks, aiding trainability. 
These networks use so-called residual blocks, illustrated in Fig.~\ref{fig:residual}, that feature a skip connection to group a few layers together to produce a delta-map that gets added to the input.

\begin{figure}[ht!]
    \centering
    \includegraphics[width=0.4\linewidth]{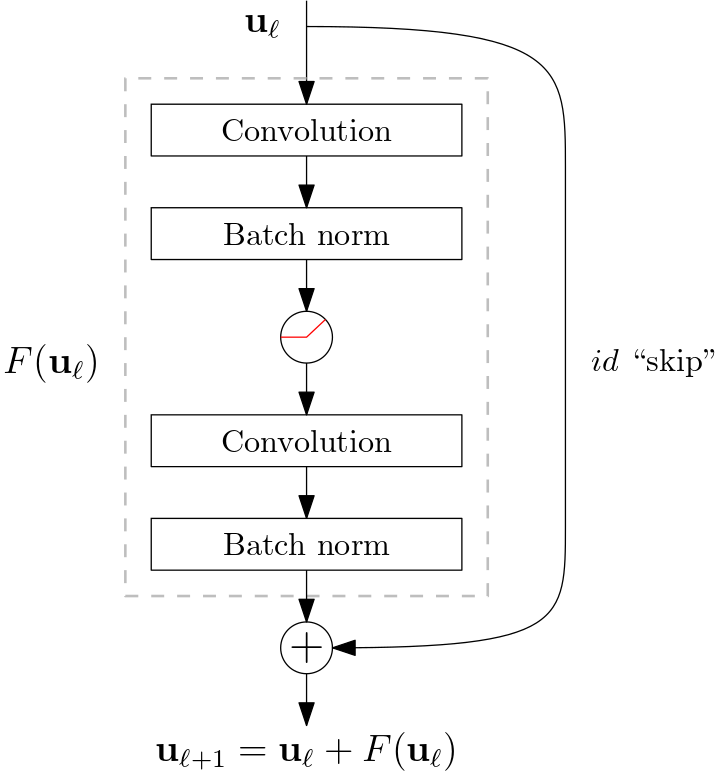}
    \caption{A residual block, like in \cite{he2016deep}, note the resemblance to a forward Euler discretization scheme.}
    \label{fig:residual}
\end{figure}

This \emph{identity + delta} structure is very reminiscent of a forward Euler discretization scheme. If we had an evolution equation of the type
\begin{equation*}
    \begin{cases}
        \frac{\partial U}{\partial t}(p,t) 
        = \mathcal{F}\left( U(\cdot,t),  p \right)
        \qquad & \text{ for } p \in M, t\geq 0,
        \\
        U(p,0) = U_0(p) & \text{ for } p \in M, 
    \end{cases}
\end{equation*}
with some operator $\mathcal{F}: L^{\infty}(M) \times M \to \mathbb{R}$, we could solve it approximately by stepping forward with:
\begin{equation*}
    U(p,t+\Delta t) = U(p,t) + \Delta t \ \mathcal{F}\left( U(\cdot, t),  p \right),
\end{equation*}
for some time step $\Delta t > 0$.
We see that this is very similar to what is implemented in the residual block in Fig.~\ref{fig:residual} once we discretize it.

The correspondence is far from exact given that multiple channels are being combined in residual blocks, so we can not easily describe a residual block with a PDE. 
Still, our takeaway is that residual networks and skip connections have moved CNNs from networks that \emph{change} data to networks that \emph{evolve} data.

For this reason we speculate that deep PDE-G-CNNs will not need (or have the same benefit from) skip connections, we leave this subject for future investigation. More discussion on the relation between residual networks and PDEs can be found in \cite{alt2021translating}.

%%%
%%%
%%%
%%%
\section{Experiments}
\label{sec:experiments}

To demonstrate the viability of PDE-based CNNs we perform two experiments where we compare the performance of PDE-G-CNNs against G-CNNs and classic CNNs. We will be doing a vessel segmentation and digit classification problem: two straightforward applications of CNNs. Examples of these two applications are illustrated in Fig.~\ref{fig:experiments}. 

The goal of the experiments is to compare the basic building blocks of the different types of networks in clearly defined feed-forward network architectures.
So we test networks of modest size only and do not just aim for the performance that would be possible with large-scale networks.

\begin{figure}[ht]
    \centering
    \begin{subfigure}[t]{0.45\linewidth}
        \centering
        \includegraphics[width=0.9\linewidth]{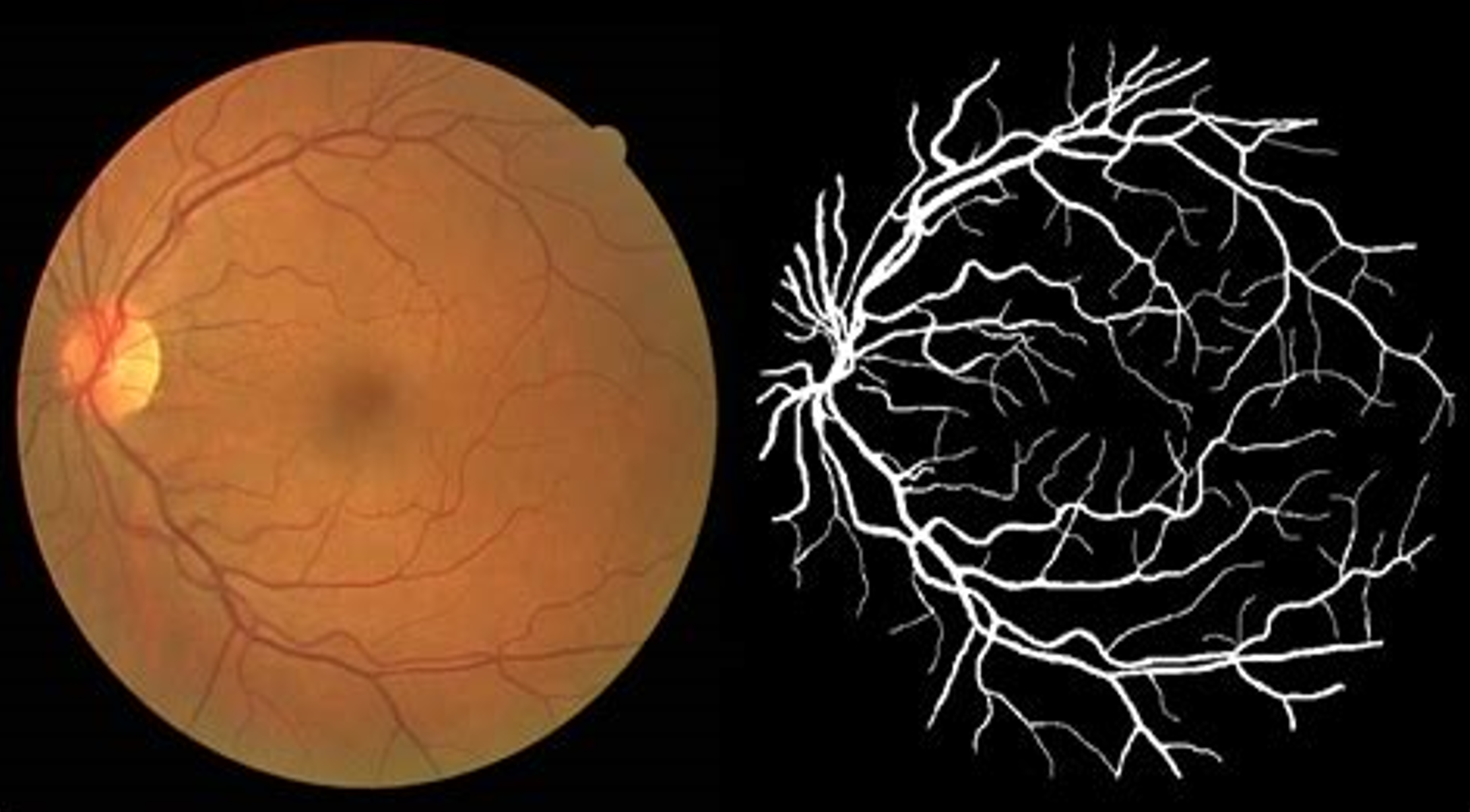}
        \caption{Example from the DRIVE \cite{staal2004ridge} dataset, showing a retinal image and its vessel segmentation.}
        \vspace*{1em}
    \end{subfigure}
    \begin{subfigure}[t]{0.45\linewidth}
        \centering
        \includegraphics[width=0.9\linewidth]{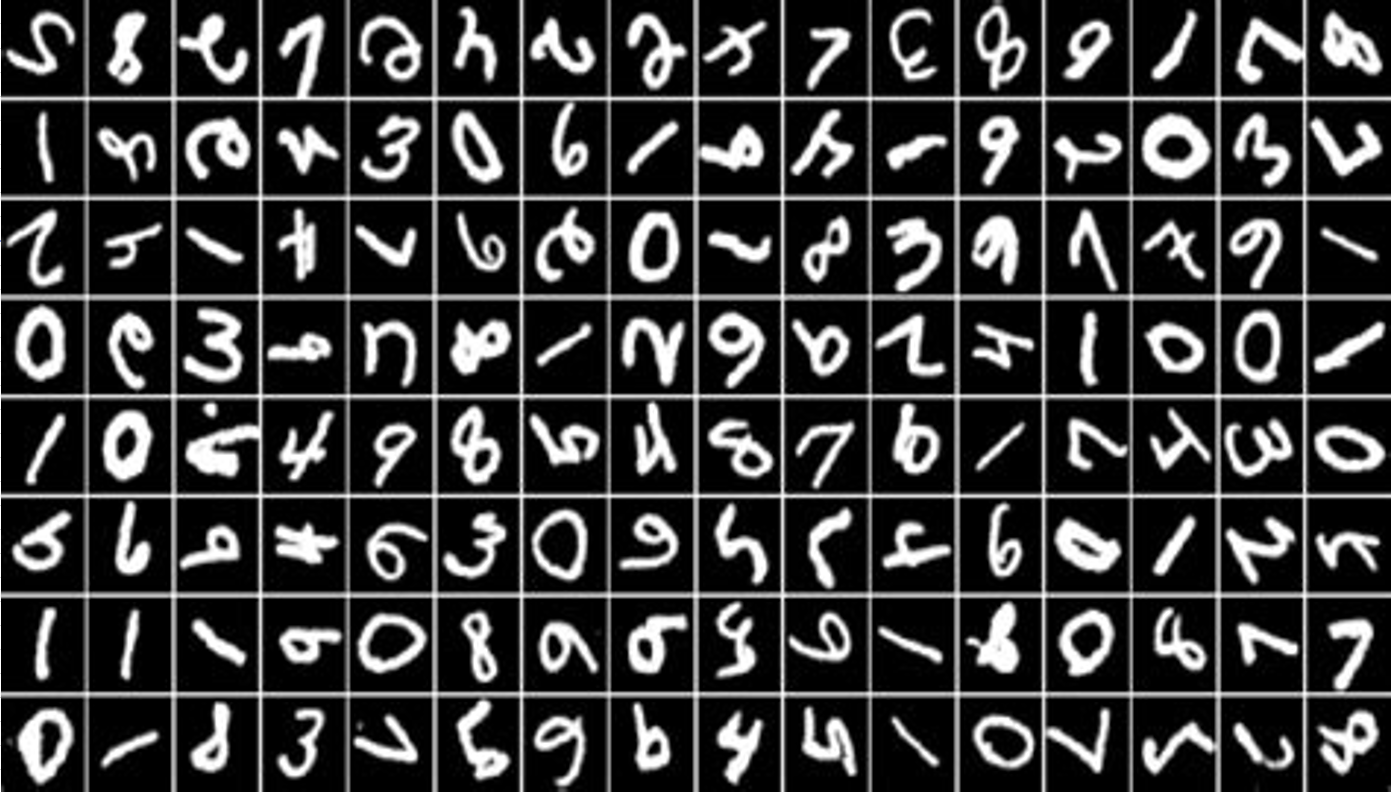}
        \caption{Examples from the RotNIST \cite{Baweja2018} dataset.}
    \end{subfigure}
    \caption{We perform a segmentation experiment on retinal vessel images and a classification experiment on rotation augmented digits.}
    \label{fig:experiments}
\end{figure}

\subsection{Implementation}

We implemented our PDE-based operators in an extension to the PyTorch deep learning framework \cite{paszke2019pytorch}. Our package is called \emph{LieTorch} and is open source. It is available at \url{https://gitlab.com/bsmetsjr/lietorch}.

The operations we have proposed in the paper have been implemented in C++ for CPUs and CUDA for Nvidia GPUs but can be used from Python through PyTorch. Our package was also designed with modularity in mind: we provide a host of PyTorch modules that can be used together to implement the PDE-G-CNNs we proposed but that can also be used separately to experiment with other architectures.

All the modules we provide are differentiable and so our PDE-G-CNNs are trainable through stochastic gradient descent (or its many variants) in the usual manner. In our experiments we have had good results with using the ADAM \cite{kingma2014adam} optimizer.

All the network models and training scripts used in the experiments are also available in the repository.

\subsection{Design Choices}

Several design choices are common to both experiments, we will go over these now.

First, we choose $G/H=\mathbb{M}_2$ for our G-CNNs and PDE-G-CNNs and so go for roto-translation equivariant networks. In all instances we lift to 8 orientations.

Second, we use the convection, dilation and erosion version of \eqref{eq:cnn_pde}, hence we refer to these networks as PDE-CNNs of the \emph{CDE}-type. 
Each PDE-layer is implemented as in Fig.~\ref{fig:traditional} with the single-pass PDE solver from Fig.~\ref{fig:cdde-layer} without the convolution.
So no explicit diffusion is used and the layer consists of just resampling and two morphological convolutions. 
Since we do the resampling using trilinear interpolation this does introduce a small amount of implicit diffusion.

\begin{remark}[Role of diffusion]
In these experiments we found no benefit to adding diffusion to the networks. 
Diffusion likely would be of benefit when the input data is noisy but neither datasets we used are noisy and we have not yet performed experiments with adding noise. We leave this investigation for future work.
\end{remark}

Third, we fix $\alpha=0.65$. 
We came to this value empirically; the networks performed best with $\alpha$-values in the range $0.6-0.7$.
Looking at Fig.~\ref{fig:soft_max_pooling} we can conjecture that $\alpha=0.65$ is the ``sweet spot'' between sharpness and smoothness. When the kernel is too sharp ($\alpha$ close to $\sfrac{1}{2}$) minor perturbations in the input can have large effects on the output, when the kernel is too smooth ($\alpha$ close to $1$) the output will be smoothed out too much as well.

Fourth, all our networks are simple feed-forward networks.

Finally, we use the ADAM optimizer \cite{kingma2014adam} together with $L^2$ regularization uniformly over all parameters with a factor of 0.005.

\subsection{DRIVE Retinal Vessel Segmentation}

The first experiment uses the DRIVE retinal vessel segmentation dataset \cite{staal2004ridge}. The object is the generate a binary mask indicating the location of blood vessels from a color image of a retina as illustrated in Fig.~\ref{fig:experiments}(a).

We test 6- and 12-layer variants of a CNN, a G-CNN and a CDE-PDE-CNN. The layout of the 6-layer networks is shown in Fig.~\ref{fig:drive_models}, the 12-layer networks simply add more convolution, group convolution or CDE layers. All the networks were trained on the same training data and tested on the same testing data.

The output of the network is passed through a sigmoid function to produce a 2D map $a$ of values in the range $[0,1]$ which we compare against the known segmentation map $b$ with values in $\left\{ 0,1 \right\}$. We use the continuous DICE coefficient as the loss function:
\begin{equation*}
    \mathrm{loss}(a,b) = 1 - \frac{2 \sum a  b + \varepsilon}{\sum a + \sum b + \varepsilon},
\end{equation*}
where the sum $\sum$ is over all the values in the 2D map. A relatively small $\varepsilon=1$ is used to avoid divide-by-zero issues and the $a \equiv b \equiv 0$ edge case.

The 6-layer networks were trained over 60 epochs, starting with a learning rate of 0.01 that we decay exponentially with a gamma of 0.95. The 12-layer networks were trained over 80 epochs, starting from the same learning rate but with a learning rate gamma of 0.96.

We measure the performance of the network by the DICE coefficient obtained on the 20 images of the testing dataset. We trained each model 10 times, the results of which are summarized in Tbl.~\ref{tbl:drive} and Fig.~\ref{fig:experiment_performance}(a). 

We achieve similar or better performance than CNNs or G-CNNs but with a vast reduction in parameters. Scaling from 6 to 12 layers even allows us to reduce the total number of parameters of the PDE-G-CNN while still increasing performance, this is achieved by reducing the number of channels (i.e. the width) of the network, see also Tbl.~\ref{tbl:params}.

\begin{figure}[ht]
    \centering
    \includegraphics[width=0.7\linewidth]{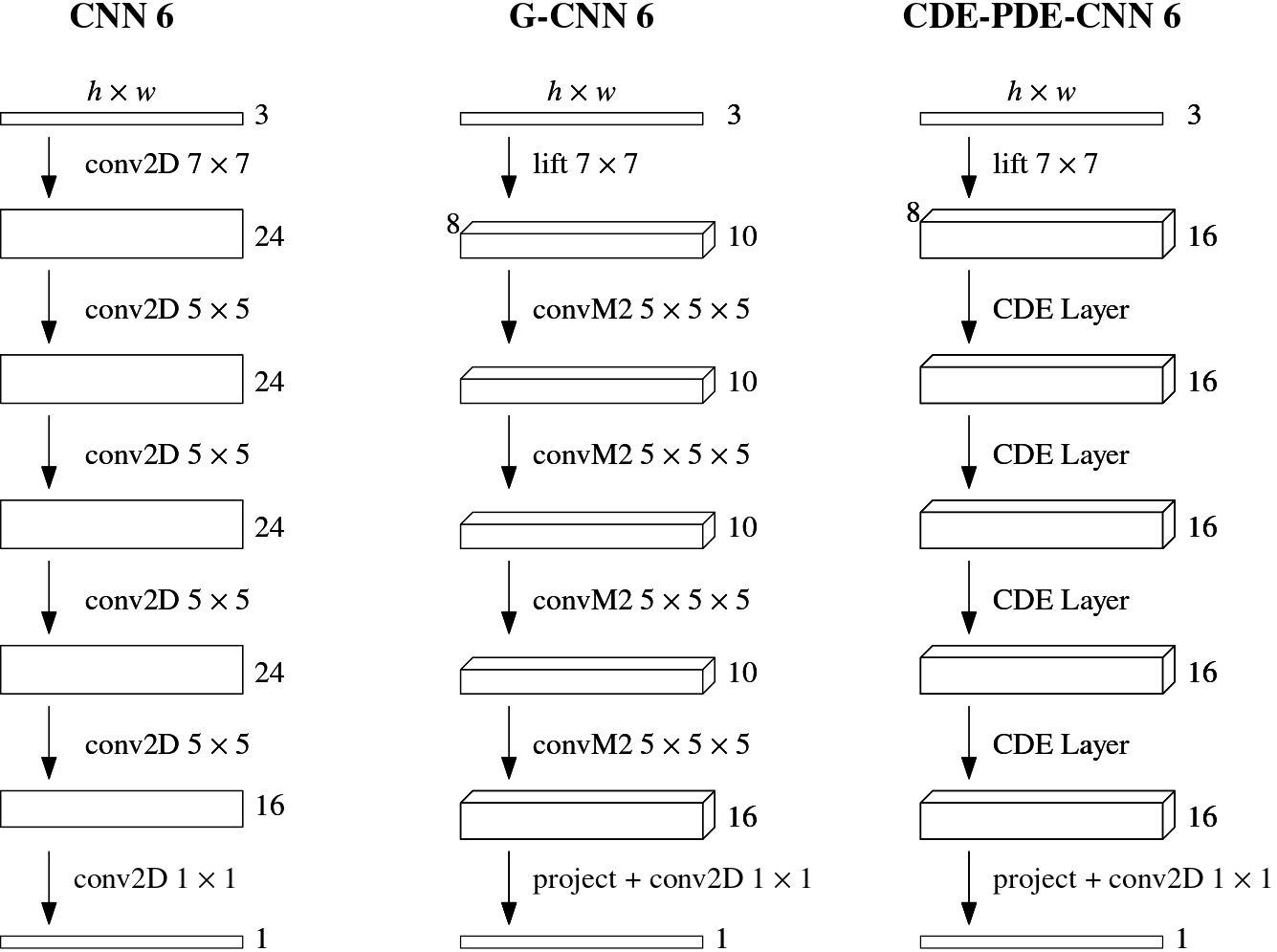}
    \caption{Schematic of the 6-layer models used on our segmentation experiments. Kernel sizes and number of feature channels in each layer are indicated, depth indicates that the data lives on $\mathbb{M}_2$. Omitted are activation functions, batch normalization, padding and dropout modules. The 12-layer models are essentially the same but with double the number of layers but with reduced number of channels per layer (i.e. reduced width) for the CDE-PDE-CNN (hence the reduction in parameters going from 6 to 12 layers).}
    \label{fig:drive_models}
\end{figure}

\begin{table}[ht]
    \centering
    \renewcommand{\arraystretch}{1.3}
    \begin{tabular}{@{}l r r@{}}
        Model &  Parameters &  DICE score $\pm$ std.dev.
        \\
        \hline
        CNN 6 & $47352$  & $0.8058 \pm 0.0017$
        \\
        G-CNN 6 & $39258$  & $0.8085 \pm 0.0022$
        \\
        CDE-PDE-CNN 6 & $4128$  & $0.8115 \pm 0.0018$
        \\[0.7em]
        CNN 12 & $129432$ & $0.8189 \pm 0.0005$
        \\
        G-CNN 12 & $114378$ & $0.8192\pm 0.0012$
        \\
        CDE-PDE-CNN 12 & $3678$ & $0.8220 \pm 0.0007$
    \end{tabular}
    \caption{Average DICE coefficient achieved on the 20 images of the testing dataset and the number of trainable parameters of each model. The G-CNNs and CDE-PDE-CNNs are roto-translation equivariant by construction. Note the vast reduction in parameters allowed by using PDE-based networks.}
    \label{tbl:drive}
\end{table}

\begin{table}[ht]
    \centering
    \renewcommand{\arraystretch}{1.3}
    \begin{tabular}{@{}l r r@{}}
        Type of parameter & CDE-PDE-CNN 6 & CDE-PDE-CNN 12
        \\
        \hline
        Lifting layer & $2352$  & $1470$
        \\
        Convection & $192$  & $300$
        \\
        Dilation & $192$  & $300$
        \\
        Erosion & $192$ & $300$
        \\
        Linear combinations & $1040$ & $1076$
        \\
        Batch normalization & $160$ & $232$
    \end{tabular}
    \caption{Allocation of parameters for the 6- and 12-layer CDE-PDE-CNNs used in the vessel segmentation experiment. The added depth of the networks allows us to shrink the width. With the network having less channels over all we can also shrink the number of channels in the lifting layer, which drastically reduces the total number of parameters.}
    \label{tbl:params}
\end{table}

\begin{figure}[ht]
    \centering
    \begin{subfigure}[t]{0.45\linewidth}
        \centering
        \includegraphics[width=0.9\linewidth]{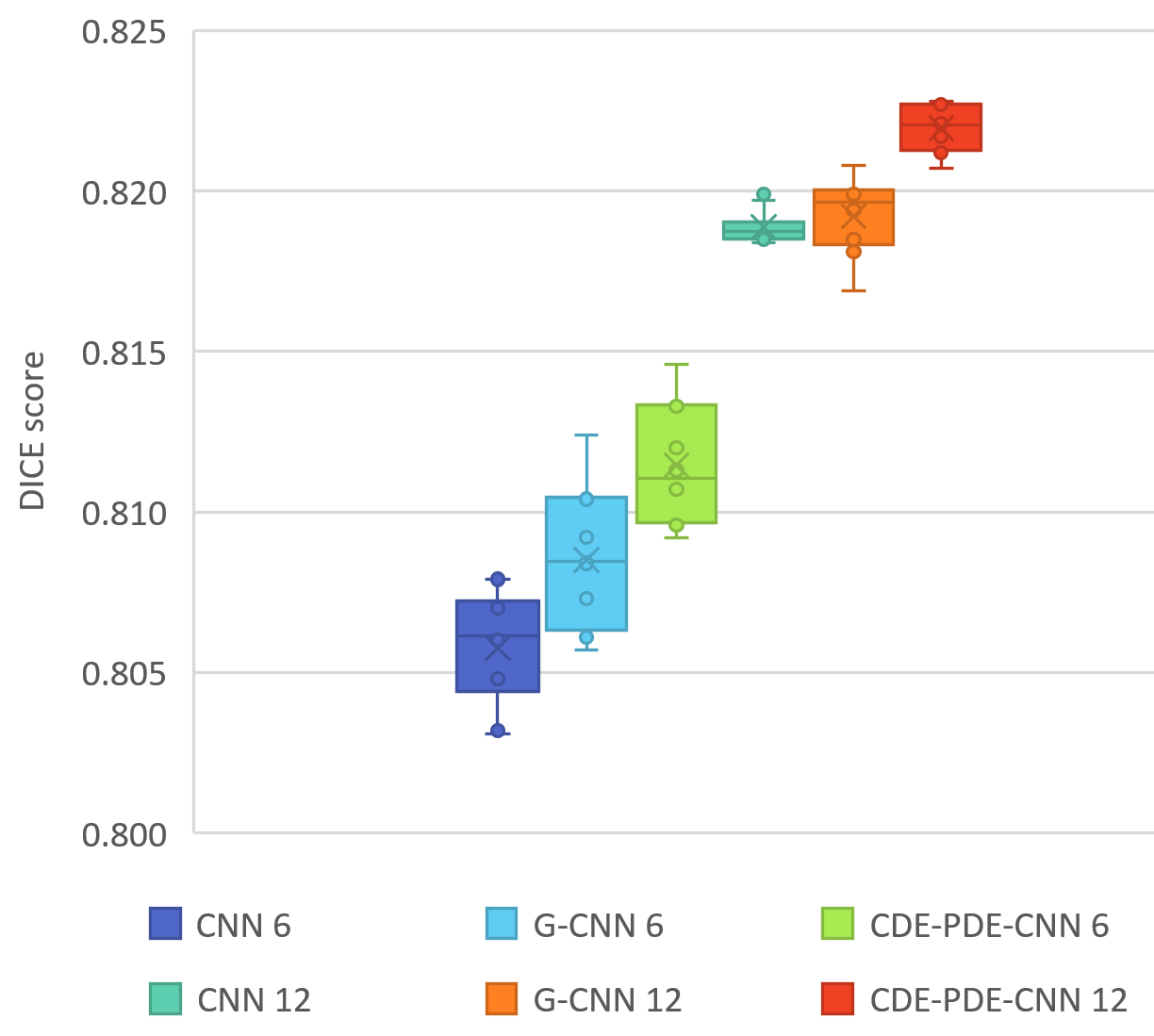}
        \caption{Performance on retinal vessel segmentation. We test 6- and 12-layer variants of conventional CNNs, G-CNNs and our PDE-CNNs, each network is trained 10 times, the chart shows the distribution of DICE performances on the test dataset.}
        \vspace*{1em}
    \end{subfigure}
    ~\hspace{5mm}~
    \begin{subfigure}[t]{0.45\linewidth}
        \centering
        \includegraphics[width=0.9\linewidth]{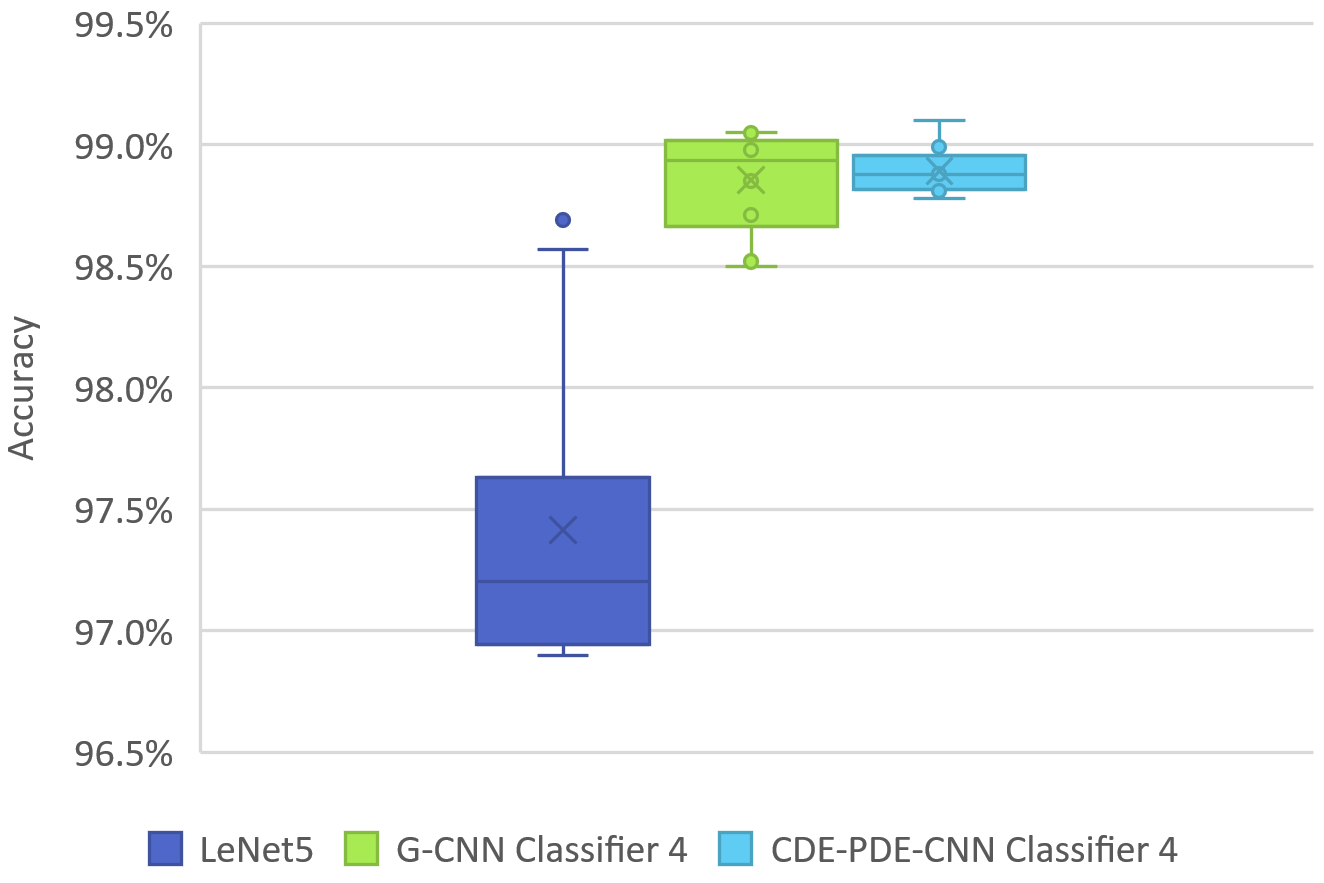}
        \caption{Performance of digit classification on the RotNIST dataset. We compare the classic 5-layer LeNet against a 4-layer G-CNN and PDE-CNN. LeNet was trained for 120 epochs, the other two for 60 epochs.}
    \end{subfigure}
    \caption{Comparison of PDE-based networks against conventional CNNs and group CNNs on segmentation and classification tasks. }
    \label{fig:experiment_performance}
\end{figure}

\subsection{RotNIST Digit Classification}

The second experiment we performed is the classic digit classification experiment. Instead of using the plain MNIST dataset we did the experiment on the RotNIST dataset \cite{Baweja2018}. RotNIST contains the same images as MNIST but rotated to various degrees. 
Even though classifying rotated digits is a fairly artificial problem we include this experiment to show that PDE-G-CNNs also work in a context very different from the first segmentation experiment.
While our choice of PDEs derives from more traditional image processing methods, this experiment shows their utility in a basic image classification context.

We tested three networks: the classic LeNet5 CNN \cite{lecun1989backpropagation} as a baseline, a 4-layer G-CNN and a 4-layer CDE-PDE-CNN. The architectures of these three networks are illustrated in Fig.~\ref{fig:rotnist_models}.

All three networks were trained on the same training data and tested on the same testing data. 
We train with a learning rate of 0.05 and a learning rate gamma of 0.96. 
We trained the LeNet5 model for 120 epochs and the G-CNN and CDE-PDE-CNN models for 60 epochs.

We measure the performance of the network by its accuracy on the testing dataset. 
We trained each model 10 times, the results of which are summarized in Tbl.~\ref{tbl:rotnist} and Fig.~\ref{fig:experiment_performance}(b). 

We manage to get better performance than classic or group CNNs with far fewer parameters.

\begin{figure}[ht]
    \centering
    \includegraphics[width=0.7\linewidth]{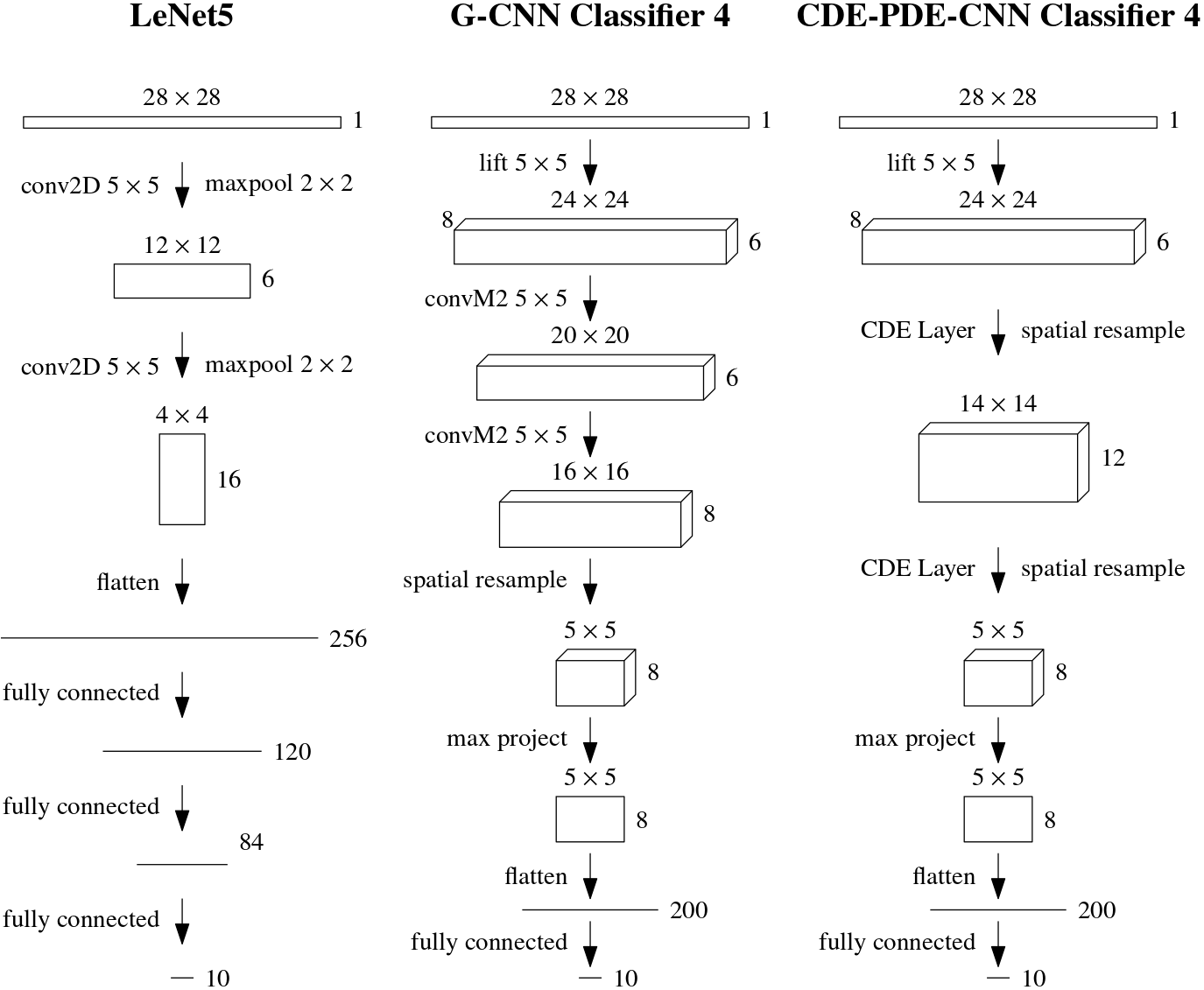}
    \caption{Schematic of the three models tested with the RotNIST data. Kernel sizes and number of feature channels in each layer are indicated. Omitted are activation functions, batch normalization and dropout modules.}
    \label{fig:rotnist_models}
\end{figure}

\begin{table}[ht]
    \centering
    \renewcommand{\arraystretch}{1.3}
    \begin{tabular}{@{}l r r@{}}
        Model &  Parameters &  Error rate $\pm$ std.dev.
        \\
        \hline
        CNN (LeNet5) & $44426$  & $2.59\% \pm 0.66\%$
        \\
        G-CNN Classifier 4 & $12700$  & $1.14\% \pm 0.21\%$
        \\
        CDE-PDE-CNN Classifier 4 & $2542$  & $1.10\% \pm 0.10\%$
    \end{tabular}
    \caption{Accuracy of the digit classification models on the testing dataset and number of parameters for each model.}
    \label{tbl:rotnist}
\end{table}

%%%
%%%
%%%
\subsection{Computational Performance}

Care was taken in optimizing the implementation to show that PDE-based networks can still achieve decent running times despite their higher computational complexity. In Tbl.~\ref{tab:speed} we summarized the inferencing performance of each model we experimented with.

Our approach simultaneously gives us equivariance, a decrease in parameters and higher performance but at the cost of an increase in flops and memory footprint.
While our implementation is reasonably optimized it has had far less development time dedicated to it than the traditional CNN implementation provided by PyTorch/cuDNN, so we are confident more performance gains can be found.

In comparison with G-CNNs our PDE-based networks are generally a little bit faster. Our G-CNN implementation is however less optimized compared to out PDE-G-CNN implementation. Were our G-CNN implementation equally optimized we expect G-CNNs to be slightly faster than the PDE-G-CNNs in our experiments.

\begin{table}[ht!]
    \centering
    \renewcommand{\arraystretch}{1.3}
    \begin{tabular}{@{}l r r r@{}}
        & CNN & G-CNN & PDE-CNN
        \\
        \hline
        DRIVE 6-layer &  1.7s & 6.5s & 6.8s
        \\
        DRIVE 12-layer & 2.2s & 14.1s & 9.8s
        \\
        RotNIST & 0.1s & 0.9s & 0.7s
    \end{tabular}
    \caption{Time in seconds it took to run each model on the testing dataset of its respective experiment. The DRIVE testing dataset contains 20 images while the RotNIST testing dataset contains 10000 digits.}
    \label{tab:speed}
\end{table}

%%%
%%%
%%%
%%%
%%%
%%%
%%%
\section{Conclusion}
\label{sec:concluding}

In this article we presented the general mathematical framework of geometric PDEs on homogeneous spaces that underlies our PDE-G-CNNs. PDE-G-CNNs allow for a geometric and probabilistic interpretation of CNNs opening up new avenues for the study and development of these types of networks. We showed that additionally, PDE-G-CNNs have increased performance with a reduction of parameters.

PDE-G-CNNs ensure equivariance by design. The trainable parameters are geometrically relevant: they are left-invariant vector and tensor fields. 

PDE-G-CNNs have three types of layers: convection, diffusion and erosion/dilation layers. We have shown that these layers implicitly include standard nonlinear operations in CNNs such as max pooling and ReLU activation.

To efficiently evaluate PDE evolution in the layers, we provided tangible analytical approximations to the relevant kernel operators on homogeneous spaces. In this article we have underpinned the quality of the approximations in Theorem~\ref{thm:alpha_kernel_squeeze} and Theorem~\ref{thm:erosion_dilation_solution}.

With two experiments we have verified that PDE-G-CNNs can improve performance over G-CNNs in the context of automatic vessel segmentation and digit classification.
Most importantly, the performance increase is achieved with a vast reduction in the amount of trainable parameters.

\section*{Acknowledgements}

We gratefully acknowledge the Dutch
Foundation of Science NWO for financial support {\small (Duits: Geometric
learning for Image Analysis, VI.C 202–031, Bekkers:
Context Aware AI, VI nr.17290).}

\renewcommand{\appendixname}{Appendix}
\begin{appendices}
%%%
%%% Appendix
%%%
\section{}

%%%
%%%
%%%
\subsection{Proof of Lemma \ref{lem:bounding-metric}}
\label{appendix:proof-of-bounding-lemma}

The left inequality follows directly from the observation that $\rho_{\mathcal{G}}(p)$ is exactly the Riemannian length of the curve 
\begin{equation*}
t \mapsto\exp_G(t \log_G(g_p)) p_0
\end{equation*}
for $t \in [0,1]$ and $g_p = \argmin_{g \in p} \left\Vert \log_G g \right\Vert_{\tilde{\mathcal{G}}}$. This continuous curve connects $p_0$ with $p$ and as such has a greater length than the minimal-length curve between those two points.

For the right inequality, consider the function $F:T_e G \to \mathbb{R}$ given by
\begin{equation*}
    F(v)
    :=
    d_{\mathcal{G}} (p_0, \pi \circ \exp_G (v))^2
    ,
\end{equation*}
where we recall that $\pi : G \to G/H$ was given by \eqref{eq:quotientmap}.
With the goal of making a Taylor expansion for this function we note that:
\begin{itemize}
    \item at the origin we have $F(0)=0$,
    \item due to the chain rule applied to the squaring we have $\d F \vert_0 = 0$.
\end{itemize}
Moreover, due to the to the $G$-invariance of $d_{\mathcal{G}}$, the function $F$ is even and consequently the 3rd order term of the Taylor expansion of $F$ is zero.

For the second order term, we are looking for the Hessian $\mathcal{H}$ of $F$ at $v=0$. We split $F$ into $F_1 := \pi \circ \exp_G$ and $F_2(p) := d_{\mathcal{G}}(p_0,p)^2$ and find the Hessian of the composed function is
\begin{equation*}
    \begin{split}
        &\mathcal{H} (F_2 \circ F_1) \vert_0 (v,w)
        \\
        &=
        \mathcal{H} F_2 \vert_{p_0}
        \left(
            \d F_1 \vert_0 (v)
            ,\,
            \d F_1 \vert_0 (w)
        \right)
        \\
        &=
        2\, \mathcal{G} \vert_{p_0}
        \left(
            \d\pi \vert_e \circ \d\exp_G \vert_0 (v)
            ,\,
            \d\pi \vert_e \circ \d\exp_G \vert_0 (w)
        \right)
        \\
        &=
        2\, \tilde{\mathcal{G}} \vert_e
        \left(
            \d\exp_G \vert_0 (v)
            ,\,
            \d\exp_G \vert_0 (w)
        \right)
        \\
        &=
        2\, \tilde{\mathcal{G}} \vert_e
        \left(
            v
            ,\,
            w
        \right)
        .
    \end{split}
\end{equation*}

Putting these facts together we find:
\begin{equation}
    \label{eq:taylorF}
    F(v) = \tilde{\mathcal{G}} \vert_e (v,v) + O (\| v \|^4)
    ,
\end{equation}
where  $\| \cdot \|$ denotes some arbitrary norm on $T_e G$.

Now we take a linear subspace $V$ of $T_e G$ that is independent from $T_e H$ but so that the span of $T_e H$ and $V$ equals the entire $T_e G$, so that $T_e H \oplus V = T_e G$.
Note that $\tilde{\mathcal{G}}\vert_e$ is only degenerated along $T_e H$, and so is a full norm when restricted to $V$, i.e. for all $v \in V$, $\tilde{\mathcal{G}}\vert_e(v,v) = 0$ only if $v=0$.
Therefore, there exists a $c>0$ such that for all $v \in V$,
\begin{equation*}
    \tilde{\mathcal{G}} \vert_e (v,v) > c \| v \|^2
    ,
\end{equation*}
and so by \eqref{eq:taylorF} we have:
\begin{equation}
    \label{eq:F(v)}
    F(v)
    =
    d_{\mathcal{G}} (p_0, \pi \circ \exp_G (v))^2
    >
    c \| v \|^2 + O \left( \| v \|^4 \right)
    .
\end{equation}
Hence, for all $v \in V$ close enough to 0 we have:
\begin{equation}
    \label{eq:d-norm}
    d_{\mathcal{G}} (p_0, \pi \circ \exp_G (v))^2
    >
    \frac{c}{2} \| v \|^2
    .
\end{equation}

In a neighborhood of the origin the Lie group exponential map $\exp_G:T_e G \to G$ is a diffeomorphism to a neighborhood of $e$, at the same time $\pi : G \to G/H$ is a smooth submersion by the Homogeneous Space Construction Theorem \cite[Thm 21.17]{lee2013smooth}.
Consequently $\d (\pi \circ \exp_G)\vert_0:V \to T_{p_0}(G/H)$ has full rank since $\pi \circ \exp_G$ is a local diffeomorphism between two spaces ($V$ and $G/H$) with the same dimension, it follows by the inverse function theorem that there exists a neighborhood $V_0$ of $0$ in $V$ and a neighborhood $P_0$ of $p_0$ in $G/H$ such that $\pi \circ \exp_G$ is a diffeomorphism from $V_0$ to $P_0$.
By possibly choosing $V_0$ smaller, we may assume by \eqref{eq:F(v)} that there exists a $C'>0$ such that for all $v \in V_0$:
\begin{equation*}
    \tilde{\mathcal{G}} \vert_e (v,v)
    \leq
    F(v)
    + C' \, \| v \|^4
    ,
\end{equation*}
which, by using \eqref{eq:d-norm} yields
\begin{equation*}
    \tilde{\mathcal{G}} \vert_e (v,v)
    \leq
    d_{\mathcal{G}} (p_0, \pi \circ \exp_G (v))^2
    +
    C
    d_{\mathcal{G}} (p_0, \pi \circ \exp_G (v))^4
    ,
\end{equation*}
for all $v \in V_0$ and $C=C'\frac{c}{2}>0$.

Now take a $p \in P_0$, then there exists a $w \in V_0$ so that
\begin{equation*}
    \pi \circ \exp_G w = p.
\end{equation*}
Call $g_p = \exp_G w$, then the previous inequality gives
\begin{equation*}
    \left\| \log_G g_p \right\|_{\tilde{\mathcal{G}}}^2
    \leq
    d_{\mathcal{G}}(p,p_0)^2
    +
    C
    d_{\mathcal{G}} (p,p_0)^4\
    .
\end{equation*}
Clearly $g_p \in p$. 
Since $\rho_{\mathcal{G}}(p)$ is the infinum of $\| \log_G g \|$ for all $g \in p$ it follows that $\rho_{\mathcal{G}}(p)$ must also satisfy:
\begin{equation*}
    \rho_{\mathcal{G}}(p)^2
    \leq
    \left\| \log_G g_p \right\|_{\tilde{\mathcal{G}}}^2
    \leq
    d_{\mathcal{G}}(p,p_0)^2
    +
    C
    d_{\mathcal{G}} (p,p_0)^4
    ,
\end{equation*}
for all $p \in P_0$, i.e. all $p$ sufficiently close to $p_0$.

As a corollary we get that for any compact neighborhood $K \subset G/H$ of $p_0$
\begin{equation*}
    \rho_{\mathcal{G}}(p)
    \leq
    C_{\mathrm{metr}} \, d_{\mathcal{G}}(p_0,p)
\end{equation*}
for all $p \in K$.
We can see this by choosing $C_1^2 = 1+C\,\sup_{p \in P_)} d_{\mathcal{G}}(p_0,p)^2$, then for all 
by choosing $p \in P_0$ we have
\begin{equation*}
    \rho_{\mathcal{G}}(p)
    \leq
    C_1 d_{\mathcal{G}}(p_0,p).
\end{equation*}
Let $K \subset G/H$ be compact so that it contains $P_0$. 
Then on $\overline{K \setminus P_0}$ we have that both $\rho_{\mathcal{G}}$ and $d_{\mathcal{G}}(p_0,\cdot)$ are strictly positive, continuous and so bounded functions.
Consequently
\begin{equation*}
    \rho_{\mathcal{G}}(p)
    \leq 
    \sup_{p \in K \setminus P_0} \rho_{\mathcal{G}}(p)
    =:
    M
    <
    \infty,
\end{equation*}
and
\begin{equation*}
    d_{\mathcal{G}}(p_0,p)
    \geq
    \sup_{p \in P_0} d_{\mathcal{G}}(p_0,p)
    =: m > 0,
\end{equation*}
for all $p \in K\setminus P_0$.
Which leads to
\begin{equation*}
    \begin{split}
    \rho_{\mathcal{G}}(p)
    \leq 
    M
    \leq
    \frac{M}{d_{\mathcal{G}}(p_0,p)} d_{\mathcal{G}}(p_0,p)
    \leq
    \frac{M}{m}
    d_{\mathcal{G}}(p_0,p)
    .
    \end{split}
\end{equation*}
for all $p \in K \setminus P_0$.
Now choose $C_{\textrm{metr}} = \max\{ C_1, C_2 \}$ then we obtain the corollary.
Remark that $C_{\textrm{metr}}$ depends on both the parameters of the metric tensor field and the choice of $K$, and so may become very large indeed.

\ \qed

%%%
%%%
%%%
\subsection{Proof of Proposition \ref{prop:compatibility}}
\label{appendix:proof-of-compatibility}

As a preliminary we prove the following lemma.

\textbf{Lemma}
\noindent
For all $g \in G$ let $L_g: G \to G$ be the left group multiplication given by $L_g h = g h$ and let $R_g: G \to G$ be the right group multiplication given by $R_g h = h g$.
Let $H$ be a closed subgroup of $G$ with the projection map $\pi:G \to G/H$ given by $\pi(g)= g H$.

Then for all $h \in H$ we have the following relations for the push forwards:
\begin{enumerate}
    \item $\pi_* \circ \left( R_h \right)_* = \pi_*$,
    \item $\left( L_h \right)_* \circ \pi_* = \pi_* \circ \left( L_h \right)_*$.
\end{enumerate}

\begin{proof}
    \ 
    \begin{enumerate}
        \item $\pi \circ R_h = \pi$ since $g h H = g H$,
        \item $L_h \circ \pi = \pi \circ L_h$ since $h ( g H)= (h g) H$.
    \end{enumerate}
    \qed
\end{proof}

Now for the proof of Proposition~\ref{prop:compatibility}.
Consider the set of all exponential curves in the group whose action connects $p_0 \in G/H$ to $p \in G/H$:
\begin{equation*}
    \begin{split}
    \Gamma_{p_0,p}
    =
    \Big\{
        & \gamma \in \mathrm{Lip}([0,1],G)
        \ \Big\vert \ 
        \\
        & \gamma(0)=e
        ,\ 
        \gamma(1) p_0 = p
        ,\ 
        \gamma(t+s)=\gamma(t) \gamma(s)
    \Big\}
    .
    \end{split}
\end{equation*}
We can then restate $\rho_{\mathcal{G}}$ equivalently in terms of these curves as
\begin{equation*}
    \rho_{\mathcal{G}}(p)
    :=
    \inf_{g \in P}
    \| \log_G g \|_{\tilde{\mathcal{G}}}
    =
    \inf_{\gamma \in \Gamma_{p_0,p}}
    \| \dot{\gamma}(0) \|_{\tilde{\mathcal{G}}}
\end{equation*}
since for each $g \in p$ we have an exponential curve $t \mapsto \exp_G(t \, \log_G g)$ in $\Gamma_{p_0,p}$ and for each exponential curve $\gamma$ in $\Gamma_{p_0,p}$ we have $\gamma(1) \in p$.

Let $\gamma \in \Gamma_{p_0,p}$ and let $h \in H$ then
\begin{enumerate}
    \item $h \gamma(0) h^{-1} = h e h^{-1} =e$,
    \item $h \gamma(1) h^{-1} p_0 =h \gamma(1) p_0 = h p$,
    \item $h \gamma(a+b) h^{-1} = h \gamma(a) \gamma(b) h^{-1}=(h \gamma(a) h^{-1})(h \gamma(b) h^{-1})$.
\end{enumerate}
From which we conclude that $h \gamma(\cdot) h^{-1} \in \Gamma_{p_0,h p}$ and so there is a bijection between $\Gamma_{p_0,p}$ and $\Gamma_{p_0,h p}$ given by
\begin{equation*}
    \Gamma_{p_0, h  p}
    =
    h \Gamma_{p_0,p} h^{-1}
    .
\end{equation*}

Moreover, the bijection preserves the seminorm due to the G-invariance of $\mathcal{G}$:
\begin{align*}
    \left\|
        \left( h \gamma(\cdot) h^{-1} \right)(0)
    \right\|_{\tilde{\mathcal{G}}}
    &=
    \left\|
        \left(L_h\right)_*
        \left(R_{h^{-1}}\right)_*
        \dot{\gamma}(0)
    \right\|_{\tilde{\mathcal{G}}}
    \\
    &=
    \left\|
        \pi_*
        \left(L_h\right)_*
        \left(R_{h^{-1}}\right)_*
        \dot{\gamma}(0)
    \right\|_{\mathcal{G}}
\intertext{(using the previous lemma)}
    &=
    \left\|
        \left(L_h\right)_*
        \pi_*
        \left(R_{h^{-1}}\right)_*
        \dot{\gamma}(0)
    \right\|_{\mathcal{G}}
    \\
    &=
    \left\|
        \left(L_h\right)_*
        \pi_*
        \dot{\gamma}(0)
    \right\|_{\mathcal{G}}
\intertext{(using the G-invariance of $\mathcal{G}$)}
    &=
    \left\|
        \pi_*
        \dot{\gamma}(0)
    \right\|_{\mathcal{G}}
    \\
    &=
    \left\|
        \dot{\gamma}(0)
    \right\|_{\tilde{\mathcal{G}}}
    .
\end{align*}

It follows that
\begin{align*}
    \rho_{\mathcal{G}}(p)
    &=
    \inf_{\gamma \in \Gamma_{p_0,p}}
    \| \dot{\gamma}(0) \|_{\tilde{\mathcal{G}}}
    \\
    &=
    \inf_{\gamma \in \Gamma_{p_0,p}}
    \| h \dot{\gamma}(0) h^{-1} \|_{\tilde{\mathcal{G}}}
    \\
    &=
    \inf_{\gamma \in \Gamma_{p_0,h p}}
    \| \dot{\gamma}(0) \|_{\tilde{\mathcal{G}}}
    \\
    &=
    \rho_{\mathcal{G}}(h p).
\end{align*}

\ \qed
\end{appendices}

\printbibliography

\end{document}